\definecolor{mydarkblue}{rgb}{0,0.08,0.45}
\theoremstyle{plain}
\newtheorem{theorem}{Theorem}[section]
\newtheorem{proposition}[theorem]{Proposition}
\newtheorem{lemma}[theorem]{Lemma}
\newtheorem{corollary}[theorem]{Corollary}
\theoremstyle{definition}
\newtheorem{definition}[theorem]{Definition}
\newtheorem{assumption}[theorem]{Assumption}
\theoremstyle{remark}
\newtheorem{remark}[theorem]{Remark}
\crefname{assumption}{Assumption}{Assumptions}
\newcommand{\E}{\mathbb E}
\newcommand{\R}{\mathbb R}
\newcommand{\w}{\mathbf{w}}
\newcommand{\W}{\mathcal{W}}
\newcommand{\OO}{\mathcal{O}}
\newcommand{\q}{\mathbf{q}}
\newcommand{\z}{\mathbf{z}}
\newcommand{\Z}{\mathcal{Z}}
\newcommand{\g}{\mathbf{g}}
\newcommand{\x}{\mathbf{x}}
\newcommand{\y}{\mathbf{y}}
\newcommand{\argmin}{\mathop{\arg\min}}
\newcommand{\norm}[1]{\left\Vert#1\right\Vert}
\newcommand{\sumt}{\sum_{t=1}^T}
\newcommand{\summ}{\sum_{i=1}^m}
\newcommand{\sumn}{\sum_{j=1}^{n_i}}
\newcommand{\sums}{\sum_{s=0}^{S-1}}
\newcommand{\sumks}{\sum_{k=1}^{K_s}}
\newcommand{\sumkz}{\sum_{k=0}^{K-1}}
\newcommand{\sumkzs}{\sum_{k=0}^{K_s-1}}
\newcommand{\sqbrac}[1]{\left[#1\right]}
\newcommand{\brac}[1]{\left(#1\right)}
\newcommand{\cbrac}[1]{\left\{#1\right\}}
\newcommand{\maxz}{\max_{\z\in\Z}}
\icmltitlerunning{Efficient Algorithms for Empirical Group Distributionally Robust Optimization and Beyond}
\begin{document}

\twocolumn[
\icmltitle{Efficient Algorithms for Empirical\\ Group Distributionally Robust Optimization and Beyond}





\begin{icmlauthorlist}
\icmlauthor{Dingzhi Yu}{nju,fd}
\icmlauthor{Yunuo Cai}{fd}
\icmlauthor{Wei Jiang}{nju}
\icmlauthor{Lijun Zhang}{nju,pzlab}
\end{icmlauthorlist}



\icmlaffiliation{fd}{School of Data Science, Fudan University, Shanghai, China}
\icmlaffiliation{nju}{National Key Laboratory for Novel Software Technology, Nanjing University, Nanjing, China}
\icmlaffiliation{pzlab}{Pazhou Laboratory (Huangpu), Guangzhou, China}

\icmlcorrespondingauthor{Lijun Zhang}{zhanglj@lamda.nju.edu.cn}

\icmlkeywords{Group Distributionally Robust Optimization, Minimax Excess Risk Optimization}

\vskip 0.3in
]



\printAffiliationsAndNotice{}  

\begin{abstract}
In this paper, we investigate the empirical counterpart of Group Distributionally Robust Optimization (GDRO), which aims to minimize the maximal empirical risk across $m$ distinct groups. We formulate empirical GDRO as a \textit{two-level} finite-sum convex-concave minimax optimization problem and develop an algorithm called ALEG to benefit from its special structure. ALEG is a double-looped stochastic primal-dual algorithm that incorporates variance reduction techniques into a modified mirror prox routine. To exploit the two-level finite-sum structure, we propose a simple group sampling strategy to construct the stochastic gradient with a smaller Lipschitz constant and then perform variance reduction for all groups. Theoretical analysis shows that ALEG achieves $\varepsilon$-accuracy within a computation complexity of $\mathcal{O}\left(\frac{m\sqrt{\bar{n}\ln{m}}}{\varepsilon}\right)$, where $\bar n$ is the average number of samples among $m$ groups. Notably, our approach outperforms the state-of-the-art method by a factor of $\sqrt{m}$. Based on ALEG, we further develop a two-stage optimization algorithm called ALEM to deal with the empirical Minimax Excess Risk Optimization (MERO) problem. The computation complexity of ALEM nearly matches that of ALEG, surpassing the rates of existing methods.
\end{abstract}

\section{Introduction}

Recently, a popular class of Distributionally Robust Optimization (DRO) problem named as Group DRO (GDRO), has drawn significant attention in machine learning~\citep{oren-etal-2019-distributionally,mohri2019agnostic,carmon2022distributionally,zhang2023stochastic,mehta2024distributionally}. GDRO optimizes the maximal risk over a group of $m$ distributions, which can be formulated as the following stochastic minimax optimization problem:
\begin{equation}
    \min_{\w\in\W}\max_{i\in[m]}\ \E_{\xi\sim\mathcal{P}_i}\sqbrac{\ell(\w;\xi)},\label{eq:gdro}
\end{equation}
where $\{\mathcal{P}_i\}_{i\in[m]}$ is a group of distributions and $\ell(\w;\xi)$ is the loss function, measuring the predictive error of the model $\w\in\W$ for a sample $\xi$. Similar to Empirical Risk Minimization (ERM)~\citep{shalev2014understanding}, which replaces the risk minimization problem over a distribution by its empirical counterpart, this paper focuses on the empirical variant of problem~\eqref{eq:gdro}:
\begin{equation}
    \min_{\w\in\W}\max_{i\in[m]}\ \left\{R_i(\w):=\frac{1}{n_i}\sumn\ell(\w;\xi_{ij})\right\},
    \label{eq:empirical-gdro}
\end{equation}
where $\cbrac{R_i(\cdot)}_{i\in[m]}$ are empirical risk functions for each group, $n_i$ is the number of samples for group $i$, and $\xi_{ij}$ is the $j$-th sample in group $i$. We further denote $\bar n:=\frac{1}{m}\summ n_i$ as the average number of samples for $m$ groups. Many popular machine learning tasks can be mathematically modeled by (empirical) GDRO, including federated learning~\citep{mohri2019agnostic,laguel2021superquantile,laguel2021superquantiles}, robust language modeling~\citep{oren-etal-2019-distributionally}, robust neural network training~\citep{sagawa2019distributionally}, and collaborative PAC learning~\citep{NIPS2017_186a157b,NEURIPS2018_3569df15,rothblum2021multi}. 
 
To leverage the powerful first-order optimization methods, we transform~\eqref{eq:empirical-gdro} into an equivalent finite-sum convex-concave saddle-point problem~\citep{nemirovski2009robust}:
\begin{equation}
\min_{\w\in\W}\max_{\q\in\Delta_m}\left\{F(\w,\q):=\summ\q_iR_i(\w)\right\},
    \label{eq:empirical-gdro-wq}
\end{equation}
where $\Delta_m=\{\q\ge\mathbf{0},\mathbf{1}^T\q=1|\q\in\R^m\}$ is the $(m-1)$-dimensional simplex, and $\cbrac{R_i(\cdot)}_{i\in[m]}$ are assumed to be convex. 

The existing vast amount of work on general convex-concave optimization can be applied to~\eqref{eq:empirical-gdro-wq}. For instance, Stochastic Mirror Descent (SMD)~\citep{nemirovski2009robust}, originally targeting problem~\eqref{eq:gdro}, can be utilized for~\eqref{eq:empirical-gdro-wq} with a complexity of $\OO\brac{\frac{m\ln{m}}{\varepsilon^2}}$. However, the quadratic dependency on $\varepsilon$ prevents us from getting a high-accuracy solution. 
A recent work of~\citet{alacaoglu2022stochastic} proposes Mirror Prox with Variance Reduction (MPVR) to solve the general finite-sum convex-concave optimization problem. Running MPVR on~\eqref{eq:empirical-gdro-wq}, we get an $\OO\brac{m\bar n+\frac{m\sqrt{m\bar n\ln{m}}}{\varepsilon}}$ complexity, but this rate is suboptimal for~\eqref{eq:empirical-gdro-wq}, as shown below.

Inspired by MPVR, this paper goes go one step further by exploiting the \textit{two-level} structure of~\eqref{eq:empirical-gdro-wq} to derive a better $\OO\brac{\frac{m\sqrt{\bar n\ln{m}}}{\varepsilon}}$ complexity. We call~\eqref{eq:empirical-gdro-wq} ``two-level finite-sum'' because $F(\w,\q)$ can be decomposed into two nested finite summations. Similar to MPVR, we also incorporate variance reduction into the stochastic mirror prox algorithm, but with a novel way to construct the stochastic gradients. Specifically, we develop a simple yet effective group sampling technique, which uses $m$ samples, one for each group, to construct the stochastic gradients, and in this way, the Lipschitz constant is reduced by a factor of $1/m$. Then, we employ variance reduction techniques within each group to attain faster convergence. Our algorithm, named as ALEG, improves the state-of-the-art computation complexity by $\sqrt{m}$. Another advantage of ALEG is its support for changeable hyperparameters, achieved by leveraging the one-index-shifted weighted average to compute (mirror) snapshot points and Lyapunov functions. This enhances ALEG's practical effectiveness, as variable learning rates have proven beneficial in real-world machine learning tasks~\citep{liner2021improving}.

Furthermore, we extend our methodology to solve the empirical Minimax Excess Risk Optimization (MERO) problem~\citep{agarwal2022minimax} given below:
\begin{equation}
\min_{\w\in\W}\max_{i\in[m]}\cbrac{\underline{R}_i(\w):=R_i(\w)-R_i^*},\label{eq:empirical-mero}
\end{equation}' 
where we define $R_i^*:=\min_{\w\in\W}R_i(\w)$ as the minimal empirical risk of group $i$, usually unknown. Compared with empirical GDRO, empirical MERO replaces the raw empirical risk $R_i(\w)$ with the \textit{excess empirical risk} $\underline{R}_i(\w):=R_i(\w)-R_i^*$ to cope with heterogeneous noise. We provide an efficient two-stage \textbf{Al}gorithm for \textbf{E}mpirical \textbf{M}ERO (ALEM) by following the two-stage schema~\citep{zhang2023efficient}, and making use of our empirical GDRO algorithm. In the first stage, $R_i^*$ is estimated by an approximate minimal empirical risk $\hat R_i^*$ through running ALEG for $m$ groups. In the second stage, we approximate the original problem by replacing the minimal empirical risks $\cbrac{R_i^*}_{i\in[m]}$ with the estimated values $\{\hat{R}_i^*\}_{i\in[m]}$, and optimize the new problem by ALEG. We demonstrate that the computation complexity of ALEM is $\tilde{\OO}\brac{\frac{m\sqrt{\bar n\ln{m}}}{\varepsilon}}$\footnote{We use the $\tilde\OO$ notation to hide constant factors as well as logarithmic factors in $\varepsilon$.}, which nearly matches that of the empirical GDRO problem.

\section{Related Work}

\begin{table*}[t]
\renewcommand\arraystretch{2}
    \begin{center}
    \begin{threeparttable}
    \caption{Comparisons of computation complexities for empirical GDRO.\tnote{*}}
        \begin{tabular}{cc}
\toprule
            \textbf{Algorithm} & \textbf{Computation Complexity} \\
\midrule
            SMD~\citep{nemirovski2009robust}  & $\OO\brac{\frac{m\ln m}{\varepsilon^2}}$\\
            AL-SVRE~\citep{luo2021near}& $\OO\brac{\brac{m\bar n+ \frac{m\sqrt{m\bar n\ln m}}{\varepsilon}  +m^{5/4}\bar n^{3/4}\sqrt{\frac{\ln m}{\varepsilon}}}\ln{\frac{m}{\varepsilon}}}$\\
            BROO-KX~\citep{carmon2022distributionally}& $\OO\brac{\frac{m\bar n}{\varepsilon^{2/3}}\ln^{14/3}(\frac{\ln m}{\varepsilon})+\frac{(m\bar n)^{3/4}}{\varepsilon}\ln^{7/2}(\frac{\ln m}{\varepsilon})}$\tnote{**}\\
            MPVR~\citep{alacaoglu2022stochastic} & $\OO\brac{m\bar n+\frac{m\sqrt{m\bar n\ln{m}}}{\varepsilon}}$\\
            \textbf{ALEG}~(\cref{thm:gdro-convergence,cor:gdro-complexity}) & $\OO\brac{\frac{m\sqrt{\bar n\ln{m}}}{\varepsilon}}$\\
\bottomrule
        \end{tabular}        
        \begin{tablenotes}
        \footnotesize
        \item[*] The results are transformed in our formulation in terms of $m,\bar n$.
        \item[**] BROO-KX relies on expensive bisection operations, which is not reflected in this complexity measure.
      \end{tablenotes}\label{tab:complexity-comparison}
    \end{threeparttable}
    \end{center}
\end{table*}

In this section, we review and compare some previous work that can be used to solve the empirical GDRO problem in~\eqref{eq:empirical-gdro-wq} and the empirical MERO problem in~\eqref{eq:empirical-mero-wq}. 

\subsection{Group Distributionally Robust Optimization}

For the original GDRO,~\citet{nemirovski2009robust} propose a stochastic approximation (SA)~\citep{robbins1951stochastic} approach named Stochastic Mirror Descent (SMD) and obtain a sample complexity of $\OO\brac{\frac{m\ln{m}}{\varepsilon^2}}$, which nearly matches the lower bound $\Omega\brac{\frac{m}{\varepsilon^2}}$~\citep{soma2022optimal} up to a logarithmic factor.~\citet{soma2022optimal} cast problem~\eqref{eq:gdro} as a two-player zero-sum game to utilize SMD and no-regret algorithms from multi-armed bandits (MAB). Recently,~\citet{zhang2023stochastic} refine their analysis by borrowing techniques from non-oblivious MAB~\citep{NIPS2015neu} to establish a sample complexity of $\OO\brac{\frac{m\ln{m}}{\varepsilon^2}}$.
 
Although SA approaches primarily aim at solving~\eqref{eq:gdro}, they can also be directly applied to empirical GDRO. Running SMD or non-oblivious online learning algorithm on~\eqref{eq:empirical-gdro-wq} yields a computation complexity of $\OO\brac{\frac{m\ln{m}}{\varepsilon^2}}$, which serves as the baseline of the problem. Although the complexity does not depend on the average number of samples $\bar n$, it suffers from a quadratic dependency on $\varepsilon$.

\subsection{Empirical GDRO and Empirical MERO}

\citet{carmon2022distributionally} optimize~\eqref{eq:empirical-gdro-wq} directly by leveraging exponentiated group-softmax and the Nesterov smoothing~\citep{nesterov2005smooth} technique. Then they run a Ball Regularized Optimization Oracle~\citep{carmon2020acceleration,carmon2021thinking} on Katyusha~X~\citep{pmlr-v80-allen-zhu18a} (BROO-KX) to get a complexity of $\OO\brac{\frac{m\bar n}{\varepsilon^{2/3}}\ln^{14/3}(\frac{\ln m}{\varepsilon})+\frac{(m\bar n)^{3/4}}{\varepsilon}\ln^{7/2}(\frac{\ln m}{\varepsilon})}$. We note that their complexity is measured by the number of oracle queries, i.e., $\ell(\cdot;\xi_{ij})$ and $\nabla\ell(\cdot;\xi_{ij})$ evaluations, instead of total computations. In fact, the BROO-KX algorithm includes expensive bisections, which are computationally expensive.

As an extension of the empirical GDRO problem,~\citet{agarwal2022minimax} study the empirical MERO problem. They cast~\eqref{eq:empirical-mero} as a two-player zero-sum game:\begin{equation}
    \min_{\w\in\W}\max_{\q\in\Delta_m}\left\{\underline{F}(\w,\q):=\summ\q_i\underline{R}_i(\w)\right\},
    \label{eq:empirical-mero-wq}
\end{equation}
where $\underline{R}_i(\w):=R_i(\w)-R_i^*$ is the excess empirical risk. 
In each iteration, the maximizing player calls the exponentiated gradient algorithm~\citep{kivinen1997exponentiated} to update $\q$ while the minimizing player calls an expensive ERM oracle to minimize $\underline{F}(\cdot,\q_t)$ for the current $\q_t$. Under the condition that the ERM oracle is perfect, their algorithm converges at a rate of $\OO\brac{\sqrt{\frac{\ln\brac{m\bar n}}{T}}}$. For simplicity, we assume the cost of the ERM oracle is proportional to the number of samples $m\bar n$. This leads to a computation complexity of $\OO\brac{\frac{m\bar n\ln(m\bar n)}{\varepsilon^2}}$ in total.

\subsection{Finite-Sum Convex-Concave Optimization}
The general finite-sum convex-concave optimization problem is given by
\begin{equation}
\min_{\x\in\mathcal{X}}\max_{\y\in\mathcal{Y}}\cbrac{G(\x,\y):=\sum_{i=1}^n G_i(\x,\y)},\label{eq:1-level-cc}
\end{equation}
where $G(\x,\y)$ is convex w.r.t.~$\x$ and concave w.r.t.~$\y$. Comparing with~\eqref{eq:empirical-gdro-wq}, we call~\eqref{eq:1-level-cc} ``\textit{one-level} finite-sum'' because the objective $G(\x,\y)$ is written as a single finite summation. Existing algorithms for this problem can also be applied to~\eqref{eq:empirical-gdro-wq}, but they suffer from a suboptimal complexity due to neglecting the special two-level finite-sum structure of $F(\w,\q)$. We review two representative works in the sequel.

\citet{alacaoglu2022stochastic} develop the Mirror Prox with Variance Reduction (MPVR) algorithm. To incorporate variance reduction, they modify the classic mirror prox by replacing a stochastic gradient at the last iterated point with a full gradient at the current snapshot. Moreover, they use ``negative momentum''~\citep{driggs2022accelerating} in the dual space to further accelerate the algorithm. MPVR can be applied to the empirical GDRO problem with its uniform sampling or importance sampling technique. However, both of them fail to capture the intrinsic two-level finite-sum structure in~\eqref{eq:empirical-gdro-wq} and therefore suffer an additional $\sqrt{m}$ factor in the total complexity of $\OO\brac{m\bar n+\frac{m\sqrt{m\bar n\ln{m}}}{\varepsilon}}$.

Based on MPVR,~\citet{luo2021near} use Accelerated Loopless Stochastic Variance-Reduced Extragradient (AL-SVRE) to tackle with the one-level finite-sum convex-concave problems. AL-SVRE uses MPVR as the subproblem solver and then conducts a catalyst acceleration scheme~\citep{lin2015catalyst}. Applying it to the empirical GDRO problem gives a complexity of $\OO\brac{\brac{m\bar n+ \frac{m\sqrt{m\bar n\ln m}}{\varepsilon}+m^{5/4}\bar n^{3/4}\sqrt{\frac{\ln m}{\varepsilon}}}\ln{\frac{m}{\varepsilon}}}$, which is also worse than our result by $\sqrt{m}$. 

\subsection{Complexity Comparisons}

We summarize the existing results for empirical GDRO in~\cref{tab:complexity-comparison}. Under the circumstance of moderately high accuracy $\varepsilon\le\OO\brac{\sqrt{\frac{\ln m}{\bar n}}}$, our $\OO\brac{\frac{m\sqrt{\bar n\ln{m}}}{\varepsilon}}$ complexity beats the baseline $\OO\brac{\frac{m\ln m}{\varepsilon^2}}$ of SMD. AL-SVRE has a complexity whose dominating term $\OO\brac{\frac{m\sqrt{m\bar n\ln{m}}}{\varepsilon}\ln{\frac{m}{\varepsilon}}}$ is worse than us by a factor of $\sqrt{m}\ln{\frac{m}{\varepsilon}}$. 
BROO-KX provides an oracle complexity with a leading term of $\OO\brac{\frac{(m\bar{n})^{3/4}}{\varepsilon}\ln^{7/2}(\frac{\ln m}{\varepsilon})}$, which is less favorable than our approach under the common situations where $m\le\OO(\bar{n})$.
MPVR exhibits a complexity of $\OO\brac{m\bar{n}+\frac{m\sqrt{m\bar{n}\ln{m}}}{\varepsilon}}$, which remains inferior to our approach by a factor of $\sqrt{m}$. 



We also briefly discuss the empirical MERO algorithms.~\citet{agarwal2022minimax} use Empirical Risk Minimization with Exponential Gradient algorithm (ERMEG) to optimize~\eqref{eq:empirical-mero-wq}, thereby exhibiting a complexity of $\OO\brac{\frac{m\bar n\ln(m\bar n)}{\varepsilon^2}}$. 
Note that this result underestimates the optimization complexity of ERM oracles and thus is excessively conservative.~\citet{zhang2023efficient} develop a two-stage stochastic approximation (TSA) approach to target the population MERO. TSA can also be utilized in the empirical scenario, yielding a computation complexity of $\OO\brac{\frac{m\ln m}{\varepsilon^2}}$. We observe that both methods suffer from the quadratic dependency on $\varepsilon$. Based on our ALEG algorithm, we further develop a two-stage \textbf{Al}gorithm for \textbf{E}mpirical \textbf{M}ERO (ALEM), which attains an $\tilde{\OO}\brac{\frac{m\sqrt{\bar n\ln{m}}}{\varepsilon}}$ complexity, outperforming existing algorithms.

\section{Preliminaries\label{sec:preliminaries}}

In this section, we present notations, definitions, assumptions, and the Bregman setup used in the paper. 

\subsection{Notations} 


Denote by $\norm{\cdot}_x$ a general norm on finite dimensional Banach space $\mathcal{E}_x$ and its dual norm $\norm{\x}_{x,*}=\sup_{\y\in\mathcal{E}_x}\{\braket{\x,\y}|\norm{\y}_x\le 1\}$. We use $[S]=\{1,2,\cdots,S\}$ and $[S]^0=\{0,1,\cdots,S-1\}$ for some positive integer $S$. We denote $\W\times\Delta_m$ by $\Z$. In view of $\z=(\w;\q)\in\Z$ as the concatenation of $\w$ and $\q$, we use $F(\z)=F(\w,\q)$ and $\nabla F(\z)=\left(\nabla_\w F(\w,\q);-\nabla_\q F(\w,\q)\right)$ to denote the merged function value and the merged gradient, respectively.

\subsection{Definitions and Assumptions\label{sec:assumptions}}

For mirror descent~\citep{beck2003mirror} type of primal-dual methods, we need to construct the distance-generating function and the corresponding Bregman divergence. 

\begin{definition}
    We call a continuous function $\psi_x:X\mapsto\R$ a distance-generating function modulus $\alpha_x$ w.r.t.~$\norm{\cdot}_x$, if (i) the set $X^o=\{\x\in X|\partial\psi_x(\x)\neq 0\}$ is convex; (ii) $\psi_x$ is continuously differentiable and $\alpha_x$-strongly convex w.r.t.~$\norm{\cdot}_x$, i.e.,~$\braket{\nabla\psi_x(x_1)-\nabla\psi_x(x_2),x_1-x_2}\ge \alpha_x\norm{x_1-x_2}_x^2,\ \forall x_1,x_2\in X^o$.
\end{definition}

\begin{definition}
    Define Bregman function $B_x:X\times X^o\mapsto\R_+$ associated with distance-generating function $\psi_x$ as
    \begin{equation}
        B_x(x,x^o)=\psi_x(x)-\psi_x(x^o)-\braket{\nabla\psi_x(x^o),x-x^o}.
    \end{equation}
\end{definition}

We equip $\W$ with a distance-generating function $\psi_w(\cdot)$ modulus $\alpha_w$ w.r.t.~a norm $\norm{\cdot}_w$ endowed on $\mathcal{E}$. Similarly, we have $\psi_q(\cdot)$ modulus $\alpha_q$ w.r.t.~$\norm{\cdot}_q$. The choice of such $\psi_x$ and $\norm{\cdot}_x$ should rely on the geometric structure of our domain. In this paper, we stick to $\psi_q(\q)=\summ \q_i\ln{\q_i}$ as the entropy function, which is 1-strongly convex w.r.t~$\norm{\cdot}_1$.

The following assumptions will be used in our Bregman setup analysis, which are commonly adopted in the existing literature~\citep{nemirovski2009robust}.

\begin{assumption}
    (Boundness on the domain) The domain $\W$ is convex and its diameter measured by $\psi_w(\cdot)$ is bounded by a positive constant $D_w$, i.e.\begin{equation}
        \max_{\w\in\W}\psi_w(\w)-\min_{\w\in\W}\psi_w(\w)\le D_w^2.
    \end{equation}
    Similarly, we assume $\Delta_m$ is bounded by $D_q$. Since $\psi_q$ is specified as the entropy function, we have $D_q=\sqrt{\ln{m}}$.\label{asp:boundness}
\end{assumption}

\begin{assumption}
    (Smoothness and Lipschitz continuity) For any $i\in[m]$, $j\in[n_i]$, $\ell(\cdot;\xi_{ij})$ is $L$-smooth and $G$-Lipschitz continuous.\label{asp:smooth-Lipschitz}
\end{assumption}

\begin{remark}
     In the context of stochastic convex optimization, smoothness is of the essence to obtain a variance-based convergence rate~\citep{lan2012optimal}.
\end{remark}

\begin{assumption}
    (Convexity) For every $i\in[m]$, empirical risk function $R_i(\cdot)$ is convex.\label{asp:convexity}
\end{assumption}

\begin{remark}
    Our convexity assumption is \textit{weaker} than~\citet{luo2021near} and \citet{carmon2022distributionally}, due to not requiring each component loss function $\ell(\cdot;\xi_{ij})$ to be convex.\label{remark:convexity}
\end{remark}

\subsection{Bregman Setups\label{sec:performance-measure}}

We endow the Cartesian product space $\mathcal{E}\times\R^m$ and its dual space  $\mathcal{E}^*\times\R^m$ with the following norm and dual norm~\citep{nemirovski2009robust}. For any $\z=(\w;\q)\in\mathcal{E}\times\R^m$ and any $\z^*=(\w^*;\q^*)\in\mathcal{E}^*\times\R^m$,
\begin{equation}
\begin{aligned}
    \norm{\z}&:=\sqrt{\frac{\alpha_w}{2D_w^2}\norm{\w}_w^2+\frac{\alpha_q}{2D_q^2}\norm{\q}_q^2},\\
    \norm{\z^*}_*&:=\sqrt{\frac{2D_w^2}{\alpha_w}\norm{\w^*}_{w,*}^2+\frac{2D_q^2}{\alpha_q}\norm{\q^*}_{q,*}^2}.
\end{aligned}\label{eq:normz}
\end{equation}
The corresponding distance-generating function has the following form:
\begin{equation}
    \psi(\z):=\frac{1}{2D^2_w}\psi_w(\w)+\frac{1}{2D_q^2}\psi_q(\q)\label{eq:psi-z}.
\end{equation}
It's easy to verify that $\psi(\z)$ is 1-strongly w.r.t.~the norm $\norm{\cdot}$ in~\eqref{eq:normz}. So now we can define the Bregman divergence $B:\Z\times\Z^o\mapsto\R_+$ used in our algorithm:
\begin{equation}
    B(\z,\z^o):=\psi(\z)-\psi(\z^o)-\braket{\nabla \psi(\z^o),\z-\z^o}\label{eq:def-z-bregman-divergence}.
\end{equation} 
To analyze the quality of an approximate solution, we adopt a commonly used performance measure in existing literature~\citep{luo2021near,carmon2022distributionally,zhang2023stochastic}, known as the duality gap of any given $\bar\z=(\bar\w;\bar\q)$ for~\eqref{eq:empirical-gdro-wq}:
\begin{equation}
\epsilon(\bar\z):=\max_{\q\in\Delta_m}F(\bar\w,\q)-\min_{\w\in\W}F(\w,\bar\q).\label{eq:duality-gap}
\end{equation}
We aim to find a solution $\bar\z=(\bar\w;\bar\q)$ that is $\varepsilon$-accuracy of~\eqref{eq:empirical-gdro-wq}, i.e., $\epsilon(\bar\z)\le\varepsilon$. It's obvious that $\epsilon(\bar\z)$ is an upper bound for the optimality of $\w$ to~\eqref{eq:empirical-gdro-wq}, since
\begin{equation}
    \begin{aligned}
        &\max_{i\in[m]}R_i(\bar\w)-\min_{\w\in\W}\max_{i\in[m]}R_i(\w)\\=&\max_{\q\in\Delta_m}\summ\q_iR_i(\bar\w)-\min_{\w\in\W}\max_{\q\in\Delta_m}\summ\q_iR_i(\w)\\\le& \max_{\q\in\Delta_m}\summ\q_iR_i(\bar\w)-\min_{\w\in\W}\summ\bar\q_iR_i(\w)=\epsilon(\bar\z).
    \end{aligned}
\end{equation}

\section{Algorithm for Empirical GDRO\label{sec:gdro-solution}}

Inspired by~\citet{alacaoglu2022stochastic}, we follow the common double-loop structure of variance reduction. The outer loop computes snapshot points in the primal space and the dual space, respectively. The inner loop runs a modified mirror prox scheme with a full gradient plus a stochastic gradient, rather than a pair of stochastic gradients in the classic mirror prox algorithm~\citep{nemirovski2004prox,juditsky2011solving}. We emphasize the following two techniques that separate our algorithm from other similar work~\citep{luo2021near,carmon2022distributionally,alacaoglu2022stochastic}.

\paragraph{Variance Reduction Based on Group Sampling}
To improve complexity bounds, we propose the group sampling technique which samples uniformly from all groups per iteration and further reduces the variance of the stochastic gradient for each group. This strategy captures the two-level finite-sum structure by eliminating the randomness on the first level of the finite-sum structure, i.e., the summation $\summ\q_iR_i(\w)$. Although the group sampling of ALEG queries $m$ times more stochastic gradients than uniform sampling or importance sampling of MPVR, it produces a better stochastic gradient with a $1/m$ lower Lipschitz constant (cf.~\cref{lem:Lipschitz}), which ultimately improves the complexity by a factor of $\sqrt{m}$. 
The motivation behind this technique is twofold: (i) identifying the nested finite-sum structure of the objective $F$, and (ii) leveraging the inherent properties of the stochastic gradient under $\norm{\cdot}_\infty$. Comprehensive discussions can be found in~\cref{app:sec:compare-sampling}.

\paragraph{Alterable Hyperparameters}
To support variable algorithmic hyperparameters, we use the one-index-shifted weighted average rather than the naive ergodic average to construct the (mirror) snapshot points and the Lyapunov functions. In this way, we provide theoretical support for non-constant learning rates~\citep{liner2021improving}, which have been proven competitive in many practical scenarios.

We introduce the proposed ALEG in~\cref{alg:gdro}. The detailed algorithm is elaborated in~\cref{sec:algorithm-details} and the corresponding theoretical guarantee is presented in~\cref{sec:theoretical-guarantee}.

\subsection{Our Algorithm\label{sec:algorithm-details}}
In the outer loop of ALEG, we follow the standard variance reduction procedure~\citep{NIPS:2013:Zhang,10.5555/2999611.2999647} by periodically computing the snapshot points as well as the full gradient. In the inner loop of ALEG, we maintain two sets of solutions as mirror prox algorithm~\citep{nemirovski2004prox,juditsky2011solving} and further accelerate it via snapshot points.

At the beginning of the $s$-th outer loop, we compute the snapshot $\z^{s}$ using the weighted average from the previous trajectory. The mirror snapshot $\nabla\psi(\bar\z^{s})$ is constructed similarly by the weighted average of the current trajectory mapped in the dual space\footnote{Actually, the value of $\nabla\psi(\bar\z^{s})$ is sufficient for~\cref{alg:gdro}. The inverse or conjugate of $\nabla\psi$ is not necessary to calculate and thus no additional cost is incurred.}. Formally, we compute them via the one-index-shifted weighted average as follows
\begin{align}
    \z^{s}&=\brac{\sumks \alpha_{k-1}^{s-1}}^{-1}\sum_{k=1}^{K_{s-1}}\alpha_{k-1}^{s-1}\z_k^{s-1}\label{eq:snapshot},\\
    \nabla \psi(\bar\z^{s})&=\brac{\sumks \alpha_{k-1}^{s-1}}^{-1}\sum_{k=1}^{K_{s-1}}\alpha_{k-1}^{s-1}\nabla\psi(\z^{s-1}_k).\label{eq:mirror-snapshot}
\end{align}
Then, the full gradient $\nabla F(\z^s)$ is computed as
\begin{equation}
    \nabla F(\z^s)=\begin{pmatrix}
        \summ \q^s_i\nabla R_i(\w^s)\\-\left[R_1(\w^s),\cdots,R_m(\w^s)\right]^T
    \end{pmatrix}.\label{eq:z-fullgradient}
\end{equation}
In the inner loop $k$, we first compute the prox point using the full gradient from the last epoch:
\begin{equation}
\begin{aligned}
    \z_{k+1/2}^s=\argmin_{\z\in\Z}\{\eta_k^s\braket{\nabla F(\z^s),\z}+\alpha_k^s &B(\z,\bar\z^s)\\+(1-\alpha_k^s)&B(\z,\z_k^s)\}.
\end{aligned}\label{eq:z_k+1/2^s}
\end{equation}
The above update is different from the traditional mirror prox algorithm~\citep{nemirovski2004prox,juditsky2011solving} because it uses the full gradient $\nabla F(\z^s)$ instead of the stochastic gradient at~$\z_{k-1}^s$. 

\begin{remark}
    The update in~\eqref{eq:z_k+1/2^s} utilizes the ``negative momentum''~\citep{driggs2022accelerating}  technique to achieve acceleration with the help of snapshot points. Similar ideas are also used in recent studies on variance reduction~\citep{shang2017fast,zhou2018simple}. However, their momentum is performed in the primal space, which is different from MPVR and our method.
\end{remark}


After $\z_{k+1/2}^s$ is calculated, we leverage the two-level structure to construct the stochastic gradient based on our group sampling technique. For each group $i$, we sample uniformly from the $i$-th group's samples $\{\xi_{ij}\}_{j=1}^{n_i}$. The $m$ samples generated by the group sampling technique are denoted as:
\begin{equation}
    \xi_{k}^s:=\{\xi_{k,i}^s\}_{i=1}^m,\quad \xi_{k,i}^s\sim\text{Unif}\brac{\{\xi_{ij}\}_{j=1}^{n_i}},\forall\ i\in[m].\label{eq:per-group-sampling}
\end{equation}
\begin{remark}
    Compared to uniform sampling or importance sampling in~\citet{alacaoglu2022stochastic}, we incorporate the finite-sum components of the objective in~\eqref{eq:empirical-gdro-wq} so that our stochastic gradients make use of a total of $m$ samples.
\end{remark}

Then, we construct our stochastic gradient based on the group sampling technique, as specified below:
\begin{equation}
    \nabla F(\z^s;\xi_k^s):=\begin{pmatrix}
        \summ \q^s_i\nabla\ell(\w^s;\xi_{k,i}^s)\\-\left[\ell(\w^s;\xi_{k,1}^s),\cdots,\ell(\w^s;\xi_{k,m}^s)\right]^T
    \end{pmatrix}.\label{eq:stochastic-gradient}
\end{equation}
The above construction equally weighs the randomness from every group, which is the key step to exploit the two-level finite-sum structure of~\eqref{eq:empirical-gdro-wq}. Next, we introduce the following lemma to quantify its Lipschitz continuity.

\begin{algorithm}[t]
   \caption{Variance-Reduced Stochastic Mirror Prox \textbf{Al}gorithm for \textbf{E}mpirical \textbf{G}DRO (ALEG)}
   \label{alg:gdro}
  {\bf Input}: 
  Risk functions $\{R_i(\w)\}_{i\in[m]}$, epoch number $S$, iteration numbers $\{K_s\}$, learning rates $\{\eta_{k}^{s}\}$, and weights $\{\alpha_{k}^{s}\}$.
\begin{algorithmic}[1]
   \STATE Initialize $\z_0=(\w_0;\q_0)=\argmin_{\z\in\Z}\psi(\z)$ as the starting point.
   \STATE For each $j\in[K_{-1}]$, set $\z_j^{-1}=\z_0^0=\z_0$.
   \FOR{$s=0$ {\bfseries to} $S-1$}
   \STATE Compute the snapshot $\z^{s}$ and the mirror snapshot $\nabla \psi(\bar\z^{s})$ according to~\eqref{eq:snapshot} and~\eqref{eq:mirror-snapshot}, respectively.
   \STATE Compute the full gradient $\nabla F(\z^{s})$ according to~\eqref{eq:z-fullgradient}.
   \FOR{$k=0$ {\bfseries to} $K_s-1$}
   \STATE Compute $\z_{k+1/2}^s$ according to~\eqref{eq:z_k+1/2^s}.
   \STATE For each $i\in[m]$, sample $\xi_{k,i}^s$ uniformly from $\{\xi_{ij}\}_{j=1}^{n_i}$. 
   \STATE Compute the variance-reduced stochastic gradient estimator $\g_k^s$ defined in~\eqref{eq:g_k^s}.
   \STATE Compute $\z_{k+1}^s$ according to~\eqref{eq:z_k+1^s}.
    \ENDFOR
    \STATE Set $\z_0^{s+1}=\z_{K_s}^s$.
    \ENDFOR
    \STATE \textbf{Return} $\z_S$ according to~\eqref{eq:output-point}.
\end{algorithmic}
\end{algorithm}

\begin{lemma}For any $s\in[S]^0, k\in[K_s]^0$, $\nabla F(\z;\xi_{k}^s)$ is $L_z$-Lipschitz continuous, where
\begin{equation*}
L_z:=2D_w\max\cbrac{\sqrt{2D^2_wL^2+G^2\ln{m}},G\sqrt{2\ln{m}}}.
\end{equation*}
\label{lem:Lipschitz}
\end{lemma}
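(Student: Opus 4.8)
The plan is to fix two arbitrary points $\z_1=(\w_1;\q_1)$ and $\z_2=(\w_2;\q_2)$ in $\Z$ and bound $\norm{\nabla F(\z_1;\xi_k^s)-\nabla F(\z_2;\xi_k^s)}_*$ by $L_z\norm{\z_1-\z_2}$. Writing the merged stochastic gradient in~\eqref{eq:stochastic-gradient} as a primal block $g_w(\z):=\summ\q_i\nabla\ell(\w;\xi_{k,i}^s)$ and a dual block $g_q(\z):=-\sqbrac{\ell(\w;\xi_{k,1}^s),\dots,\ell(\w;\xi_{k,m}^s)}^T$, the definition of the dual norm in~\eqref{eq:normz} (together with $\alpha_q=1$, $D_q=\sqrt{\ln m}$, and $\norm{\cdot}_{q,*}=\norm{\cdot}_\infty$) reduces the task to separately controlling $\norm{g_w(\z_1)-g_w(\z_2)}_{w,*}$ and $\norm{g_q(\z_1)-g_q(\z_2)}_\infty$. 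I abbreviate $a:=\norm{\w_1-\w_2}_w$ and $b:=\norm{\q_1-\q_2}_1$.

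For the primal block I would use the standard add-and-subtract decomposition
\[
\begin{aligned}
g_w(\z_1)-g_w(\z_2)=&\summ\q_{1,i}\brac{\nabla\ell(\w_1;\xi_{k,i}^s)-\nabla\ell(\w_2;\xi_{k,i}^s)}\\
&+\summ(\q_{1,i}-\q_{2,i})\nabla\ell(\w_2;\xi_{k,i}^s).
\end{aligned}
\]
The first sum is controlled by $L$-smoothness together with $\q_1\in\Delta_m$ (its coefficients sum to one), giving a bound of $La$; the second is controlled by the $G$-Lipschitz bound $\norm{\nabla\ell}_{w,*}\le G$ together with $\summ|\q_{1,i}-\q_{2,i}|=b$, giving $Gb$. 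Hence $\norm{g_w(\z_1)-g_w(\z_2)}_{w,*}\le La+Gb$. For the dual block, the key observation is that it does not depend on $\q$ at all, so $G$-Lipschitz continuity of each $\ell(\cdot;\xi_{k,i}^s)$ yields $\norm{g_q(\z_1)-g_q(\z_2)}_\infty=\max_i|\ell(\w_1;\xi_{k,i}^s)-\ell(\w_2;\xi_{k,i}^s)|\le Ga$.

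Finally I would assemble the two block bounds through the explicit dual norm, obtaining
\[
\norm{\nabla F(\z_1;\xi_k^s)-\nabla F(\z_2;\xi_k^s)}_*^2\le 2D_w^2(La+Gb)^2+2G^2a^2\ln m,
\]
where the normalization $\alpha_w=1$ is used. Applying the elementary inequality $(La+Gb)^2\le 2L^2a^2+2G^2b^2$ and regrouping the right-hand side into the coefficients of $a^2$ and $b^2$, I would compare against $\norm{\z_1-\z_2}^2=\tfrac{1}{2D_w^2}a^2+\tfrac{1}{2\ln m}b^2$ and invoke the elementary fact that $\frac{Pa^2+Qb^2}{Ra^2+Sb^2}\le\max\{P/R,\,Q/S\}$. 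The coefficient of $a^2$ then produces the $2D_w^2L^2+G^2\ln m$ term and that of $b^2$ the $2G^2\ln m$ term, each scaled by $4D_w^2$, which is precisely $L_z^2$.

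The conceptual content---smoothness for the gradient-difference term and Lipschitz continuity for the coupling term---is routine, and the independence of the dual block from $\q$ is a convenient simplification. The main obstacle is the bookkeeping in the last step: the weights $\tfrac{2D_w^2}{\alpha_w}$ and $2D_q^2=2\ln m$ baked into the primal and dual norms must be tracked carefully so that the cross term $2LGab$ is split (via $(La+Gb)^2\le 2L^2a^2+2G^2b^2$) and the coefficients collapse into the clean $\max\{\cdot,\cdot\}$ form rather than a messier $\ell_2$-type or eigenvalue combination. I would also verify the normalization of the modulus $\alpha_w$, since the stated $L_z$ is $\alpha_w$-free and matches the computation exactly when $\alpha_w=1$.
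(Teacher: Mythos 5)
Your proposal is correct and follows essentially the same route as the paper's proof: the same add-and-subtract decomposition of the primal block (bounded via $L$-smoothness with $\q_1\in\Delta_m$ and $G$-Lipschitzness with $\norm{\q_1-\q_2}_1$), the same $\q$-free observation for the dual block, and the same regrouping of coefficients under the weighted dual norm to extract $L_z^2=4D_w^2\max\cbrac{2D_w^2L^2+G^2\ln m,\,2G^2\ln m}$. The only cosmetic difference is that you bound the primal-block norm by $La+Gb$ and then apply $(La+Gb)^2\le 2L^2a^2+2G^2b^2$, whereas the paper applies Young's inequality and Jensen's inequality directly at the squared-norm level; the constants come out identical.
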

Comparing with MPVR~\citep[Definition~7]{alacaoglu2022stochastic},~\cref{lem:Lipschitz} shows that our group sampling technique can reduce the Lipschitz constant of the stochastic gradient by a factor of $m$, which is the reason for the $\sqrt{m}$ improvement of the complexity.

Inspired by variance reduction~\citep{NIPS:2013:Zhang,10.5555/2999611.2999647} techniques, we construct the following variance-reduced stochastic gradient estimator using the stochastic gradient at~the snapshot $\z^s$ and the full gradient in~\eqref{eq:z-fullgradient}:
\begin{equation}
    \g_k^s=\nabla F(\z_{k+1/2}^s;\xi_k^s)-\nabla F(\z^s;\xi_k^s)+\nabla F(\z^s)\label{eq:g_k^s}.
\end{equation}
With expectation taken over $\xi_k^s$, it is easy to verify that $\g_k^s$ is an unbiased estimator. The final step is to compute $\z_{k+1}^s$ according to the mirror prox scheme:
\begin{equation}
\begin{aligned}
    \z_{k+1}^s=\argmin_{\z\in\Z}\{\eta_k^s\braket{ \g_k^s,\z}+\alpha_k^s &B(\z,\bar\z^s)\\+(1-\alpha_k^s)&B(\z,\z_k^s)\}.
\end{aligned}\label{eq:z_k+1^s}
\end{equation}
In~\eqref{eq:z_k+1^s}, we use the variance-reduced stochastic gradient estimator $\g_k^s$ instead of raw stochastic gradient at $\z_k^s$ to achieve variance reduction. Upon completion of the double loop procedure, ALEG returns solutions in a different manner compared to MPVR, as shown below
\begin{equation}
     \z_S=\brac{\sums\sumkzs\eta_k^s}^{-1}\sums \sumkzs \eta_k^s\z_{k+1/2}^s\label{eq:output-point}.
\end{equation}
Specifically, our utilization of alterable learning rates necessitates the computation of $\z_S$ through weighted averaging, as opposed to the ergodic averaging approach employed by~\citet{alacaoglu2022stochastic}.
 
\subsection{Theoretical Guarantee\label{sec:theoretical-guarantee}}
Now we present our theoretical result for~\cref{alg:gdro}. The proofs are deferred to~\cref{app:sec:gdro-analysis}. 
\begin{theorem}
    Under~\cref{asp:boundness,asp:smooth-Lipschitz,asp:convexity}, by setting $K_s=K, \alpha_k^s=\frac{1}{K}, \frac{1}{10L_z\sqrt{K}}\le\eta_k^s\le\frac{1}{L_z\sqrt{5K}}$, 
    our~\cref{alg:gdro} ensures that
    \begin{equation}
        \E\sqbrac{\epsilon(\z_S)}\le\OO\brac{\frac{1}{S}\sqrt{\frac{\ln m}{K}}}.\label{eq:expectation-bound}
    \end{equation}
    \label{thm:gdro-convergence}
\end{theorem}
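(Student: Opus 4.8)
The plan is to treat \cref{alg:gdro} as a variance-reduced mirror-prox iteration, bound the $\eta$-weighted average of the primal-dual gap evaluated at the prox points $\z_{k+1/2}^s$, and then convert this into a bound on the duality gap $\epsilon(\z_S)$ through convexity-concavity of $F$. First I would note that both proximal updates \eqref{eq:z_k+1/2^s} and \eqref{eq:z_k+1^s} share the \emph{same} pair of Bregman centers $\bar\z^s$ and $\z_k^s$. Using the identity $\alpha B(u,a)+(1-\alpha)B(u,b)=B(u,c)+\alpha B(c,a)+(1-\alpha)B(c,b)$, valid whenever $\nabla\psi(c)=\alpha\nabla\psi(a)+(1-\alpha)\nabla\psi(b)$, each update collapses to a single-center mirror-prox step from the point $c_k^s$ defined by $\nabla\psi(c_k^s)=\alpha_k^s\nabla\psi(\bar\z^s)+(1-\alpha_k^s)\nabla\psi(\z_k^s)$; this is precisely why the mirror snapshot \eqref{eq:mirror-snapshot} is propagated in the dual space. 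The corrector step then yields the standard three-point inequality: for any comparator $u$, $\eta_k^s\langle\g_k^s,\z_{k+1/2}^s-u\rangle\le B(u,c_k^s)-B(u,\z_{k+1}^s)-B(\z_{k+1}^s,\z_{k+1/2}^s)-B(\z_{k+1/2}^s,c_k^s)+\eta_k^s\langle\g_k^s-\nabla F(\z^s),\z_{k+1/2}^s-\z_{k+1}^s\rangle$.

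Second I would control the extragradient error. Because $\g_k^s-\nabla F(\z^s)=\nabla F(\z_{k+1/2}^s;\xi_k^s)-\nabla F(\z^s;\xi_k^s)$ by the definition \eqref{eq:g_k^s}, \cref{lem:Lipschitz} gives $\norm{\g_k^s-\nabla F(\z^s)}_*\le L_z\norm{\z_{k+1/2}^s-\z^s}$. Young's inequality together with the $1$-strong convexity of $\psi$ absorbs the cross term $\eta_k^s\langle\g_k^s-\nabla F(\z^s),\z_{k+1/2}^s-\z_{k+1}^s\rangle$ into the negative divergence $-B(\z_{k+1}^s,\z_{k+1/2}^s)$, leaving a residual of order $\eta_k^{s2}L_z^2\norm{\z_{k+1/2}^s-\z^s}^2$. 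Taking expectation over $\xi_k^s$ and invoking unbiasedness $\E[\g_k^s]=\nabla F(\z_{k+1/2}^s)$ replaces the stochastic gradient by the true gradient against the fixed $u$, producing a clean per-step bound on $\eta_k^s\langle\nabla F(\z_{k+1/2}^s),\z_{k+1/2}^s-u\rangle$.

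Third comes the telescoping. Within epoch $s$, substituting $B(u,c_k^s)=\alpha_k^s B(u,\bar\z^s)+(1-\alpha_k^s)B(u,\z_k^s)-\alpha_k^s B(c_k^s,\bar\z^s)-(1-\alpha_k^s)B(c_k^s,\z_k^s)$ with $\alpha_k^s=1/K$ and summing over $k$ lets the $B(u,\z_k^s)$ terms telescope, leaving $B(u,\bar\z^s)$ and the boundary terms $B(u,\z_0^s),B(u,\z_K^s)$. Across epochs I would assemble these residuals into a Lyapunov function: the one-index-shifted weighted averages defining $\z^s$ in \eqref{eq:snapshot} and $\nabla\psi(\bar\z^s)$ in \eqref{eq:mirror-snapshot} are chosen so that the opening budget $B(u,\bar\z^{s+1})$ of epoch $s+1$ is paid off by the progress of epoch $s$, so that summing over $s=0,\dots,S-1$ collapses to the single initial divergence $B(u,\z_0)=\OO(1)$. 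Finally, convexity in $\w$ and concavity (indeed linearity) in $\q$ give $F(\w_S,\q)-F(\w,\q_S)\le\langle\nabla F(\z_S),\z_S-\z\rangle$ for $\z_S=(\w_S;\q_S)$; applying Jensen to the $\eta$-weighted output \eqref{eq:output-point}, maximizing over $u$ using the bounded Bregman diameter (with $D_q=\sqrt{\ln m}$), and dividing by $\sum_{s,k}\eta_k^s\asymp S\sqrt{K}/L_z$ yields $\E[\epsilon(\z_S)]\le\OO(1)/\sum\eta=\OO(L_z/(S\sqrt{K}))$. Since \cref{lem:Lipschitz} gives $L_z=\OO(\sqrt{\ln m})$ when $D_w,G,L$ are treated as constants, this is exactly the claimed $\OO(\frac{1}{S}\sqrt{\frac{\ln m}{K}})$.

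The step I expect to be the main obstacle is controlling the accumulated variance $\sum_{s,k}\eta_k^{s2}L_z^2\norm{\z_{k+1/2}^s-\z^s}^2$: it is measured against the \emph{primal} snapshot $\z^s$, whereas the negative divergences available to cancel it are centered at the \emph{mirror} snapshot $\bar\z^s$ and at $\z_k^s$. Reconciling the two requires relating $\z^s$ back to the trajectory average, and leveraging the learning-rate constraint $\eta_k^s\le 1/(L_z\sqrt{5K})$ (so that $\eta_k^{s2}L_z^2\le 1/(5K)$) to ensure the variance contribution is a strict fraction of the retained negative Bregman terms, keeping the Lyapunov recursion monotone. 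Making the constants actually close — so the variance terms are genuinely dominated rather than merely bounded — is the delicate quantitative heart of the argument, and is where the precise choice of $\alpha_k^s=1/K$ and the shifted weighted averaging must be used in concert.
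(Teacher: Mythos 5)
Your outline reproduces the paper's deterministic skeleton faithfully: the two-center prox steps, Young's inequality against \cref{lem:Lipschitz}, the Jensen/Bregman-linearity step relating the negative divergence at the mirror snapshot $\bar\z^s$ to $-\tfrac12\norm{\z_{k+1/2}^s-\z^s}^2$ (which resolves the ``primal vs.\ mirror snapshot'' obstacle you flag at the end --- this is exactly \eqref{eq:Bregman-linearity} and \eqref{eq:strong-convexity-Jensen}), and the Lyapunov telescoping over epochs via the one-index-shifted weights. However, there is a genuine gap in your second step: you take expectation over $\xi_k^s$ \emph{with the comparator $u$ fixed}, and only afterwards maximize over $u$. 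That argument bounds $\max_{\z\in\Z}\E[\,\cdot\,]$, whereas the theorem requires a bound on $\E[\epsilon(\z_S)]=\E[\max_{\z\in\Z}(\,\cdot\,)]$, and the inequality between these two quantities goes the wrong way. Concretely, the noise term $\sums\sumkzs\eta_k^s\braket{\Delta_k^s,\z-\z_{k+1/2}^s}$ is a martingale difference sequence only for fixed $\z$; once $\z$ is the (random, trajectory-dependent) maximizer realizing the duality gap, its conditional expectation no longer vanishes, so unbiasedness cannot be ``invoked'' the way you describe.

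The paper closes this gap with the ghost-iterate technique (\cref{lem:raw-expectation-bound}): one introduces a virtual SMD sequence $\y_k^s$ driven by $-\eta_k^s\Delta_k^s$ as in \eqref{eq:virtual-sequence}, splits $\braket{\Delta_k^s,\z-\z_{k+1/2}^s}=\braket{\Delta_k^s,\z-\y_k^s}+\braket{\Delta_k^s,\y_k^s-\z_{k+1/2}^s}$, bounds the first sum \emph{uniformly in $\z$} by the deterministic regret bound \eqref{eq:ghost-iterate-bound}, and observes that the second sum is a true martingale difference sequence because $\y_k^s-\z_{k+1/2}^s$ is measurable before $\xi_k^s$ is drawn. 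This repair is not free: the regret bound injects an additional variance term $\tfrac{(\eta_k^s)^2}{2}\norm{\Delta_k^s}_*^2$, controlled via $\norm{\Delta_k^s}_*\le 2L_z\norm{\z_{k+1/2}^s-\z^s}+\E[\,\cdot\,]$ as in \eqref{eq:Delta_k^s-bound}, which must then be absorbed by the \emph{surplus} negative terms $-\tfrac{\alpha_k^s\theta_k^s}{2}\norm{\z_{k+1/2}^s-\z^s}^2$ retained from the per-step analysis. This second absorption --- yielding the coefficient $\alpha_k^s\brac{2-\tfrac52\theta_k^s}\le0$ --- is precisely what forces the split parameter $\theta_k^s\in(0.8,0.99)$, i.e., the specific window $\tfrac{1}{10L_z\sqrt K}\le\eta_k^s\le\tfrac{1}{L_z\sqrt{5K}}$ in the theorem statement. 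Your sketch budgets the negative Bregman terms only against the extragradient residual $(\eta_k^sL_z)^2\norm{\z_{k+1/2}^s-\z^s}^2$, so even with the ghost iterate added, your accounting would need to be redone to leave room for this second variance contribution.
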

\begin{remark}
    We stick to the constant parameter setting in terms of $\cbrac{K_s}$ and $\cbrac{\alpha_k^s}$, while allowing the learning rates $\cbrac{\eta_k^s}$ to remain adjustable. 
    Variable $\cbrac{K_s}$ and $\cbrac{\alpha_k^s}$ complicate analysis and presentation with an additional summation term in overall complexity.
    For brevity, we set $\cbrac{K_s}$ and $\cbrac{\alpha_k^s}$ to be constant.
\end{remark}
\begin{corollary}
    Under conditions in~\cref{thm:gdro-convergence}, by setting $K=\Theta(\bar n)$, the computation complexity for~\cref{alg:gdro} to reach $\varepsilon$-accuracy of~\eqref{eq:empirical-gdro-wq} is $\OO\brac{\frac{m\sqrt{\bar n\ln{m}}}{\varepsilon}}$.\label{cor:gdro-complexity}
\end{corollary}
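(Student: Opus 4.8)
The plan is to turn the per-epoch convergence guarantee of \cref{thm:gdro-convergence} into a total computation budget by (i) pricing one outer epoch of ALEG and (ii) determining how many epochs $S$ are needed to drive the expected duality gap below $\varepsilon$, then optimizing over $K$. First I would tally the cost of a single epoch $s$. The full gradient $\nabla F(\z^s)$ in \eqref{eq:z-fullgradient} requires evaluating $\ell(\w^s;\xi_{ij})$ and $\nabla\ell(\w^s;\xi_{ij})$ for every sample in every group, costing $\OO\brac{\summ n_i}=\OO(m\bar n)$, and it is computed only once per epoch and reused. Each of the $K_s=K$ inner iterations then uses the group-sampled estimator: since group sampling draws exactly one sample per group, forming $\nabla F(\z_{k+1/2}^s;\xi_k^s)$ and $\nabla F(\z^s;\xi_k^s)$ in \eqref{eq:stochastic-gradient}, and hence $\g_k^s$ in \eqref{eq:g_k^s}, costs $\OO(m)$ component evaluations; the two mirror-prox proximal updates \eqref{eq:z_k+1/2^s} and \eqref{eq:z_k+1^s} likewise cost $\OO(m)$, because the dual block is a closed-form entropic projection onto $\Delta_m$ and the primal prox is assumed cheap. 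Thus one whole epoch costs $\OO(m\bar n+mK)$.

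Next I would extract the epoch count. By \cref{thm:gdro-convergence}, $\E\sqbrac{\epsilon(\z_S)}\le\OO\brac{\frac{1}{S}\sqrt{\frac{\ln m}{K}}}$, so $\E\sqbrac{\epsilon(\z_S)}\le\varepsilon$ is guaranteed once $S=\Theta\brac{\frac{1}{\varepsilon}\sqrt{\frac{\ln m}{K}}}$. Multiplying the per-epoch cost by $S$ gives the total complexity
\[
    S\cdot\OO(m\bar n+mK)=\OO\brac{\frac{m\sqrt{\ln m}}{\varepsilon}\cdot\frac{\bar n+K}{\sqrt{K}}}.
\]
It remains to choose $K$ to minimize $\frac{\bar n+K}{\sqrt{K}}=\frac{\bar n}{\sqrt{K}}+\sqrt{K}$, which by the AM--GM inequality is minimized at $K=\Theta(\bar n)$ with value $\Theta(\sqrt{\bar n})$. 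Substituting $K=\Theta(\bar n)$ yields $S=\Theta\brac{\frac{1}{\varepsilon}\sqrt{\frac{\ln m}{\bar n}}}$ and a total complexity of $\OO\brac{\frac{m\sqrt{\bar n\ln m}}{\varepsilon}}$, exactly matching the claim; this choice is precisely the one that balances the full-gradient cost $m\bar n$ against the inner-loop cost $mK$.

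The one step I would be most careful about is the cost accounting in the first paragraph, since the advertised $\sqrt{m}$ advantage over MPVR hinges entirely on it. Group sampling deliberately queries $m$ samples per inner iteration, a factor $m$ more than single-sample schemes, so it is not a priori obvious the method is cheaper. The gain enters indirectly through the $1/m$ reduction of the Lipschitz constant $L_z$ in \cref{lem:Lipschitz}, which is already absorbed into the $\sqrt{\ln m}$ (rather than $\sqrt{m\ln m}$) factor of the \cref{thm:gdro-convergence} bound. I therefore need to verify that the $\OO(m)$-per-iteration and $\OO(m\bar n)$-per-epoch-full-gradient estimates are tight and that the proximal operators genuinely cost $\OO(m)$, so that no hidden term reintroduces an extra $\sqrt{m}$ and the balance $K=\Theta(\bar n)$ indeed delivers the stated rate.
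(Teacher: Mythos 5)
Your proposal is correct and follows essentially the same route as the paper's proof: price each epoch at $\OO(m\bar n)$ for the full gradient plus $\OO(mK)$ for the inner loop, take $S=\OO\brac{\frac{L_z}{\varepsilon\sqrt{K}}}$ from \cref{thm:gdro-convergence} with $L_z=\OO(\sqrt{\ln m})$, and balance the two cost terms at $K=\Theta(\bar n)$. Your explicit AM--GM minimization over $K$ and the closing sanity check on the $\OO(m)$-per-iteration accounting are small refinements of the paper's ``set the two terms at the same order'' step, not a different argument.
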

Our complexity in~\cref{cor:gdro-complexity} is better than the state-of-the-art rate~\citep{alacaoglu2022stochastic} by a factor of $\sqrt{m}$. From a practical perspective, ALEG still maintains a low per-iteration complexity. The main step~\eqref{eq:z_k+1/2^s} and~\eqref{eq:z_k+1^s} only involves projections onto $\W$ and $\Delta_m$ respectively. With $\psi_w(\w)=\frac{1}{2}\norm{\w}_2^2$ and $\psi_q(\q)=\summ\q_i\ln{\q_i}$, the updates are equivalent to Stochastic Gradient Descent (SGD) and Hedge~\citep{freund1997decision}.


\section{Algorithm for Empirical MERO\label{sec:mero-solution}}

\begin{figure*}[tbp]
	\centering
 \captionsetup{justification=centering}
	\subfigure[\centering Training on the synthetic data\centering]{\includegraphics[width=.245\textwidth]{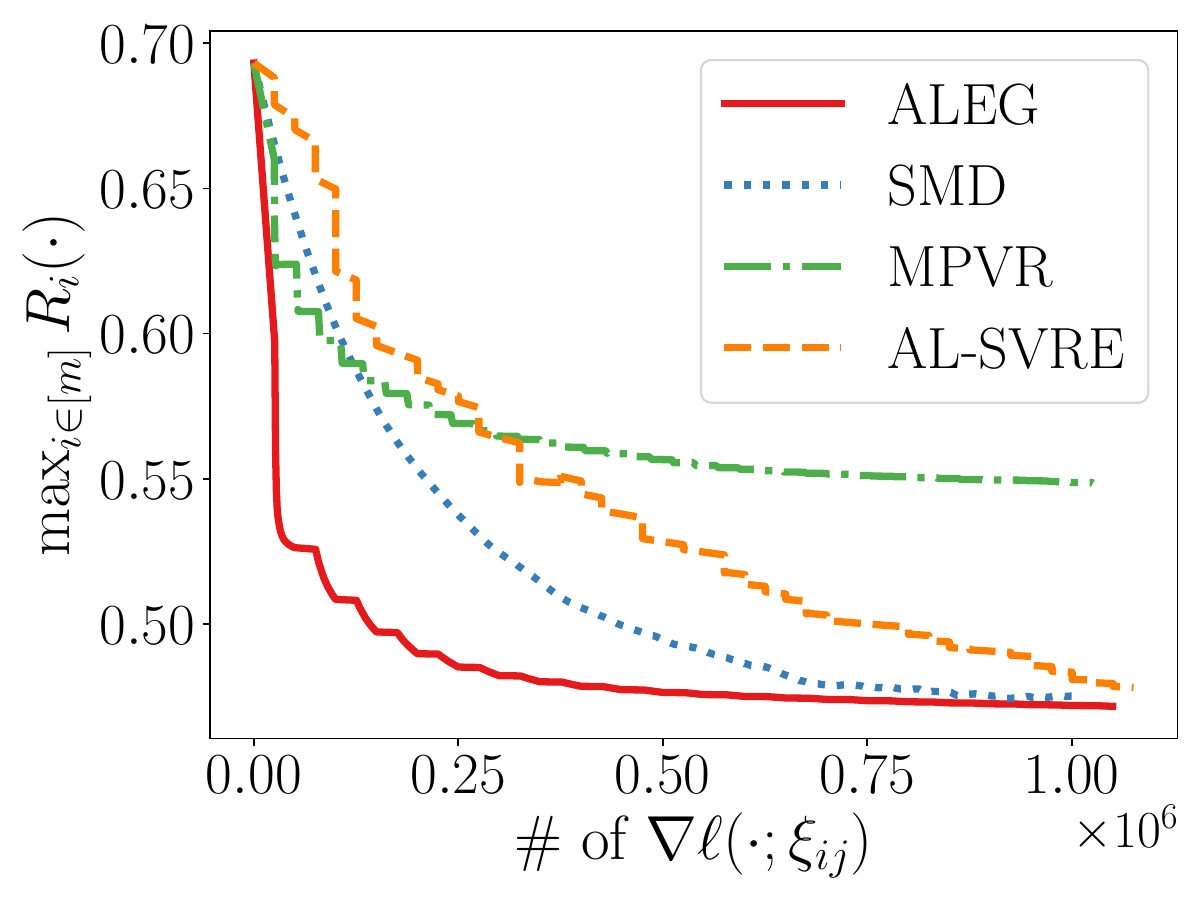}\label{fig:synthetic_train}}
	\subfigure[\centering Test on the synthetic data]{\includegraphics[width=.245\textwidth]{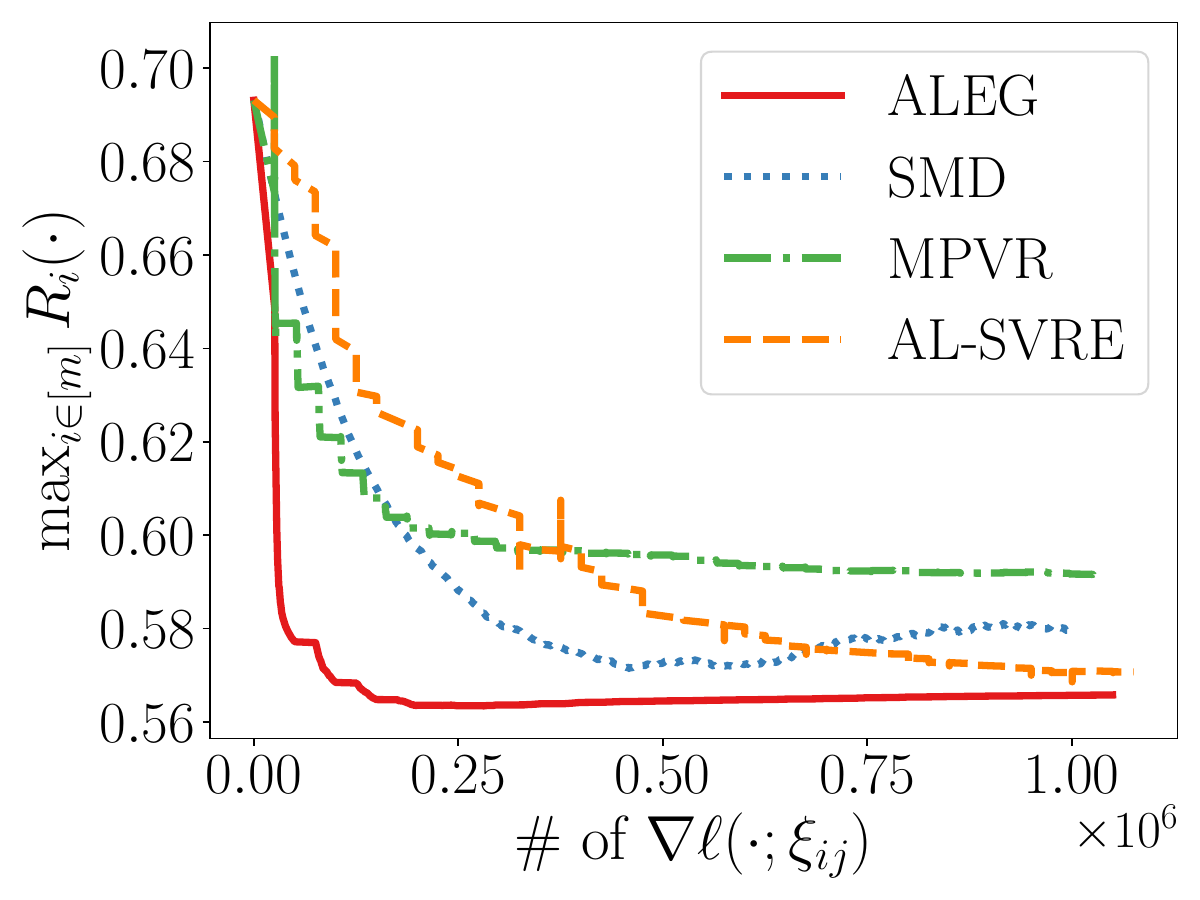}\label{fig:synthetic_test}}
	\subfigure[\centering Training on CIFAR-100]{\includegraphics[width=.245\textwidth]{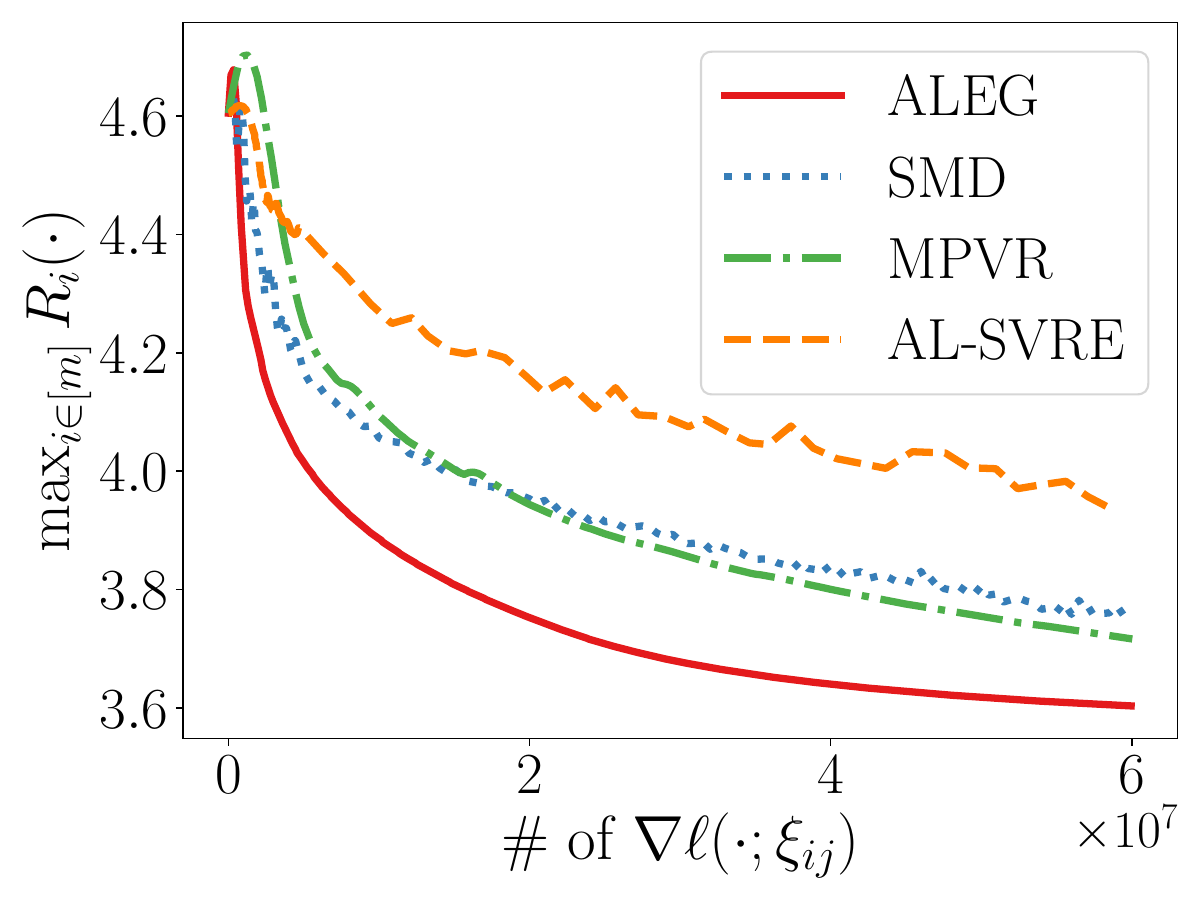}\label{fig:cifar_train}}
	\subfigure[\centering Test on CIFAR-100]{\includegraphics[width=.245\textwidth]{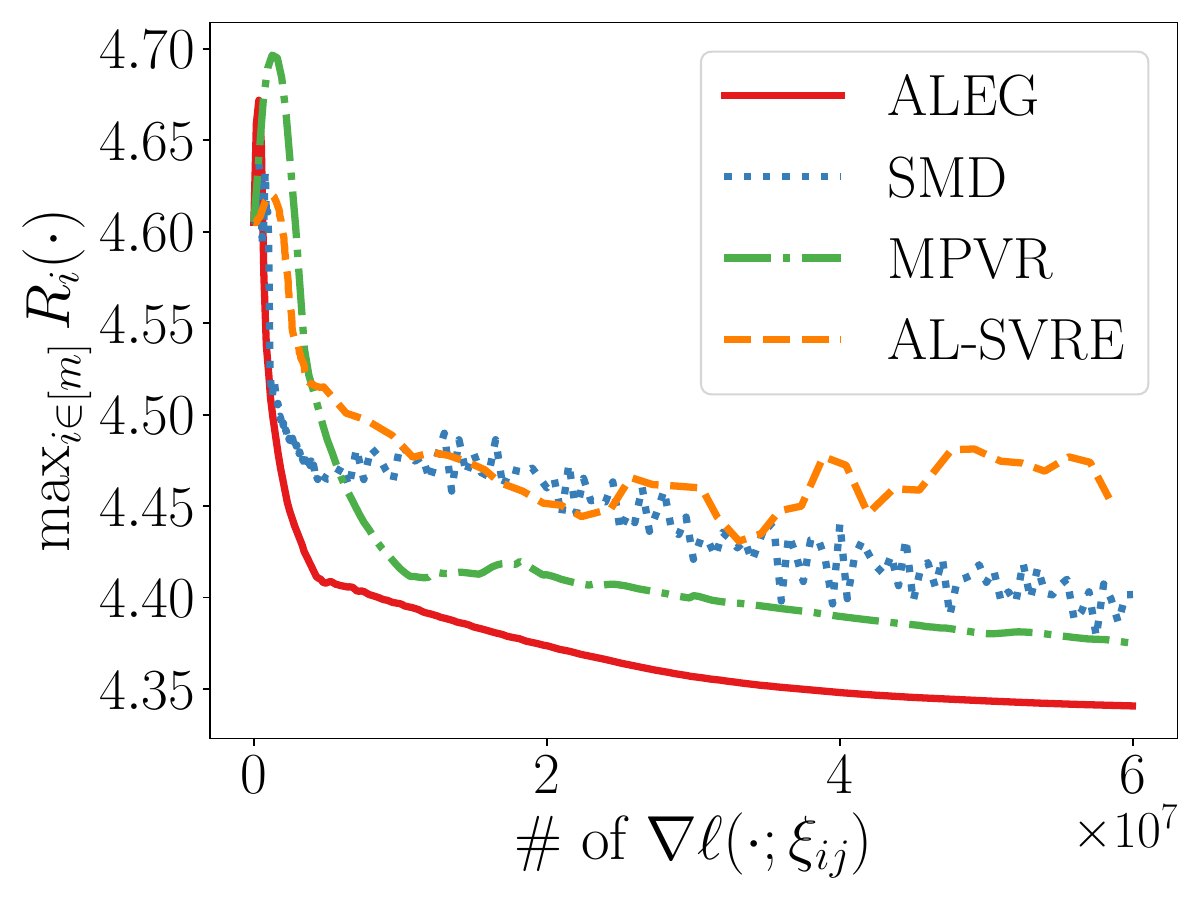}\label{fig:cifar_test}}
	\caption{Comparison of the max empirical risk $\max_{i\in[m]}R_i(\cdot)$ with respect to the number of stochastic gradient evaluations $\# \text{ of } \nabla\ell(\cdot;\xi_{ij})$ on the synthetic dataset and the CIFAR-100 dataset.\label{fig:synthetic-plus-cifar}}
\end{figure*}

In this section, we extend our methodology in~\cref{sec:gdro-solution} to solve the empirical MERO in~\eqref{eq:empirical-mero-wq}. Inspired by the efficient stochastic approximation approach to MERO~\citep{zhang2023efficient}, we propose a two-stage \textbf{Al}gorithm for \textbf{E}mpirical \textbf{M}ERO (ALEM) as shown in~\cref{alg:mero}. In the first stage, ALEM runs $m$ ALEG instances to estimate the minimal empirical risk as $\{\hat{R}_i^*\}_{i\in[m]}$. In the second stage, we focus on the following approximate problem:
\begin{equation}
\min_{\w\in\W}\max_{\q\in\Delta_m}\left\{\underline {\hat F}(\w,\q):=\summ\q_i\underline{\hat R}_i(\w)\right\},
    \label{eq:empirical-mero-wq-surrogate}
\end{equation}
which can be regarded as a substitution of the excess empirical risk $\underline{R}_i(\w)$ in~\eqref{eq:empirical-mero-wq} with the approximated excess empirical risk $\underline{\hat R}_i(\w):=R_i(\w)-\hat{R}_i^*$. Then ALEM treats~\eqref{eq:empirical-mero-wq-surrogate} as an empirical GDRO problem and calls ALEG again to optimize it.

Similar to~\citet[Lemma~4.2]{zhang2023efficient}, we present the following lemma to show that the optimization error for~\eqref{eq:empirical-mero-wq} is under control, provided that the optimization error for~\eqref{eq:empirical-mero-wq-surrogate} is small and the approximation error $\hat{R}_i^*-R_i^*$ is close to zero for all groups. All the proofs are deferred to~\cref{app:sec:mero-analysis}.
\begin{lemma}
For any $\bar\z=(\bar\w;\bar\q)\in\Z$, define the duality gap for the empirical MERO problem~\eqref{eq:empirical-mero-wq} and the approximated empirical MERO problem~\eqref{eq:empirical-mero-wq-surrogate} as
\begin{equation}
\begin{aligned}
\underline\epsilon(\bar\z):=\max_{\q\in\Delta_m}\underline{F}(\bar\w,\q)-\min_{\w\in\W}\underline{F}(\w,\bar\q),\\\underline{\hat\epsilon}(\bar\z):=\max_{\q\in\Delta_m}\underline{\hat{F}}(\bar\w,\q)-\min_{\w\in\W}\underline{\hat{F}}(\w,\bar\q),
\end{aligned}
\end{equation}
respectively. It holds that
\begin{equation}
        \underline\epsilon(\bar\z)\le \underline{\hat\epsilon}(\bar\z)+2\max_{i\in[m]}\{\hat{R}_i^*-R_i^*\}.\label{eq:duality-gap-bound}
\end{equation}\label{lem:opt-error}
\end{lemma}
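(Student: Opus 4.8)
The plan is to treat the two objectives as perturbations of one another and to propagate a single uniform bound on their difference through both halves of the duality gap. The starting point is the observation that the two surrogate risks differ only by the estimation error: writing $\delta_i := \hat{R}_i^* - R_i^*$, one has $\underline{R}_i(\w) = \underline{\hat R}_i(\w) + \delta_i$ for every $i$, and hence for all $\z=(\w;\q)$,
\begin{equation}
\underline{F}(\w,\q) = \underline{\hat F}(\w,\q) + \summ \q_i\delta_i.
\end{equation}
The feature I would exploit is that the perturbation $\summ\q_i\delta_i$ is a \emph{convex combination} of the $\delta_i$ (since $\q\in\Delta_m$) and, crucially, does \emph{not} depend on $\w$.

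Next I would bound this perturbation uniformly. Because each $\hat{R}_i^*$ is the value of $R_i$ at the approximate minimizer returned by ALEG in the first stage, we have $\hat{R}_i^*\ge R_i^*$, so $\delta_i\ge 0$ and therefore $\left|\summ\q_i\delta_i\right|\le\max_{i\in[m]}\delta_i=\max_{i\in[m]}\{\hat{R}_i^*-R_i^*\}$ for every $\q\in\Delta_m$. I would then apply this to the two terms of $\underline\epsilon(\bar\z)$ separately, keeping careful track of the direction of each inequality. For the maximizing term (primal $\bar\w$ fixed), adding the perturbation and taking $\max$ over $\q$ gives
\begin{equation}
\max_{\q\in\Delta_m}\underline{F}(\bar\w,\q)\le\max_{\q\in\Delta_m}\underline{\hat F}(\bar\w,\q)+\max_{i\in[m]}\delta_i.
\end{equation}
For the minimizing term (dual $\bar\q$ fixed), the perturbation $\summ\bar\q_i\delta_i$ is a constant in $\w$ bounded below by $-\max_{i\in[m]}\delta_i$, so
\begin{equation}
\min_{\w\in\W}\underline{F}(\w,\bar\q)\ge\min_{\w\in\W}\underline{\hat F}(\w,\bar\q)-\max_{i\in[m]}\delta_i.
\end{equation}
Subtracting the last display from the preceding one yields exactly $\underline\epsilon(\bar\z)\le\underline{\hat\epsilon}(\bar\z)+2\max_{i\in[m]}\{\hat{R}_i^*-R_i^*\}$, which is the claim.

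The computation is short, so the only genuine subtlety — and the step I would double-check — is the bookkeeping of inequality directions: the $\max$ term must be upper bounded and the $\min$ term lower bounded, with each contributing one copy of $\max_{i\in[m]}\delta_i$, producing the factor of $2$. I note that exploiting the $\w$-independence of the perturbation in the minimizing term would in fact permit the slightly sharper constant $\max_{i\in[m]}\delta_i-\min_{i\in[m]}\delta_i$, but the factor-$2$ bound is cleaner, mirrors~\citet[Lemma~4.2]{zhang2023efficient}, and is entirely sufficient for the downstream complexity analysis of ALEM.
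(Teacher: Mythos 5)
Your proposal is correct and takes essentially the same route as the paper's proof: both reduce the claim to the uniform perturbation bound $\bigl|\underline{F}(\z)-\underline{\hat{F}}(\z)\bigr|=\bigl|\summ\q_i(\hat{R}_i^*-R_i^*)\bigr|\le\max_{i\in[m]}\{\hat{R}_i^*-R_i^*\}$ and apply it once to the maximizing term and once to the minimizing term, which is where the factor of $2$ arises (the paper does this by evaluating $\underline{F}$ at its own optimizers $\tilde\w,\tilde\q$ and then relaxing to the $\max$/$\min$ of $\underline{\hat F}$, which is the same bookkeeping you perform directly). Your explicit observation that $\delta_i=\hat{R}_i^*-R_i^*\ge 0$ — needed so that the convex combination is dominated by $\max_i\delta_i$ rather than $\max_i|\delta_i|$ — is left implicit in the paper, and your side remark about the sharper constant $\max_i\delta_i-\min_i\delta_i$ is also sound.
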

\paragraph{Stage 1: Empirical Risk Minimization} Noticing that when the number of groups reduces to 1, the minimax problem in~\eqref{eq:empirical-gdro} degenerates to the classical one-level finite-sum empirical risk minimization problem, which ALEG can still handle. Therefore, we can apply ALEG to each group to get an estimator $\hat{R}_i^*$ for $R_i^*$. The extra input $T$ can be viewed as a budget to control the cost of each group. To deal with the max operator in~\eqref{eq:duality-gap-bound}, we provide the following high-probability bound.
\begin{algorithm}[t]
   \caption{Two-Stage \textbf{Al}gorithm for \textbf{E}mpirical \textbf{M}ERO (ALEM)}
   \label{alg:mero}
  {\bf Input}: 
  Risk functions $\{R_i(\w)\}_{i\in[m]}$, epoch number $S$, iteration numbers $\{K_s\}$, learning rates $\{\eta_{k}^{s}\}$, weights $\{\alpha_{k}^{s}\}$, and budget $T$.
\begin{algorithmic}[1]
    \FOR{$i=1$ {\bfseries to} $m$}
    \STATE Empirical risk minimization: \\ \quad\protect{$\bar{\w}_i\leftarrow$  ALEG$\brac{R_i(\w),S,\{K_s\},\{\eta_{k}^{s}\},\{\alpha_{k}^{s}\}}$}.
    \STATE Estimate the minimal empirical risk: $\hat R_i^*=R_i(\bar{\w}_i)$.
    \ENDFOR
   \STATE Run our empirical GDRO solver to optimize~\eqref{eq:empirical-mero-wq-surrogate}: 
   $\bar\z\leftarrow$ ALEG$\left(\{R_i(\w)-\hat R_i^*\}_{i\in[m]},S,\{K_s\},\{\eta_{k}^{s}\},\{\alpha_{k}^{s}\}\right)$.
    \STATE \textbf{Return} $\bar\z,\{\bar{\w}_i\}_{i\in[m]},\{\hat{R}_i^*\}_{i\in[m]}$.
\end{algorithmic}
\end{algorithm}
\begin{theorem}
    Under~\cref{asp:boundness,asp:smooth-Lipschitz,asp:convexity}, running~\cref{alg:gdro} as an ERM oracle by setting $S=\left\lceil\frac{T}{\sqrt{\bar n}}\right\rceil, K_s=\bar n,\alpha_k^s=\frac{1}{\bar n},\frac{1}{10L\sqrt{\bar n}}\le\eta_k^s\le\frac{1}{L\sqrt{5\bar n}}$, for any group $i$, the following holds with probability at least $1-\delta$,
    \begin{equation}
        \hat{R}_i^*-R_i^*\le\OO\brac{\frac{1}{T}\brac{\sqrt{\ln \frac{mT}{\sqrt{\bar n}\delta}}+\frac{1}{\sqrt{\bar n}}\ln{\frac{mT}{\sqrt{\bar n}\delta}}}}.
        \label{eq:excess-risk-probability-bound}
    \end{equation}
    \label{thm:excess-risk-convergence}
\end{theorem}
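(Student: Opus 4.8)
The plan is to specialize ALEG to a single group, obtain an $\OO(1/T)$ guarantee in expectation from \cref{thm:gdro-convergence}, and then sharpen it into a tail bound with a Bernstein-type martingale inequality, closing with a union bound over the $m$ groups that the $\max_i$ in \cref{lem:opt-error} forces on us. First I would note that running ALEG on the single risk $R_i(\cdot)$ is the $m=1$ instance of \eqref{eq:empirical-gdro-wq}: the simplex $\Delta_1$ is a single point, the maximizing player vanishes, the iterate $\z_k^s$ reduces to its primal block, and the duality gap $\epsilon(\z_S)$ equals the optimization error $R_i(\bar\w_i)-R_i^*$. In this degenerate regime the dual diameter $D_q=\sqrt{\ln m}$ is replaced by the primal diameter $D_w$ and the effective gradient-Lipschitz constant of \cref{lem:Lipschitz} collapses to the smoothness $L$ of \cref{asp:smooth-Lipschitz}, which is exactly why the prescribed step size is calibrated to $1/(L\sqrt{\bar n})$ rather than $1/(L_z\sqrt K)$. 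Carrying these substitutions through the proof of \cref{thm:gdro-convergence} yields $\E[R_i(\bar\w_i)-R_i^*]=\OO(1/(S\sqrt K))$, and plugging in $K=\bar n$, $S=\lceil T/\sqrt{\bar n}\rceil$ with $\sum_{s,k}\eta_k^s=\Theta(T/L)$ gives $\OO(1/T)$, matching the leading order of \eqref{eq:excess-risk-probability-bound}.

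To upgrade to high probability I would reopen the weighted descent inequality underlying \cref{thm:gdro-convergence}. After weighting by $\eta_k^s$ and normalizing by $\sum_{s,k}\eta_k^s$, the optimization error is dominated by deterministic Bregman terms plus the martingale sum $\sum_{s,k} d_k^s$, where $d_k^s=\eta_k^s\braket{\g_k^s-\nabla F(\z_{k+1/2}^s),\z^*-\z_{k+1/2}^s}$. Because $\z_{k+1/2}^s$ is computed from the deterministic full gradient $\nabla F(\z^s)$ in \eqref{eq:z_k+1/2^s} and is therefore measurable before $\xi_k^s$ is drawn, while $\g_k^s$ is an unbiased estimator of $\nabla F(\z_{k+1/2}^s)$ (as observed after \eqref{eq:g_k^s}), the $\{d_k^s\}$ form a martingale-difference sequence; the non-predictable remainder $\braket{\g_k^s-\nabla F(\z_{k+1/2}^s),\z_{k+1}^s-\z_{k+1/2}^s}$ is absorbed into the Bregman and step-size terms by Young's inequality. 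Taking expectations annihilates $\sum_{s,k} d_k^s$ and recovers the bound of the previous step.

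The crux is to control $\sum_{s,k}d_k^s$ by Freedman's inequality, for which I need an almost-sure increment bound and a conditional-variance bound. For the former, $G$-Lipschitz continuity and boundedness of the losses make $\g_k^s$ bounded, so $|d_k^s|\le b$ with $b=\OO(GD_w/(L\sqrt{\bar n}))$; after normalization this contributes $\frac{b\ln(1/\delta')}{\sum\eta}=\OO\brac{\frac{1}{T\sqrt{\bar n}}\ln(1/\delta')}$, precisely the second term of \eqref{eq:excess-risk-probability-bound}. For the variance, the key is the variance-reduction estimate $\E[\norm{\g_k^s-\nabla F(\z_{k+1/2}^s)}_*^2\mid\mathcal{F}]\le L^2\norm{\z_{k+1/2}^s-\z^s}^2$, so the quadratic variation $V=\sum_{s,k}\E[(d_k^s)^2\mid\mathcal{F}]$ is governed by the summed movement $\sum_{s,k}(\eta_k^s)^2\norm{\z_{k+1/2}^s-\z^s}^2$, which the same telescoping Bregman potential that drives \cref{thm:gdro-convergence} controls at the $\OO(1/T)$ level; consequently the sub-Gaussian part of Freedman inherits that rate and contributes $\OO\brac{\frac{1}{T}\sqrt{\ln(1/\delta')}}$, the first term of \eqref{eq:excess-risk-probability-bound}.

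Finally, a union bound over the $m$ groups, together with the dyadic stratification over the data-dependent variation required to run Freedman with a random $V$, lets me take $\delta'=\delta/(mS)=\delta\sqrt{\bar n}/(mT)$, turning $\ln(1/\delta')$ into $\ln\frac{mT}{\sqrt{\bar n}\delta}$ and producing exactly \eqref{eq:excess-risk-probability-bound}. I expect the conditional-variance control to be the main obstacle: a crude bound $\E\norm{\g_k^s-\nabla F(\z_{k+1/2}^s)}_*^2\le(2G)^2$ would degrade the first term to $\OO(1/\sqrt T)$ and break agreement with the expectation rate, so the argument genuinely hinges on coupling the variance of the variance-reduced estimator to the per-step movement $\norm{\z_{k+1/2}^s-\z^s}^2$ and reabsorbing it through the Bregman telescoping of \cref{thm:gdro-convergence}.
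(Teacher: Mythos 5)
Your proposal is correct and follows the same skeleton as the paper's proof --- the $m=1$ specialization with Lipschitz constant $L$ in place of $L_z$, a martingale tail bound whose variance is coupled to the per-step movement $\norm{\z_{k+1/2}^s-\z^s}^2$ and absorbed by the Lyapunov telescoping (\cref{lem:lyapunov-recurrence}, \cref{cor:grad-variance-bound}), and a union bound over the $m$ groups yielding the $\ln\frac{mT}{\sqrt{\bar n}\delta}$ factor --- but it departs from the paper on two genuine points. First, you anchor the martingale differences at the fixed minimizer, $d_k^s=\eta_k^s\braket{\Delta_k^s,\z^*-\z_{k+1/2}^s}$; this is legitimate precisely because for $m=1$ the comparator in the gap is a fixed point rather than a maximized variable, so no decoupling is needed. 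The paper instead keeps its general machinery: it retains the $\max_{\z\in\Z}$ from \cref{lem:gap-bound-0} and neutralizes it with the ``ghost iterate'' sequence \eqref{eq:virtual-sequence}--\eqref{eq:ghost-iterate-bound}, taking $V_k^s=\braket{\eta_k^s\Delta_k^s,\y_k^s-\z_{k+1/2}^s}$ as the martingale difference. Your route is simpler here; the paper's buys reusability, since the identical argument is recycled verbatim (with $L_z$) for the stage-2 high-probability bound in \cref{thm:mero-convergence}, where the maximization over $\z$ is genuine and your fixed-comparator shortcut would not suffice. Second, you invoke Freedman's inequality with a dyadic stratification over the data-dependent quadratic variation, whereas the paper applies a Bernstein inequality for martingales that presumes a deterministic bound on $\sum_{s,k}\E[(V_k^s)^2\,|\,\mathcal{F}_k^s]$; since that sum in \eqref{eq:conditional-variance-bound} is pathwise random and \cref{cor:grad-variance-bound} controls it only in expectation, your explicit stratification (plausibly also the source of the $S$ inside the paper's logarithm) is in fact the more careful treatment of exactly this step. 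Finally, your closing observation matches the paper's crux: a crude bound $\E\norm{\Delta_k^s}_*^2\le\OO(G^2)$ would destroy the $\OO(1/T)$ leading term, and only the variance-reduction coupling $\norm{\Delta_k^s}_*\le 2L\norm{\z_{k+1/2}^s-\z^s}$ fed through the Bregman potential recovers the stated rate.
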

\paragraph{Stage 2: Solving Empirical GDRO} After we managed to minimize the empirical risks for $m$ groups, we can approximate the excess empirical risk $\underline{R}_i(\w)$ by $R_i(\w)-\hat R_i^*$. Then, we send the modified risk function $\underline{\hat{R}}_i(\cdot)$ into~\cref{alg:gdro} to get a solution for~\eqref{eq:empirical-mero-wq-surrogate}. We stick to the aforementioned budget $T$ to regulate the cost. Based on~\cref{lem:opt-error} and~\cref{thm:gdro-convergence}, we have the following convergence guarantee for~\cref{alg:mero}.
\begin{theorem}
    Under the conditions in~\cref{thm:excess-risk-convergence}, our~\cref{alg:mero} ensures that 
    with probability at least $1-\delta$,
    \begin{equation}
    \begin{aligned}
   \underline\epsilon(\bar\z)\le\OO\left(\frac{1}{T}\left(\sqrt{\ln m\ln{\frac{T}{\sqrt{\bar n}\delta}}}+\sqrt{\ln{\frac{mT}{\sqrt{\bar n}\delta}}}\right.\right.\\\left.\left.+\sqrt{\frac{\ln m}{\bar n}}\ln{\frac{T}{\sqrt{\bar n}\delta}}\right)\right).
    \end{aligned}\label{eq:mero-probability-bound}
    \end{equation}
    \label{thm:mero-convergence}
\end{theorem}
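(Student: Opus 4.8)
The plan is to start from \cref{lem:opt-error}, which decomposes the target as
\[
\underline\epsilon(\bar\z)\le \underline{\hat\epsilon}(\bar\z)+2\max_{i\in[m]}\cbrac{\hat R_i^*-R_i^*},
\]
and then to control the two summands on separate high-probability events before combining them by a union bound. The second summand is the Stage-1 estimation error already treated in \cref{thm:excess-risk-convergence}; the first is the duality gap of the surrogate empirical GDRO problem \eqref{eq:empirical-mero-wq-surrogate} that ALEG solves in Stage 2. I would allocate failure probability $\delta/2$ to each part so that the final union bound gives the advertised probability $1-\delta$, and the three terms of \eqref{eq:mero-probability-bound} will emerge as one term from the estimation error and two from the surrogate gap.

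For the estimation error I would apply \cref{thm:excess-risk-convergence} to each fixed group at confidence $1-\delta/(2m)$ and take a union bound over the $m$ groups, turning the per-group guarantee into a bound on $\max_{i\in[m]}\cbrac{\hat R_i^*-R_i^*}$; the extra $\ln m$ is absorbed into the logarithm $\ln\frac{mT}{\sqrt{\bar n}\delta}$ up to a constant. This yields, with probability at least $1-\delta/2$,
\[
2\max_{i\in[m]}\cbrac{\hat R_i^*-R_i^*}\le\OO\brac{\frac1T\sqrt{\ln\tfrac{mT}{\sqrt{\bar n}\delta}}+\frac{1}{T\sqrt{\bar n}}\ln\tfrac{mT}{\sqrt{\bar n}\delta}},
\]
whose first piece is exactly the second term of \eqref{eq:mero-probability-bound} and whose second piece is of the same type as the third term and absorbed by it.

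For the surrogate gap I would first observe that replacing each $R_i$ by $\underline{\hat R}_i=R_i-\hat R_i^*$ merely shifts the objective by constants, so \cref{asp:smooth-Lipschitz,asp:convexity} and the full hypotheses of \cref{thm:gdro-convergence} persist and ALEG applies verbatim to \eqref{eq:empirical-mero-wq-surrogate}. Substituting the Stage-2 schedule $S=\lceil T/\sqrt{\bar n}\rceil$ and $K=\bar n$ into \cref{thm:gdro-convergence} gives $\E[\underline{\hat\epsilon}(\bar\z)]\le\OO(\sqrt{\ln m}/T)$. The real work is to upgrade this expectation bound to the sharp $\sqrt{\ln(1/\delta)}$-type tail in \eqref{eq:mero-probability-bound}, since a bare Markov step would only produce a crude $1/\delta$ factor. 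I would therefore reopen the Lyapunov/regret argument proving \cref{thm:gdro-convergence} (the same machinery underlying the high-probability \cref{thm:excess-risk-convergence}) and isolate the zero-mean martingale formed by the gradient errors $\g_k^s-\nabla F(\z_{k+1/2}^s)$ (zero-mean by the unbiasedness of $\g_k^s$ noted after \eqref{eq:g_k^s}) paired against $\z_{k+1/2}^s-\z$ for the comparator $\z$ defining the gap. Because the domain is bounded (\cref{asp:boundness}, dual diameter $D_q=\sqrt{\ln m}$) and $\ell,\nabla\ell$ are uniformly bounded (\cref{asp:smooth-Lipschitz}), these differences admit an almost-sure increment bound together with a summed conditional-variance proxy; after the $(\sum_{s,k}\eta_k^s)^{-1}$ normalization of the output \eqref{eq:output-point}, a Freedman-type inequality over the $SK=T\sqrt{\bar n}$ inner steps returns a deviation of order
\[
\frac1T\sqrt{\ln m\,\ln\tfrac{S}{\delta}}+\frac1T\sqrt{\tfrac{\ln m}{\bar n}}\,\ln\tfrac{S}{\delta},
\]
which, using $\ln\frac{S}{\delta}=\ln\frac{T}{\sqrt{\bar n}\delta}$, is precisely the first and third terms of \eqref{eq:mero-probability-bound}.

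The main obstacle is exactly this expectation-to-high-probability upgrade of ALEG's guarantee: taking expectations annihilates the martingale in the proof of \cref{thm:gdro-convergence}, so that proof gives no tail control on its own. The delicate points are (i) showing the martingale differences are almost-surely bounded and extracting the right variance proxy, for which I would lean on the uniform bounds of \cref{asp:smooth-Lipschitz} and the bounded domain of \cref{asp:boundness}; and (ii) tracking the dual-block scale $\sqrt{\ln m}=D_q$ so that it multiplies $\sqrt{\ln(S/\delta)}$ in the Freedman bound rather than inflating the overall rate — this is what separates the leading $\sqrt{\ln m\,\ln(S/\delta)}$ term from the higher-order $\sqrt{\ln m/\bar n}\,\ln(S/\delta)$ term. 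A concluding union bound over the Stage-1 and Stage-2 events and a consolidation of logarithmic factors then deliver \eqref{eq:mero-probability-bound}.
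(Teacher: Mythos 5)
Your overall architecture matches the paper's exactly: the decomposition via \cref{lem:opt-error}, the $\delta/2$ split with a final union bound, Stage~1 handled by \cref{thm:excess-risk-convergence} with a union bound over the $m$ groups, and Stage~2 handled by a high-probability version of ALEG's guarantee on the shifted objective with $L_z=\OO(\sqrt{\ln m})$ (the paper's proof of \cref{thm:mero-convergence} is literally the proof of \cref{thm:excess-risk-convergence} with $L$ replaced by $L_z$, plus the observation that no union bound over groups is needed in Stage~2). You also correctly identify that a bare Markov step is too weak. However, there is a genuine gap at precisely the step you call the main obstacle. The sum $\sums\sumkzs\eta_k^s\braket{\Delta_k^s,\z-\z_{k+1/2}^s}$ is \emph{not} a martingale for the comparator $\z$ ``defining the gap'': that comparator is the maximizer over $\Z$ appearing in \cref{lem:gap-bound-0} and depends on the entire trajectory, so a Freedman/Bernstein inequality cannot be applied to it as you propose. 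The paper resolves this with the ghost-iterate device of \citet{nemirovski2009robust}: an auxiliary SMD sequence $\y_k^s$ driven by $-\eta_k^s\Delta_k^s$ as in \eqref{eq:virtual-sequence}, which splits $\braket{\Delta_k^s,\z-\z_{k+1/2}^s}=\braket{\Delta_k^s,\z-\y_k^s}+\braket{\Delta_k^s,\y_k^s-\z_{k+1/2}^s}$; the first sum is bounded \emph{pathwise}, uniformly in $\z$, by the SMD regret bound \eqref{eq:ghost-iterate-bound}, and only the second sum $V_k^s$ is an adapted martingale difference to which Bernstein's inequality (\cref{lem:Bernstein’s Inequality for Martingales}) applies. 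Your sketch omits this decoupling entirely, and without it the plan stalls.

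The second problem is the source of your variance proxy. You propose to extract it from the uniform bounds of \cref{asp:smooth-Lipschitz} and the bounded domain of \cref{asp:boundness}, but those alone give only $|V_k^s|\lesssim\eta_k^sL_z$, hence $\sums\sumkzs\E[(V_k^s)^2|\mathcal{F}_k^s]=\Theta\brac{SK(\eta L_z)^2}=\Theta(S)$; Bernstein would then yield a deviation of order $\sqrt{S\ln(S/\delta)}$, which after normalizing by $\sums\sumkzs\eta_k^s=\Theta(S\sqrt{\bar n}/L_z)$ leaves a term of order $\frac{L_z}{\sqrt{S\bar n}}\sqrt{\ln(S/\delta)}$ --- worse than the claimed $\frac{L_z}{T}\sqrt{\ln(S/\delta)}$ by a factor $\sqrt{S}$. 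The constant-size proxy the theorem needs comes instead from the variance-reduction structure: $\norm{\Delta_k^s}_*\lesssim L_z\norm{\z_{k+1/2}^s-\z^s}$ as in \eqref{eq:Delta_k^s-bound}, combined with \cref{cor:grad-variance-bound}, which shows the weighted sum of $\norm{\z_{k+1/2}^s-\z^s}^2$ over the \emph{entire run} telescopes through the Lyapunov recurrence and is bounded by $2\max_{\z\in\Z}\Psi_0(\z)=\OO(1)$, independent of $S$ and $K$ (this is the chain \eqref{eq:conditional-variance-bound} in the paper). Your phrase ``reopen the Lyapunov/regret argument'' gestures in the right direction, but the mechanism you actually name would not deliver the $1/T$ rate; the uniform bounds are needed only for the almost-sure increment bound \eqref{eq:Vks-martingale-bound}, not for the variance. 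The remainder of your proposal (Stage-1 union bound, constants-shift preserving the assumptions, the bookkeeping $\ln\frac{S}{\delta}=\Theta(\ln\frac{T}{\sqrt{\bar n}\delta})$, and the consolidation of the three terms under $m\le\OO(\bar n)$) is consistent with the paper.
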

\begin{corollary}
    Based on~\cref{thm:excess-risk-convergence,thm:mero-convergence}, the total computation complexity for~\cref{alg:mero} to reach $\varepsilon$-accuracy of~\eqref{eq:empirical-mero-wq} is $\tilde{\OO}\brac{\frac{m\sqrt{\bar n\ln{m}}}{\varepsilon}}$.\label{cor:mero-complexity}
\end{corollary}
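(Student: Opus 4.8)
The plan is to read off the required budget $T$ from the convergence guarantee in \cref{thm:mero-convergence} and then multiply it by the per-budget computational cost of the two stages of \cref{alg:mero}. Since \cref{thm:mero-convergence} already bundles the stage-1 estimation error (through $\max_i\{\hat R_i^*-R_i^*\}$, controlled by \cref{thm:excess-risk-convergence}) and the stage-2 optimization error (through $\underline{\hat\epsilon}$) into a single high-probability bound on $\underline\epsilon(\bar\z)$ via \cref{lem:opt-error}, the corollary is purely a matter of inverting that bound and counting gradient evaluations. First I would invert \eqref{eq:mero-probability-bound}.

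Inverting the bound: every term on the right-hand side of \eqref{eq:mero-probability-bound} carries a prefactor $\tfrac{1}{T}$, and apart from that prefactor each is at most $\sqrt{\ln m}$ times a quantity that is polylogarithmic in $m,T,\bar n,1/\delta$. Comparing the three summands, the term $\tfrac{1}{T}\sqrt{\ln m\,\ln\frac{T}{\sqrt{\bar n}\delta}}$ dominates (the second lacks the $\sqrt{\ln m}$ factor, and the third is smaller by a factor $1/\sqrt{\bar n}$). Setting this dominant term at most $\varepsilon$ and solving for $T$ yields $T=\tilde{\OO}\brac{\frac{\sqrt{\ln m}}{\varepsilon}}$, where the $\tilde{\OO}$ absorbs the logarithmic factors in $\varepsilon$ (and treats $\delta$ as fixed); one then checks that this choice of $T$ also makes the remaining two summands at most $\varepsilon$, so that $\underline\epsilon(\bar\z)\le\varepsilon$ holds with probability at least $1-\delta$.

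Counting the cost: with the parameter choice $S=\lceil T/\sqrt{\bar n}\rceil$, $K_s=\bar n$ from \cref{thm:excess-risk-convergence}, each outer epoch of \cref{alg:gdro} performs one full-gradient computation followed by $K_s$ inner mirror-prox steps, each evaluating two stochastic gradients on top of the stored full gradient. In stage 1, the run for group $i$ costs $\OO(n_i)$ per epoch for the full gradient and $\OO(\bar n)$ for the inner loop (single-sample stochastic gradients), so summing over the $m$ groups and $S$ epochs gives $\OO\brac{S\summ(n_i+\bar n)}=\OO(Sm\bar n)$. In stage 2, the single run on the surrogate GDRO problem \eqref{eq:empirical-mero-wq-surrogate} costs $\OO(m\bar n)$ per epoch for the full gradient and $\OO(m\bar n)$ for the $K_s$ group-sampled inner steps (each touching all $m$ groups), again $\OO(Sm\bar n)$ in total. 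Hence both stages share the leading cost $\OO(Sm\bar n)=\OO\brac{\tfrac{T}{\sqrt{\bar n}}\,m\bar n}=\OO(mT\sqrt{\bar n})$, and substituting $T=\tilde{\OO}\brac{\frac{\sqrt{\ln m}}{\varepsilon}}$ delivers the claimed total $\tilde{\OO}\brac{\frac{m\sqrt{\bar n\ln m}}{\varepsilon}}$.

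The main obstacle is the bookkeeping in the inversion step: the bound \eqref{eq:mero-probability-bound} is a sum of three structurally different terms with nested logarithms in $T$, so I must confirm which one dictates the $\varepsilon$-dependence of $T$ and verify that the sub-dominant terms do not force a larger budget. A secondary point is checking that the two stages genuinely have matching leading complexity—so that neither dominates and the total collapses to the single term $\tilde{\OO}\brac{\frac{m\sqrt{\bar n\ln m}}{\varepsilon}}$—together with the boundary regime where $T<\sqrt{\bar n}$ forces $S=1$ through the ceiling in $S=\lceil T/\sqrt{\bar n}\rceil$.
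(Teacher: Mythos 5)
Your proposal is correct and takes essentially the same route as the paper's proof: you invert the bound of \cref{thm:mero-convergence} to get $T=\tilde{\OO}\brac{\frac{\sqrt{\ln m}}{\varepsilon}}$, hence $S=\OO\brac{\frac{\sqrt{\ln m}}{\sqrt{\bar n}\varepsilon}}$, and your per-stage cost accounting matches the paper's exactly, namely $\OO\brac{\summ(\bar n S+n_iS)}=\OO\brac{m\bar nS}$ for stage 1 and $\OO\brac{mKS+m\bar nS}=\OO\brac{m\bar nS}$ for stage 2, yielding $\tilde{\OO}\brac{\frac{m\sqrt{\bar n\ln m}}{\varepsilon}}$. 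Your explicit identification of the dominant summand in \eqref{eq:mero-probability-bound} is just a more careful rendering of the paper's shortcut of overlooking logarithmic factors in $\frac{T}{\sqrt{\bar n}}$.
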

\begin{remark}
    \cref{cor:mero-complexity} shows that the complexity of ALEM nearly matches that of the empirical GDRO problem, which significantly improves over the $\OO\brac{\frac{m\bar n\ln(m\bar n)}{\varepsilon^2}}$ computation complexity of ERMEG~\citep{agarwal2022minimax}.
\end{remark}

\section{Experiments\label{sec:experiments}}

In this section, we conduct numerical experiments on empirical GDRO and empirical MERO to evaluate the performance of our algorithms.

\subsection{Setup}
For the empirical GDRO problem, We follow the setup in previous literature~\citep{namkoong2016stochastic,soma2022optimal,zhang2023stochastic}, using both synthetic and real-world datasets. Our goal is to find a single classifier to minimize the maximal empirical risk across all categories.

For the synthetic dataset, we set the number of groups to be 25. For each $i\in[25]$, we draw $\w_i^*\in\R^{1024}$ from the uniform distribution over the unit sphere. The data sample $\cbrac{\xi_{ij}}_{j\in[n_i]}$ of group $i$ is generated by $\xi_{ij}=\brac{\x_{ij},y_{ij}}$, where
\begin{equation*}
\begin{aligned}
&\x_{ij}\sim\mathcal{N}\brac{0,I},\\
    &y_{ij}=\begin{cases}
    \text{sign}\brac{\x_{ij}^T\w_i^*},&\text{with probability 0.9},\\
    -\text{sign}\brac{\x_{ij}^T\w_i^*},&\text{with probability 0.1}.\end{cases}
\end{aligned}
\end{equation*}
We set $\ell(\cdot;\cdot)$ as the logistic loss and use different methods to train a linear model for this binary classification problem.

For the real-world dataset, we use CIFAR-100~\citep{krizhevsky2009learning}, which has 100 classes containing 500 training images and 100 testing images for each class. Our goal is to determine the class for each image. We set $m=100$ according to the number of categories and therefore the empirical risk function for group $i$ is exactly the average loss function amongst all images of this class. We set $\ell(\cdot;\cdot)$ as the softmax loss function for this multi-class classification problem. The underlying predictive model remains to be linear, which satisfies the convex-concave setting.

For the empirical MERO problem, we aim to conduct a similar task as before. We stick to CIFAR-100 as the real-world dataset. However, to simulate the scenario where the groups of distributions differ from each other, we introduce heterogeneous noise in the synthetic dataset, and generate $\xi_{ij}=(\mathbf x_{ij},y_{ij})$, where
\begin{equation*}
\begin{aligned}
    &\mathbf x_{ij}\sim\mathcal{N}(0,I),\\&
    y_{ij}=\begin{cases}
    \text{sign}(\mathbf x_{ij}^T\mathbf w_i^*),&\text{with probability }p_i=0.95-\frac{i}{160},\\
    -\text{sign}(\mathbf x_{ij}^T\mathbf w_i^*),&\text{with probability }1-p_i,\end{cases}
\end{aligned}
\end{equation*}
for $i=0,1,\cdots,24$. The rest of the construction process follows the same steps as those used in the empirical GDRO experiments.

Different from the empirical GDRO experiments, we need to calculate the minimal empirical risk for all groups so as to evaluate our performance. We pass the data of each group to an Empirical Risk Minimization (ERM) oracle. Due to the convexity of the problem, running the oracle adequately long ensures the solution will closely approximate the true minimal empirical risk. After this process, we regard the outputs of the oracle as the true values $\{R_i^*\}_{i\in[m]}$.

\begin{figure*}[tbp]
	\centering
 \captionsetup{justification=centering}
	\subfigure[\centering Training on the synthetic data\centering]{\includegraphics[width=.245\textwidth]{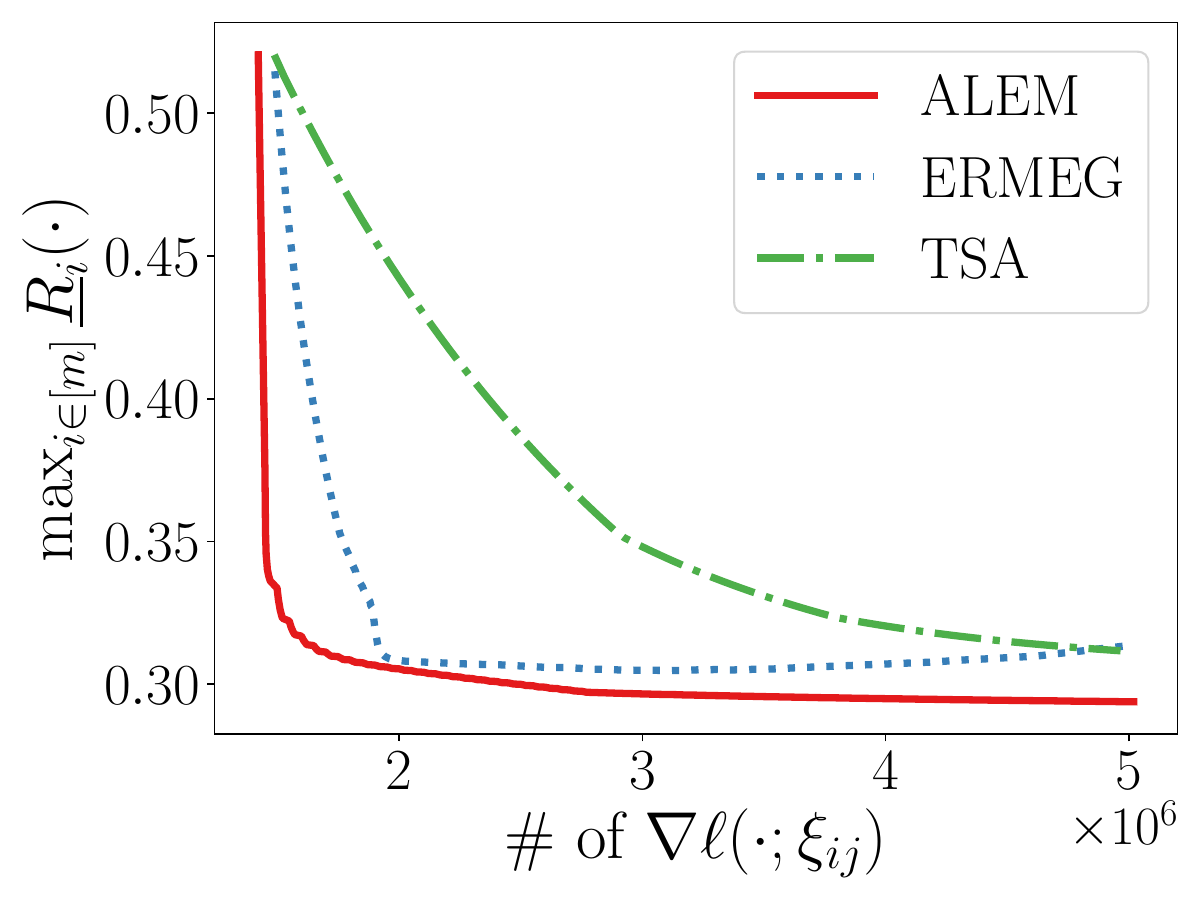}\label{fig:synthetic_train-mero}}
	\subfigure[\centering Test on the synthetic data]{\includegraphics[width=.245\textwidth]{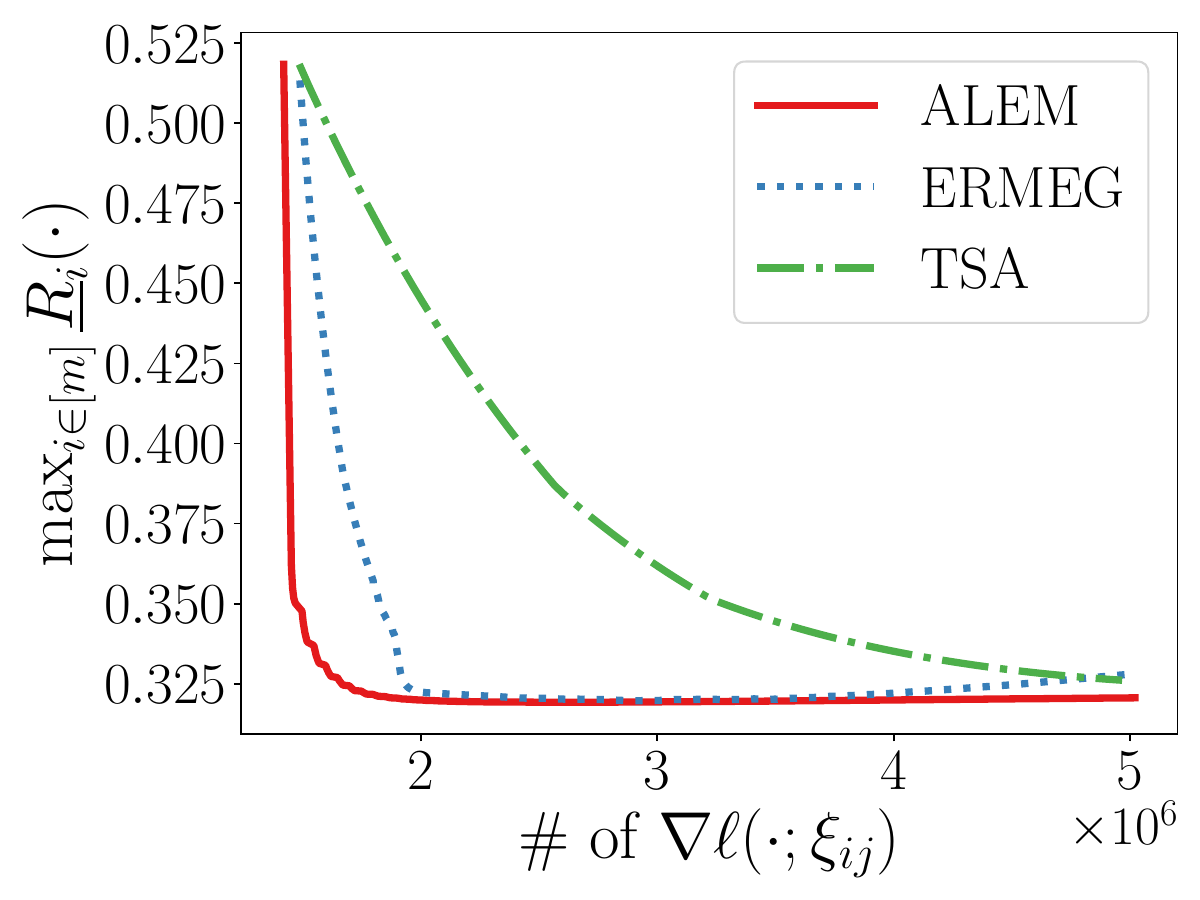}\label{fig:synthetic_test-mero}}
	\subfigure[\centering Training on CIFAR-100]{\includegraphics[width=.245\textwidth]{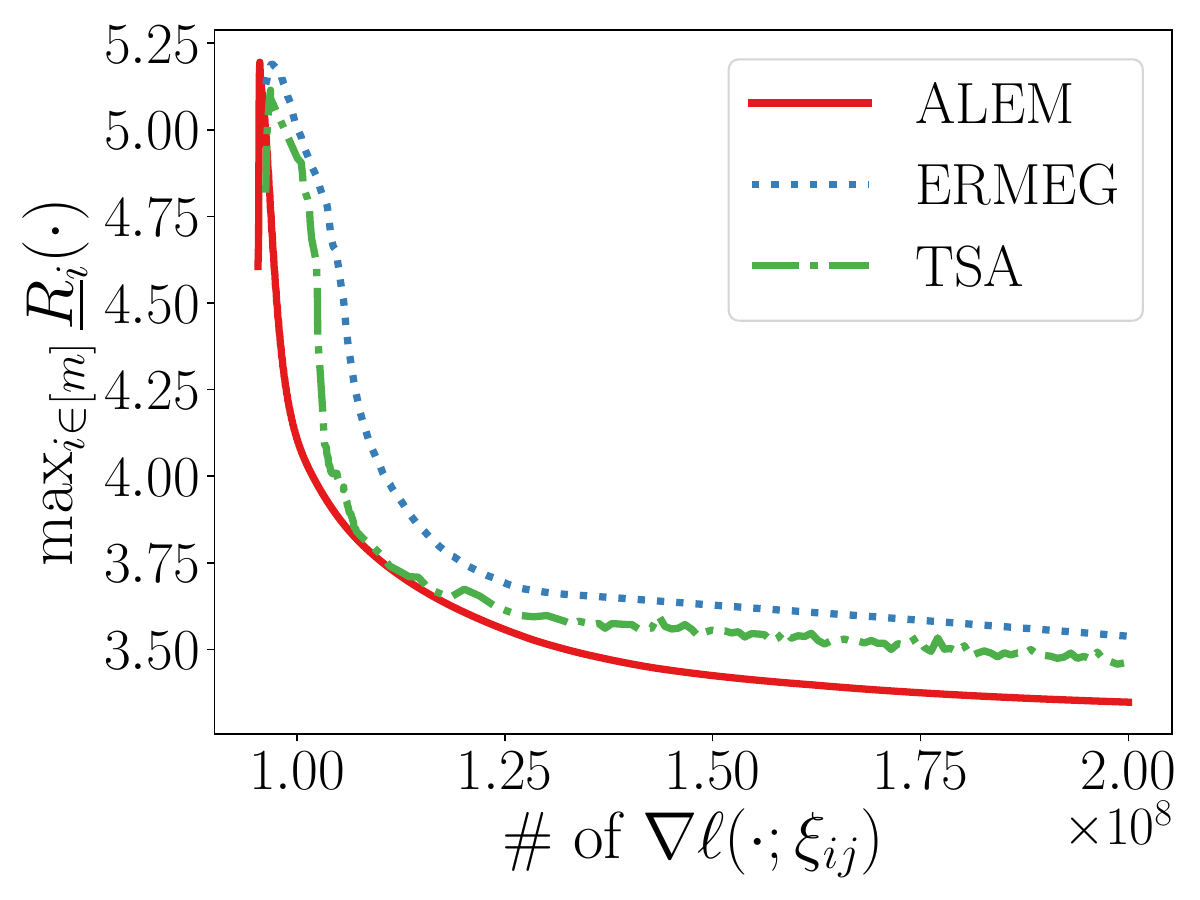}\label{fig:cifar_train-mero}}
	\subfigure[\centering Test on CIFAR-100]{\includegraphics[width=.245\textwidth]{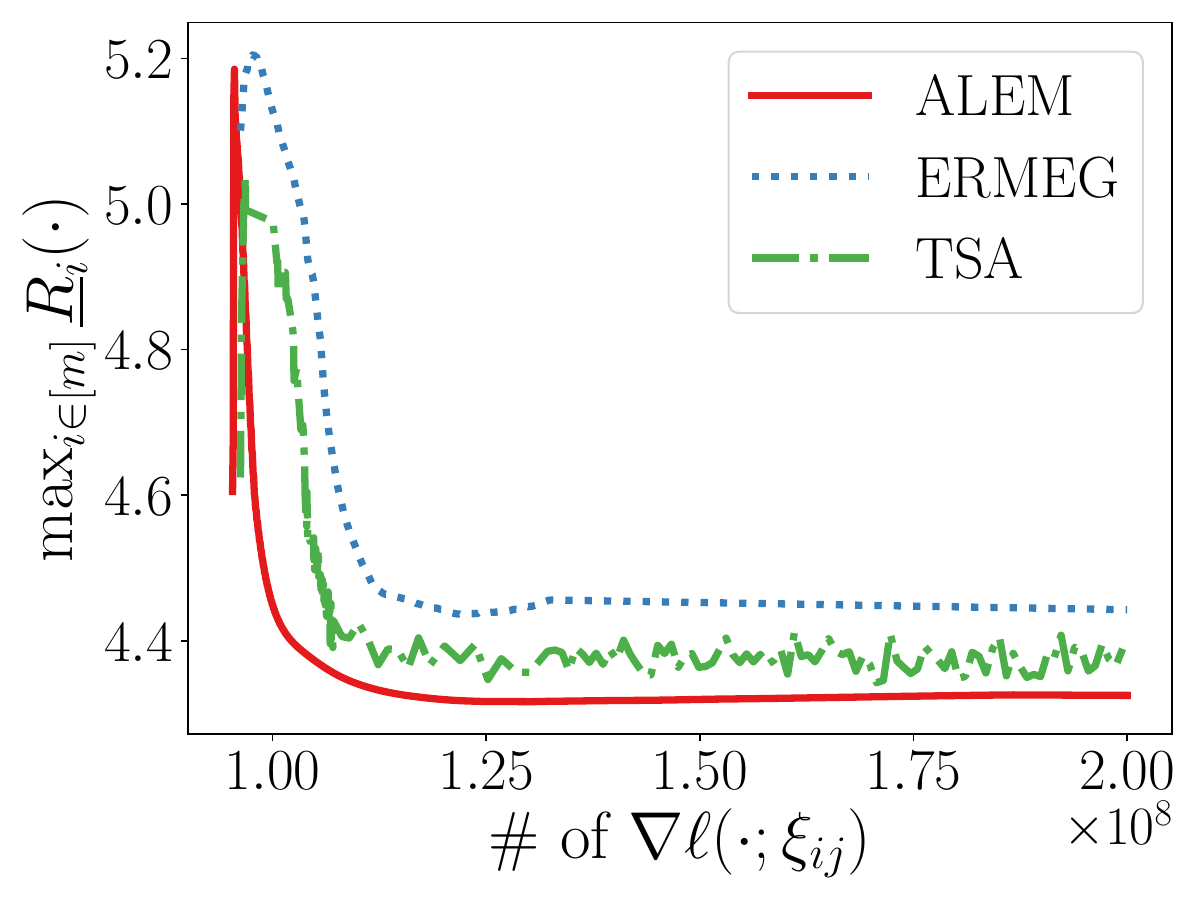}\label{fig:cifar_test-mero}}
	\caption{Comparison of the max excess empirical risk $\max_{i\in[m]}\underline{R}_i(\cdot)$ with respect to the number of stochastic gradient evaluations $\# \text{ of } \nabla\ell(\cdot;\xi_{ij})$ on the synthetic dataset and the CIFAR-100 dataset.\label{fig:synthetic-plus-cifar-mero}}
\end{figure*}

\subsection{Results for Empirical GDRO}


To evaluate the performance measure, we report the maximal empirical risk $\max_{i\in[m]}R_i(\cdot)$ on the training set. In order to show the generalization abilities, we also report the maximal empirical risk on the test set. To evaluate different algorithms, we use the number of stochastic gradient evaluations to reflect the computation complexity. 

We compare our algorithm ALEG with SMD~\citep{nemirovski2009robust}, MPVR~\citep{alacaoglu2022stochastic} and AL-SVRE~\citep{luo2021near}. The results are shown in~\cref{fig:synthetic-plus-cifar}. We emphasize that our implementation of ALEG supports changeable hyperparameters in terms of $\cbrac{K_s},\cbrac{\alpha_k^s},\cbrac{\eta_k^s}$, which helps to boost its overall performance according to our observations. However, to fairly compare ALEG with others, we stick to the settings in~\cref{thm:gdro-convergence}, i.e., constant $\cbrac{K_s},\cbrac{\alpha_k^s}$ and alterable $\cbrac{\eta_k^s}$.

On the synthetic dataset, ALEG demonstrates notably faster convergence compared to other methods, in terms of both the training set and test set. Additionally, it achieves a lower maximal empirical risk than the alternatives. For the CIFAR-100 dataset, finding a single classifier becomes challenging because the computation complexity is proportional to the number of groups $m$. Under such a challenging task, ALEG still performs significantly better than others. 

On the training set of CIFAR-100, ALEG also significantly outperforms other methods in terms of convergence speed and quality of the solution. On the CIFAR-100 test set, ALEG demonstrates faster convergence and greater stability than the other three algorithms, showcasing its superior generalization capability. While SMD and MPVR behave similarly on the training set, MPVR demonstrates its robustness on the test set, indicating variance reduction has an edge over pure stochastic algorithms like SMD. 


\subsection{Results for Empirical MERO}

We compare ALEM with ERMEG~\citep[Section~6]{agarwal2022minimax} and TSA~\citep[Algorithm~2]{zhang2023efficient}. Similarly, we report the max excess empirical risk $\max_{i\in[m]} \underline{R}_i(\cdot)$ on both the training set and the test set. The results are presented in~\cref{fig:synthetic-plus-cifar-mero}.

Note that all three algorithms follow the two-stage schema, which means they have to estimate $m$ minimal empirical risks before solving the approximated empirical GDRO problem. This explains why the $x$-axis of the four figures in~\cref{fig:synthetic-plus-cifar-mero} doesn't start at zero. The result in~\cref{fig:synthetic_train-mero,fig:cifar_train-mero} shows that our algorithm converges more rapidly and is more stable than ERMEG or TSA. The result in~\cref{fig:synthetic_test-mero,fig:cifar_test-mero} validates the strong generalization ability of ALEM.

In terms of the running time, we also observe that our algorithm performs significantly faster than TSA and ERMEG, especially for the latter. ERMEG not only needs to estimate the minimal empirical risk for all groups, but it also runs an ERM oracle every iteration, which is highly time-consuming and impractical.

\section{Conclusion}
We develop a variance-reduced stochastic mirror prox algorithm called ALEG to target the empirical GDRO problem. 
Specifically, we propose a simple yet effective group sampling strategy and a novel variable routine to reduce the complexity and enhance the algorithmic flexibility, respectively. 
ALEG attains an $\OO\brac{\frac{m\sqrt{\bar n\ln{m}}}{\varepsilon}}$ computation complexity, which improves the state-of-the-art result by a factor of $\sqrt{m}$. 

Based on ALEG, we develop a two-stage algorithm called ALEM to cope with the empirical MERO problem. In the first stage, ALEM runs ALEG to estimate the minimal empirical risk for all groups. In the second stage, ALEM utilizes ALEG to solve an approximate empirical MERO problem. We also establish an $\tilde{\OO}\brac{\frac{m\sqrt{\bar n\ln{m}}}{\varepsilon}}$ computation complexity, improving over the existing methods. Finally, we conduct experiments on the synthetic dataset as well as the real-world dataset to validate the effectiveness of our algorithms.

\section*{Acknowledgements}

This work was partially supported by the National Science and Technology Major Project (2022ZD0114801) and NSFC (62122037, 61921006).

\section*{Impact Statement}

This paper presents work whose goal is to advance the field of Machine Learning. There are many potential societal consequences of our work, none of which we feel must be specifically highlighted here.

\nocite{NEURIPS2023_c242f2b7,mp21ouyang,pmlr-v125-lin20a,pmlr-v119-xie20d,rockafellar2015convex,nesterov2009primal,gao2023distributionally,yazdandoost2023stochastic,chen2022faster,yang2020global,Bertsimas2014RobustSA,hu2018does,shapiro2017distributionally,duchi2021learning,duchi2021statistics,dang2015convergence,nesterov2018lectures,palaniappan2016stochastic,jiang2022multi,jin2021nonconvex,zhao2022accelerated,Lee2021fastextragradient,mokhtari2020convergence,ouyang2021lower,liu2021first,carmon2019variance,allen2016variance,allen2017linear,sagawa2019distributionally,reddi2016stochastic,condat2016fast,shalev2009stochastic,namkoong2016stochastic,krizhevsky2009learning,rothblum2021multi,JMLR:v25:21-0264,rakhlin2012making,sadiev2023high,kakade2008generalization,pmlr-v119-lin20a,NIPS2017_186a157b,laguel2021superquantile,laguel2021superquantiles,pillutla2023federated,mehta2023stochastic,mehta2024distributionally}

\bibliography{example_paper}
\bibliographystyle{icml2024}


\newpage
\appendix
\onecolumn
\section{Analysis for Empirical GDRO\label{app:sec:gdro-analysis}}
We present the omitted proofs for empirical GDRO in this section. Firstly, we conduct some necessary technical preparations in~\cref{app:sec:preparations}. Then, we analyze the algorithmic variance-reduced behaviors in~\cref{app:sec:vr-routine} and give an implicit upper bound for the variances of the stochastic gradients along the trajectory. Finally, we proceed to show the convergence and the derived complexity in~\cref{app:sec:expectation-bound}.

\subsection{Preparations\label{app:sec:preparations}}
Here we provide some definitions to facilitate understanding and bring convenience.
\begin{definition}
    (Saddle point) Define any solution to~\eqref{eq:empirical-gdro-wq} as $\z_*=(\w_*;\q_*)$.
\end{definition}

\begin{definition}
    (Martingale difference sequence) Define $\Delta_k^s=\g_k^s-\nabla F(\z_{k+1/2}^s)$.\label{def:Delta_k^s}
\end{definition}

\begin{definition}
    (Periodically decaying sequence) We call $\{\alpha_{k-1}^{s-1}\}_{s\in[S],k\in[K_s]}$ a periodically decaying sequence if it satisfies: (i) $\alpha_{K_s-1}^s\le \alpha_0^{s+1}$; (ii) $\sumks \alpha_{k-1}^s\le\sum_{k=1}^{K_{s-1}}\alpha_{k-1}^{s-1}$. \label{def:periodically-decaying-sequence}
\end{definition}

\begin{definition}
    (Lyapunov function) For Bregman divergence defined in~\eqref{eq:def-z-bregman-divergence}, we define
    \begin{equation}
        \Psi_s(\z):=(1-\alpha_0^s) B(\z,\z_0^s)+\sum_{k=1}^{K_{s-1}} \alpha_{k-1}^{s-1}B(\z,\z_k^{s-1}).
    \end{equation}\label{def:lyapunov}
\end{definition}

\begin{remark}
    For~\cref{def:Delta_k^s}, we know that $\Delta_k^s$ equals zero in conditional expectation, which is verified in~\cref{prop:unbiased-sg}.~\cref{def:periodically-decaying-sequence} could be easily satisfied both in theoretical analysis and real-world experimental settings. Intuitively, our design of $\{\alpha_{k}^{s}\}_{s\in[S],k\in[K_s]}$ is inspired by cyclical learning rate \citep{smith2017cyc}, with the property of periodic decay.
\end{remark}

 In the following analysis, we stick to the choice $\norm{\cdot}_q=\norm{\cdot}_1$ not just for simplicity, but also more practical when it comes to implementation. This choice enables the mirror descent step for $\q$ to have a closed-form solution. Note that in this case, we have $D_q^2=\ln{m}$ and $\alpha_q=1$. Without loss of generality, we also assume $\alpha_w=1$. There are some important facts shown in the following lemmas.
 
\begin{proposition}
    For any $\z_1,\z_2\in\Z$, $\norm{\z_1-\z_2}\le 2\sqrt{2}$.\label{prop:z-norm-boundness}
\end{proposition}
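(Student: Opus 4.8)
The plan is to bound the norm $\norm{\z_1-\z_2}$ by controlling its two components separately, using the definition of $\norm{\cdot}$ in~\eqref{eq:normz} together with~\cref{asp:boundness}. Writing $\z_1=(\w_1;\q_1)$ and $\z_2=(\w_2;\q_2)$, I would start from
\begin{equation*}
\norm{\z_1-\z_2}^2=\frac{\alpha_w}{2D_w^2}\norm{\w_1-\w_2}_w^2+\frac{\alpha_q}{2D_q^2}\norm{\q_1-\q_2}_q^2,
\end{equation*}
and treat the primal term and the dual term one at a time.

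For the primal (w) term, the key is to relate $\norm{\w_1-\w_2}_w^2$ to the diameter bound $D_w^2$ from~\cref{asp:boundness}. Since $\psi_w$ is $\alpha_w$-strongly convex w.r.t.~$\norm{\cdot}_w$, strong convexity gives a lower bound of the form $\frac{\alpha_w}{2}\norm{\w_1-\w_2}_w^2$ against a Bregman divergence or difference of $\psi_w$ values, and the diameter assumption $\max_\w\psi_w(\w)-\min_\w\psi_w(\w)\le D_w^2$ caps that quantity. The upshot I expect is $\frac{\alpha_w}{2D_w^2}\norm{\w_1-\w_2}_w^2\le$ a small constant (at most $4$, giving the $2\sqrt 2$ bound after combining the two halves). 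For the dual (q) term, I would use that $\q_1,\q_2\in\Delta_m$, so with $\norm{\cdot}_q=\norm{\cdot}_1$ we have $\norm{\q_1-\q_2}_1\le 2$ directly, and recalling $\alpha_q=1$, $D_q^2=\ln m$ (as fixed in the paragraph preceding the proposition), the dual term is $\frac{1}{2\ln m}\norm{\q_1-\q_2}_1^2\le\frac{4}{2\ln m}=\frac{2}{\ln m}$. One then combines the bounds so that the total is at most $8$, yielding $\norm{\z_1-\z_2}\le 2\sqrt 2$.

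The main obstacle is making the primal bound rigorous and matching the exact constant $2\sqrt 2$: the diameter assumption bounds a difference of $\psi_w$ values, not $\norm{\w_1-\w_2}_w^2$ directly, so I must invoke strong convexity carefully to pass from the function-value gap to the norm, and track how the normalizing factors $\frac{\alpha_w}{2D_w^2}$ and $\frac{\alpha_q}{2D_q^2}$ interact with the $4$ coming from $\norm{\q_1-\q_2}_1^2$. Concretely, I expect the clean statement to be that each of the two scaled terms is bounded by $4$, so their sum is at most $8$ and the square root is $2\sqrt 2$. The delicate point is confirming the constant on the $\w$-side is exactly $4$; if strong convexity only yields a factor-2 relationship between the function-value gap and $\frac{\alpha_w}{2}\norm{\w_1-\w_2}_w^2$, the arithmetic should still land on the stated $2\sqrt 2$, but I would verify this normalization explicitly before declaring the bound tight.
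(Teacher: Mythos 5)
Your plan is correct in substance and arrives at the right constants, but it decomposes the bound component-wise, whereas the paper argues once in the joint space. The paper sets $\z_0=\argmin_{\z\in\Z}\psi(\z)$ and observes $B(\z,\z_0)\le\psi(\z)-\psi(\z_0)\le 1$ --- the linear term of the Bregman divergence is nonnegative in the right direction by first-order optimality at the minimizer, and the $1$ comes from the normalizations $\frac{1}{2D_w^2},\frac{1}{2D_q^2}$ in~\eqref{eq:psi-z} together with~\cref{asp:boundness}; then $1$-strong convexity of $\psi$ w.r.t.\ the norm in~\eqref{eq:normz} gives $\norm{\z_i-\z_0}\le\sqrt{2B(\z_i,\z_0)}\le\sqrt{2}$, and the triangle inequality yields $2\sqrt{2}$. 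The delicate point you flagged is real, and it is resolved by exactly this anchoring device applied inside the $\w$-component: a two-point use of strong convexity cannot work, because for arbitrary $\w_1,\w_2$ the gradient term $\braket{\nabla\psi_w(\w_2),\w_1-\w_2}$ has no sign control (e.g.\ with $\psi_w(\w)=\frac{1}{2}\norm{\w}_2^2$ and $\w_1=-\w_2$ the function-value gap is $0$ while $\norm{\w_1-\w_2}_w$ is not). Anchoring at $\w_0=\argmin_{\w\in\W}\psi_w(\w)$ gives $\frac{\alpha_w}{2}\norm{\w-\w_0}_w^2\le\psi_w(\w)-\psi_w(\w_0)\le D_w^2$, so the triangle inequality yields $\norm{\w_1-\w_2}_w^2\le 8D_w^2/\alpha_w$, making the scaled primal term at most $4$ --- exactly the constant you predicted; with your simplex bound the dual term is $\frac{2}{\ln m}\le 4$ for $m\ge 2$ (and the same anchoring argument gives $\le 4$ in general), so the sum is at most $8$ and $\norm{\z_1-\z_2}\le 2\sqrt{2}$. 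What each route buys: the paper's joint argument is shorter because the anchoring and the triangle inequality are performed once at the level of $\psi$ and $B$; your component-wise version costs one extra application of the same trick but reveals that the dual contribution $\frac{2}{\ln m}$ is far below the worst case, and it would survive in settings where only per-block diameters are available.
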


\begin{proof}
Define $\z_0=\argmin_{\z\in\Z}\psi(\z)$. By the fact that $\max_{z\in\Z}B(\z,\z_0)\le \max_{z\in\Z}\psi(\z)-\min_{z\in\Z}\psi(\z)\le 1$, we have
    \begin{equation}
        \begin{aligned}
            \norm{\z_1-\z_2}&\le \norm{\z_1-\z_0}+\norm{\z_2-\z_0}\le \sqrt{2B(\z_1,\z_0)}+\sqrt{2B(\z_2,\z_0)}\\&\le \sqrt{2\max_{z\in\Z}B(\z,\z_0)}+\sqrt{2\max_{z\in\Z}B(\z,\z_0)}\le 2\sqrt{2}.
        \end{aligned}
    \end{equation}
\end{proof}

\begin{proposition}
    (Unbiasedness of the merged stochastic gradient operator) The stochastic gradient $\nabla F(\z^s;\xi_k^s)$ defined in~\eqref{eq:stochastic-gradient} is unbiased.\label{prop:unbiased-sg}
\end{proposition}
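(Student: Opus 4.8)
The plan is to compute the conditional expectation $\E_{\xi_k^s}[\nabla F(\z^s;\xi_k^s)]$ directly and verify that it equals the full gradient $\nabla F(\z^s)$ in~\eqref{eq:z-fullgradient}. Since the merged stochastic gradient in~\eqref{eq:stochastic-gradient} is the concatenation of a primal block (the $\w$-gradient) and a dual block (the negated $\q$-gradient), and since the snapshot $\z^s=(\w^s;\q^s)$ is deterministic at the point where the expectation over $\xi_k^s$ is taken, I would treat the two blocks separately, using that the expectation acts componentwise.

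For the primal block, linearity of expectation gives $\E_{\xi_k^s}\sqbrac{\summ\q_i^s\nabla\ell(\w^s;\xi_{k,i}^s)}=\summ\q_i^s\E_{\xi_{k,i}^s}\sqbrac{\nabla\ell(\w^s;\xi_{k,i}^s)}$, where I have used that each summand depends only on its own group's sample $\xi_{k,i}^s$. By the group sampling rule~\eqref{eq:per-group-sampling}, $\xi_{k,i}^s$ is uniform over $\{\xi_{ij}\}_{j=1}^{n_i}$, so $\E_{\xi_{k,i}^s}\sqbrac{\nabla\ell(\w^s;\xi_{k,i}^s)}=\frac{1}{n_i}\sumn\nabla\ell(\w^s;\xi_{ij})=\nabla R_i(\w^s)$, where the last equality interchanges the finite sum with the gradient and invokes the definition of $R_i$. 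This recovers exactly the primal block $\summ\q_i^s\nabla R_i(\w^s)$ of~\eqref{eq:z-fullgradient}.

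For the dual block, the same uniform-sampling identity applied to the scalar losses yields $\E_{\xi_{k,i}^s}\sqbrac{\ell(\w^s;\xi_{k,i}^s)}=\frac{1}{n_i}\sumn\ell(\w^s;\xi_{ij})=R_i(\w^s)$ for each coordinate $i$, so the expected dual block equals $-[R_1(\w^s),\dots,R_m(\w^s)]^T$, again matching~\eqref{eq:z-fullgradient}. Concatenating the two blocks gives $\E_{\xi_k^s}[\nabla F(\z^s;\xi_k^s)]=\nabla F(\z^s)$, which is the claimed unbiasedness.

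I do not anticipate a genuine obstacle here: the statement is a direct consequence of linearity of expectation together with the definition of uniform group sampling. The only points requiring minor care are (i) that the expectation factorizes coordinatewise because each sampled index $\xi_{k,i}^s$ enters only its own group's term, so no joint independence across groups beyond the per-coordinate marginals is even needed, and (ii) the harmless interchange of the finite summation $\frac{1}{n_i}\sumn$ with the gradient operator in the primal block, which is valid for a finite sum of differentiable functions.
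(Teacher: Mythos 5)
Your proof is correct and follows essentially the same route as the paper's: both invoke the uniform group sampling rule~\eqref{eq:per-group-sampling} to get $\E[\ell(\w;\xi_{k,i}^s)]=R_i(\w)$ and $\E[\nabla\ell(\w;\xi_{k,i}^s)]=\nabla R_i(\w)$ per group, then conclude by linearity of expectation. You simply spell out the blockwise bookkeeping that the paper compresses into ``easy to deduce,'' so there is nothing to change.
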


\begin{proof}
    From our group sampling strategy defined in~\eqref{eq:per-group-sampling}, we have \begin{equation}
        \forall\z\in\Z,\forall\ i\in[m]:\quad\E\sqbrac{\ell(\w;\xi^s_{k,i})}=R_i(\w),\quad\E\sqbrac{\nabla\ell(\w;\xi^s_{k,i})}=\nabla R_i(\w).
    \end{equation}
    From the linearity of expectation, it's easy to deduce that $\E\sqbrac{\nabla F(\z;\xi_k^s)}=\nabla F(\z)$ for any $\z\in\Z$. Hence the unbiasedness of the stochastic gradient $\nabla F(\z^s;\xi_k^s)$ is proved naturally. 
\end{proof}


\begin{proof}[Proof of~\cref{lem:Lipschitz}]
    Pick two arbitrary points $\z^+=(\w^+,\q^+)\in\Z,z=(\w,\q)\in\Z$. First, we bound the gradient in $\w$ as follows:
    \begin{equation}
        \begin{aligned}
            &\norm{\nabla_\w F(\z^+;\xi_{k}^s)-\nabla_\w F(\z;\xi_{k}^s)}_{w,*}^2\\=&\norm{\summ \q^+_i[\nabla\ell(\w^+;\xi_{k,i}^s)-\nabla\ell(\w;\xi_{k,i}^s)]+\summ (\q^+_i-\q_i)\nabla\ell(\w;\xi_{k,i}^s)}_{w,*}^2\\\le&2\norm{\summ \q^+_i[\nabla\ell(\w^+;\xi_{k,i}^s)-\nabla\ell(\w;\xi_{k,i}^s)]}_{w,*}^2+2\norm{\summ (\q^+_i-\q_i)\nabla\ell(\w;\xi_{k,i}^s)}_{w,*}^2\\\le&2\summ \q^+_i\norm{\nabla\ell(\w^+;\xi_{k,i}^s)-\nabla\ell(\w;\xi_{k,i}^s)}_{w,*}^2+2\brac{\summ|\q^+_i-\q_i|\norm{\nabla\ell(\w;\xi_{k,i}^s)}_{w,*}}^2\\\le&2\summ\q^+_i L^2\norm{\w^+-\w}_w^2+2\brac{\summ|\q^+_i-\q_i|G}^2\\=&2L^2\norm{\w^+-\w}_w^2+2G^2\norm{\q^+-\q}_1^2.
        \end{aligned}
    \end{equation}
    The second inequality uses~\cref{asp:smooth-Lipschitz}. Next, we bound the gradient in $\q$. Again from~\cref{asp:smooth-Lipschitz}, we have
     \begin{equation}
        \begin{aligned}
            \forall\ i\in[m]:,\quad \ell(\w^+;\xi_{k,i}^s)-\ell(\w;\xi_{k,i}^s)\le G\norm{\w^+-\w}_{w}.
        \end{aligned}\label{eq:iGw^+-w}
    \end{equation}   
    Notice that $\nabla_\q F(\z;\xi_k^s)=\left[\ell(\w;\xi_{k,1}^s),\cdots,\ell(\w;\xi_{k,m}^s)\right]^T$. Squaring it on both sides of~\eqref{eq:iGw^+-w} and taking maximum over all $i\in[m]$, we have
    \begin{equation}
        \norm{\nabla_\q F(\z^+;\xi_k^s)-\nabla_\q F(\z;\xi_k^s)}_\infty^2=\max_{i\in[m]}[\ell(\w^+;\xi_{k,i}^s)-\ell(\w;\xi_{k,i}^s)]^2\le G^2\norm{\w^+-\w}_w^2.
    \end{equation}
    By merging the two component's gradients, we get the desired result by simple calculation:
    \begin{equation}
        \begin{aligned}
            &\norm{\nabla F(\z^+;\xi_k^s)-\nabla F(\z;\xi_k^s)}_*^2\\=&2D_w^2\norm{\nabla_\w F(\z^+;\xi_{k}^s)-\nabla_\w F(\z;\xi_{k}^s)}_{w,*}^2+2\ln{m}\norm{\nabla_\q F(\z^+;\xi_k^s)-\nabla_\q F(\z;\xi_k^s)}_\infty^2\\\le&(4D_w^2L^2+2G^2\ln m)\norm{\w^+-\w}_w^2+4D^2_wG^2\norm{\q^+-\q}_1^2\\\le&\frac{1}{2D^2_w}\sqbrac{4D_w^2(2D_w^2L^2+G^2\ln{m})\norm{\w^+-\w}_w^2}+\frac{1}{2\ln{m}}\sqbrac{4D_w^2(2G^2\ln{m})\norm{\q^+-\q}_1^2}\\\le&\frac{L_z^2}{2D^2_w}\norm{\w^+-\w}_w^2+\frac{L_z^2}{2\ln{m}}\norm{\q^+-\q}_1^2\\=&L_z^2\norm{\z^+-\z}^2.
        \end{aligned}\label{eq:simple-merge}
    \end{equation}
\end{proof}

\begin{lemma} (Optimality condition for mirror descent)
    For any $\g\in\mathcal{E}^*\times\R^m$, let $\z^{t+1}=\argmin_{\z\in\Z}\{\braket{\g,\z}+\alpha B(\z,\z_1)+(1-\alpha)B(\z,\z_2)\}$. It holds that
    \begin{equation}\begin{aligned}
        \braket{\g,\z^{t+1}-\z}\le-B(\z,\z^{t+1})+\alpha\sqbrac{B(\z,\z_1)-B(\z^{t+1},\z_1)}+(1-\alpha)\sqbrac{B(\z,\z_2)-B(\z^{t+1},\z_2)},\quad \forall\ \z\in\Z.
    \end{aligned}\end{equation}
    \label{lem:opt-mirror-descent}
\end{lemma}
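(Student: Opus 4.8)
The plan is to apply the first-order optimality condition for the constrained minimization defining $\z^{t+1}$ and then convert the resulting variational inequality into a combination of Bregman divergences via the three-point identity. Write the objective as $\phi(\z):=\braket{\g,\z}+\alpha B(\z,\z_1)+(1-\alpha)B(\z,\z_2)$, which is convex on $\Z$. Differentiating the definition of the Bregman divergence in~\eqref{eq:def-z-bregman-divergence} with respect to its first argument gives $\nabla_\z B(\z,\z^o)=\nabla\psi(\z)-\nabla\psi(\z^o)$, so that
\[
\nabla\phi(\z)=\g+\alpha\brac{\nabla\psi(\z)-\nabla\psi(\z_1)}+(1-\alpha)\brac{\nabla\psi(\z)-\nabla\psi(\z_2)}.
\]

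Since $\z^{t+1}$ minimizes $\phi$ over the convex set $\Z$ and lies in $\Z^o$ where $\psi$ is differentiable, the first-order optimality condition yields $\braket{\nabla\phi(\z^{t+1}),\z-\z^{t+1}}\ge 0$ for every $\z\in\Z$. Rearranging to isolate the gradient term, this reads
\[
\braket{\g,\z^{t+1}-\z}\le \alpha\braket{\nabla\psi(\z^{t+1})-\nabla\psi(\z_1),\z-\z^{t+1}}+(1-\alpha)\braket{\nabla\psi(\z^{t+1})-\nabla\psi(\z_2),\z-\z^{t+1}}.
\]

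The next step is the three-point identity for Bregman divergences, namely $\braket{\nabla\psi(\z^{t+1})-\nabla\psi(\z_i),\z-\z^{t+1}}=B(\z,\z_i)-B(\z,\z^{t+1})-B(\z^{t+1},\z_i)$ for $i\in\{1,2\}$, which follows directly by expanding all three divergences from~\eqref{eq:def-z-bregman-divergence} and cancelling the $\psi$-values and the linear terms. Substituting both instances into the inequality above and using $\alpha+(1-\alpha)=1$ to merge the two copies of $B(\z,\z^{t+1})$ into a single $-B(\z,\z^{t+1})$ produces exactly the claimed bound.

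I do not anticipate a substantive obstacle: this is a standard Bregman-proximal optimality argument. The only points requiring mild care are the direction of the first-order inequality (the minimizer makes $\phi$ nondecreasing along every feasible direction) and the fact that the minimizer lies in the relative interior $\Z^o$, so that $\nabla\psi(\z^{t+1})$ is well-defined; both are ensured by the strong convexity and differentiability built into the distance-generating function $\psi$ of~\eqref{eq:psi-z}.
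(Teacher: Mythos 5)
Your proposal is correct and follows essentially the same route as the paper's proof: first-order optimality of $\z^{t+1}$ over the convex set $\Z$ (the paper phrases this via the normal cone, you via the variational inequality $\braket{\nabla\phi(\z^{t+1}),\z-\z^{t+1}}\ge 0$, which are equivalent), followed by the three-point identity for Bregman divergences applied to both $\z_1$ and $\z_2$ and merging the two $B(\z,\z^{t+1})$ terms using $\alpha+(1-\alpha)=1$. Your sign conventions are in fact cleaner than the paper's displayed intermediate inequality, which contains a minor sign typo ($\g$ in place of $-\g$) that your derivation implicitly corrects.
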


\begin{proof}
    The proof is straightforward by applying the first-order optimality condition for the definition of $\z^{t+1}$. Then a direct application of three point equality of Bregman divergence yields the result. By the first order optimality of $\z^{t+1}$,
    \begin{equation}
        \mathbf{0}\in \g+\alpha\sqbrac{\nabla\psi(\z^{t+1})-\nabla\psi(\z_{1})}+\brac{1-\alpha}\sqbrac{\nabla\psi(\z^{t+1})-\nabla\psi(\z_{2})}+N_{\Z}(\z^{t+1})
    \end{equation}
    where $N_{\Z}(\z^{t+1}):=\{\g\in\mathcal{E}^*\times\R^m|\braket{\g,\z-\z^{t+1}}\le0,\forall\z\in\Z\}$ is the normal cone (subdifferential of indicator function) at point $\z^{t+1}$ for convex set $\Z$. The above relation further implies\begin{equation}
        \braket{\g+\alpha\sqbrac{\nabla\psi(\z_{1})-\nabla\psi(\z^{t+1})}+\brac{1-\alpha}\sqbrac{\nabla\psi(\z_{2})-\nabla\psi(\z^{t+1})},\z-\z^{t+1}}\le0,\ \forall\ \z\in\Z.\label{eq:subdifferential}
    \end{equation}
    According to the generalized triangle inequality for Bregman divergence, we have
    \begin{equation}
        \braket{\nabla\psi(\z_{i})-\nabla\psi(\z^{t+1}),\z-\z^{t+1}}=B(\z,\z^{t+1})+B(\z^{t+1},\z_i)-B(\z,\z_i),\quad\forall\ i\in\{1,2\}.\label{eq:bregman-triangle}
    \end{equation}
    Applying~\eqref{eq:bregman-triangle} to the LHS of~\eqref{eq:subdifferential} to derive that for any $\z\in\Z$,
    \begin{equation}
    \begin{aligned}
        &\braket{\g,\z-\z^{t+1}}+\alpha\sqbrac{B(\z,\z^{t+1})+B(\z^{t+1},\z_1)-B(\z,\z_1)}+(1-\alpha)\sqbrac{B(\z,\z^{t+1})+B(\z^{t+1},\z_2)-B(\z,\z_2)}\\=&\braket{\g,\z-\z^{t+1}}+B(\z,\z^{t+1})+\alpha\sqbrac{B(\z^{t+1},\z_1)-B(\z,\z_1)}+(1-\alpha)\sqbrac{B(\z^{t+1},\z_2)-B(\z,\z_2)}\le 0,
    \end{aligned}        
    \end{equation}
    which concludes our proof by a simple rearrangement.
\end{proof}

\subsection{Variance-Reduced Routine\label{app:sec:vr-routine}}
\begin{lemma}
    If $\{\alpha_k^s\}_{s\in[S]^0,k\in[K_s]^0}$ is a periodically decaying sequence as defined in~\cref{def:periodically-decaying-sequence}, for any $\z\in\Z$: \begin{equation}\begin{aligned}
        &\sumkzs \eta_k^s\braket{\nabla F(\z_{k+1/2}^s),\z_{k+1/2}^s-\z}\\\le& \Psi_s(\z)-\Psi_{s+1}(\z)+\sumkzs\sqbrac{\eta_k^s\braket{\Delta_k^s,\z-\z_{k+1/2}^s}+\frac{(\eta_k^sL_z)^2-\alpha_k^s}{2}\norm{\z_{k+1/2}^s-\z^s}^2}.
    \end{aligned}
    \end{equation}
    \label{lem:s-epoch}
\end{lemma}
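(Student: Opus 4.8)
The plan is to prove a single per-inner-iteration inequality and then sum it over $k=0,\dots,K_s-1$, so that the Bregman terms collapse into $\Psi_s(\z)-\Psi_{s+1}(\z)$. First I would invoke the mirror-descent optimality condition \cref{lem:opt-mirror-descent} for both prox updates \eqref{eq:z_k+1/2^s} and \eqref{eq:z_k+1^s}, which share the anchors $\bar\z^s$ and $\z_k^s$ with weights $\alpha_k^s$ and $1-\alpha_k^s$. I apply it to the extrapolation step (gradient $\eta_k^s\nabla F(\z^s)$) at the comparator $\z_{k+1}^s$, and to the update step (gradient $\eta_k^s\g_k^s$) at a general comparator $\z$.

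The core of the argument is the algebraic decomposition, obtained from $\nabla F(\z_{k+1/2}^s)=\g_k^s-\Delta_k^s$ (\cref{def:Delta_k^s}) and the insertion of $\z_{k+1}^s$:
\[
\begin{aligned}
\eta_k^s\braket{\nabla F(\z_{k+1/2}^s),\z_{k+1/2}^s-\z}=\ &\eta_k^s\braket{\g_k^s,\z_{k+1}^s-\z}+\eta_k^s\braket{\nabla F(\z^s),\z_{k+1/2}^s-\z_{k+1}^s}\\
&+\eta_k^s\braket{\g_k^s-\nabla F(\z^s),\z_{k+1/2}^s-\z_{k+1}^s}-\eta_k^s\braket{\Delta_k^s,\z_{k+1/2}^s-\z}.
\end{aligned}
\]
I would bound the first two inner products by the two optimality conditions; adding them cancels every Bregman divergence anchored at $\z_{k+1}^s$ and leaves, on one hand, the comparator block $\alpha_k^sB(\z,\bar\z^s)+(1-\alpha_k^s)B(\z,\z_k^s)-B(\z,\z_{k+1}^s)$ reserved for telescoping, and on the other hand the negative curvature $-\alpha_k^sB(\z_{k+1/2}^s,\bar\z^s)-(1-\alpha_k^s)B(\z_{k+1/2}^s,\z_k^s)-B(\z_{k+1}^s,\z_{k+1/2}^s)$, of which the middle term is nonpositive and discarded. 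The estimator is designed so that $\g_k^s-\nabla F(\z^s)=\nabla F(\z_{k+1/2}^s;\xi_k^s)-\nabla F(\z^s;\xi_k^s)$ (cf.~\eqref{eq:g_k^s}); hence \cref{lem:Lipschitz} gives $\norm{\g_k^s-\nabla F(\z^s)}_*\le L_z\norm{\z_{k+1/2}^s-\z^s}$, and Cauchy--Schwarz followed by Young's inequality bounds the third inner product by $\tfrac{(\eta_k^sL_z)^2}{2}\norm{\z_{k+1/2}^s-\z^s}^2+\tfrac12\norm{\z_{k+1/2}^s-\z_{k+1}^s}^2$. The last summand is cancelled by $-B(\z_{k+1}^s,\z_{k+1/2}^s)\le-\tfrac12\norm{\z_{k+1}^s-\z_{k+1/2}^s}^2$ ($1$-strong convexity of $\psi$), while the surviving curvature $-\alpha_k^sB(\z_{k+1/2}^s,\bar\z^s)\le-\tfrac{\alpha_k^s}{2}\norm{\z_{k+1/2}^s-\bar\z^s}^2$ provides, after the snapshot reconciliation discussed below, the $-\tfrac{\alpha_k^s}{2}\norm{\z_{k+1/2}^s-\z^s}^2$ term that forms the coefficient $\tfrac{(\eta_k^sL_z)^2-\alpha_k^s}{2}$.

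It then remains to sum over $k$ and telescope the comparator block into $\Psi_s(\z)-\Psi_{s+1}(\z)$. The structural fact that makes this work is that $\nabla\psi(\bar\z^s)$ is the $\alpha$-weighted average of the $\nabla\psi(\z_k^{s-1})$ by \eqref{eq:mirror-snapshot}, so $\bar\z^s$ minimizes $\mathbf{u}\mapsto\sum_{k=1}^{K_{s-1}}\alpha_{k-1}^{s-1}B(\mathbf{u},\z_k^{s-1})$; the generalized triangle equality then yields
\[
\sum_{k=1}^{K_{s-1}}\alpha_{k-1}^{s-1}B(\z,\z_k^{s-1})\ge\brac{\sum_{k=1}^{K_{s-1}}\alpha_{k-1}^{s-1}}B(\z,\bar\z^s).
\]
Property~(ii) of \cref{def:periodically-decaying-sequence} bounds $\sumkzs\alpha_k^s$ by $\sum_{k=1}^{K_{s-1}}\alpha_{k-1}^{s-1}$, so the accumulated $\alpha_k^sB(\z,\bar\z^s)$ terms are dominated by the Bregman block of $\Psi_s(\z)$; the remaining $(1-\alpha_k^s)B(\z,\z_k^s)-B(\z,\z_{k+1}^s)$ telescope along the inner loop and, using $\z_0^{s+1}=\z_{K_s}^s$ together with property~(i) at the epoch boundary, reassemble exactly into $-\Psi_{s+1}(\z)$ with no positive residual.

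The step I expect to be the main obstacle is reconciling the two snapshots. The variance term forces the curvature in the final bound to be anchored at the \emph{primal} snapshot $\z^s$, so that a later Young step can cancel it against $\E\norm{\Delta_k^s}_*^2\le L_z^2\norm{\z_{k+1/2}^s-\z^s}^2$, whereas the prox steps are anchored at the \emph{mirror} snapshot $\bar\z^s$, which is precisely what the telescoping above requires. Since $\psi_w$ is quadratic the two snapshots coincide on the $\w$-block ($\w^s=\bar\w^s$), but on the entropy $\q$-block the arithmetic average $\q^s$ and the mirror average $\bar\q^s$ genuinely differ, so passing from $-\tfrac{\alpha_k^s}{2}\norm{\z_{k+1/2}^s-\bar\z^s}^2$ to $-\tfrac{\alpha_k^s}{2}\norm{\z_{k+1/2}^s-\z^s}^2$ is the delicate point; carrying the one-index-shifted weights correctly through both the curvature estimate and the telescoping, so that the per-epoch Bregman budget closes without leftover, is where the bulk of the technical care lies.
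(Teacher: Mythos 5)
Your skeleton coincides with the paper's proof: the two invocations of \cref{lem:opt-mirror-descent} (at comparator $\z_{k+1}^s$ for the extrapolation step and at a generic $\z$ for the update step), the four-term decomposition via $\nabla F(\z_{k+1/2}^s)=\g_k^s-\Delta_k^s$ and $\g_k^s-\nabla F(\z^s)=\nabla F(\z_{k+1/2}^s;\xi_k^s)-\nabla F(\z^s;\xi_k^s)$, the Young-plus-Lipschitz estimate, and the telescoping of the comparator block into $\Psi_s(\z)-\Psi_{s+1}(\z)$ using both properties of~\cref{def:periodically-decaying-sequence} — all of this matches \eqref{eq:add-updates}--\eqref{eq:inner-product-bound} and the final summation in the paper.

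The genuine gap is the step you yourself flag as the ``main obstacle'': it is not merely delicate, the route you sketch fails. After applying strong convexity at the mirror snapshot to get $-\alpha_k^s B(\z_{k+1/2}^s,\bar\z^s)\le-\frac{\alpha_k^s}{2}\norm{\z_{k+1/2}^s-\bar\z^s}^2$, there is no valid passage to $-\frac{\alpha_k^s}{2}\norm{\z_{k+1/2}^s-\z^s}^2$: on the entropy block $\bar\q^s$ is a geometric-type average while $\q^s$ is arithmetic, so they genuinely differ, and taking $\z_{k+1/2}^s$ near $\bar\z^s$ shows $\norm{\z_{k+1/2}^s-\bar\z^s}$ can be strictly smaller than $\norm{\z_{k+1/2}^s-\z^s}$ — the needed inequality points the wrong way (indeed $B(\bar\z^s,\bar\z^s)=0<\frac12\norm{\bar\z^s-\z^s}^2$ whenever $\bar\z^s\ne\z^s$). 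The paper never compares the two snapshots directly. Because $\nabla\psi(\bar\z^s)$ is by \eqref{eq:mirror-snapshot} the weighted average of $\nabla\psi(\z_j^{s-1})$, and the difference $B(\z,\z^o)-B(\z',\z^o)$ is \emph{affine} in $\nabla\psi(\z^o)$, the combined quantity $B(\z,\bar\z^s)-B(\z_{k+1/2}^s,\bar\z^s)$ equals \emph{exactly} the weighted average of $B(\z,\z_j^{s-1})-B(\z_{k+1/2}^s,\z_j^{s-1})$, see \eqref{eq:Bregman-linearity}; strong convexity is then applied termwise at the old iterates $\z_j^{s-1}$, and Jensen's inequality — using that $\z^s$ is the \emph{arithmetic} weighted average \eqref{eq:snapshot} of those same iterates with the same weights — yields $-\frac12\norm{\z_{k+1/2}^s-\z^s}^2$ directly, see \eqref{eq:strong-convexity-Jensen}. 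Your version instead splits the difference first: it upper-bounds $B(\z,\bar\z^s)$ by the averaged $B(\z,\z_j^{s-1})$ (discarding the nonnegative term $\sum_j\lambda_jB(\bar\z^s,\z_j^{s-1})$) and then tries to lower-bound $B(\z_{k+1/2}^s,\bar\z^s)$ by a quantity anchored at $\z^s$; the discarded slack is precisely what the curvature estimate needs, so the budget cannot close. The one-index-shifted pairing of \eqref{eq:snapshot} and \eqref{eq:mirror-snapshot} is thus not bookkeeping but the mechanism that makes the variance term come out anchored at the primal snapshot.
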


\begin{proof}
    Using~\cref{lem:opt-mirror-descent} on $\z_{k+1/2}^s$ and taking arbitrary $\z$ as $\z_{k+1}^s$, we have
    \begin{equation}
    \begin{aligned}
        \eta_k^s\braket{\nabla F(\z^s),\z^s_{k+1/2}-\z^s_{k+1}}\le-B(\z^s_{k+1},\z^s_{k+1/2})+\alpha_k^s&\sqbrac{B(\z^s_{k+1},\bar\z^s)-B(\z^s_{k+1/2},\bar\z^s)}\\+(1-\alpha_k^s)&\sqbrac{B(\z^s_{k+1},\z_k^s)-B(\z^s_{k+1/2},\z_k^s)}.
    \end{aligned}
    \end{equation}
    Using~\cref{lem:opt-mirror-descent} on $\z_{k+1}^s$, we have for any $\z\in\Z$,
    \begin{equation}
        \begin{aligned}
            \eta_k^s\braket{\g_k^s,\z_{k+1}^s-\z}\le-B(\z,\z_{k+1}^s)+\alpha_k^s &\sqbrac{B(\z,\bar\z^s)-B(\z_{k+1}^s,\bar\z^s)}\\+(1-\alpha_k^s)&\sqbrac{B(\z,\z_k^s)-B(\z_{k+1}^s,\z_k^s)}.
        \end{aligned}
    \end{equation}
    Adding them together: 
    \begin{equation}
        \begin{aligned}
            &\eta_k^s\braket{\nabla F(\z^s),\z^s_{k+1/2}-\z^s_{k+1}}+\eta_k^s\braket{\g_k^s,\z_{k+1}^s-\z}\\\le &-B(\z^s_{k+1},\z^s_{k+1/2})-B(\z,\z_{k+1}^s)+\alpha_k^s \sqbrac{B(\z,\bar\z^s)-B(\z_{k+1/2}^s,\bar\z^s)}\\&+(1-\alpha_k^s) \sqbrac{B(\z,\z_{k}^s)-B(\z_{k+1/2}^s,\z_{k}^s)}.
        \end{aligned}\label{eq:add-updates}
    \end{equation}
    According to~\eqref{eq:g_k^s} and~\cref{def:Delta_k^s} we have
    \begin{equation}
        \begin{aligned}
            &\eta_k^s\braket{\nabla F(\z_{k+1/2}^s),\z_{k+1/2}^s-\z}\\=&\eta_k^s\braket{\g_k^s,\z_{k+1/2}^s-\z}+\eta_k^s\braket{\nabla F(\z_{k+1/2}^s)-\g_k^s,\z_{k+1/2}^s-\z}\\=&\eta_k^s\braket{\g_k^s,\z_{k+1/2}^s-\z_{k+1}^s}+\eta_k^s\braket{\g_k^s,\z_{k+1}^s-\z}-\eta_k^s\braket{\Delta_k^s,\z_{k+1/2}^s-\z}\\=& \eta_k^s\braket{\nabla F(\z^s),\z^s_{k+1/2}-\z^s_{k+1}}+\eta_k^s\braket{\g_k^s,\z_{k+1}^s-\z}\\&+\eta_k^s\braket{\nabla F(\z_{k+1/2}^s;\xi_k^s)-\nabla F(\z^s;\xi_k^s),\z_{k+1/2}^s-\z_{k+1}^s}-\eta_k^s\braket{\Delta_k^s,\z_{k+1/2}^s-\z}.
        \end{aligned}\label{eq:simple-algebra}
    \end{equation}
    Adding~\eqref{eq:add-updates} and~\eqref{eq:simple-algebra}, we have
    \begin{equation}
        \begin{aligned}
            &\eta_k^s\braket{\nabla F(\z_{k+1/2}^s),\z_{k+1/2}^s-\z}\\\le&\eta_k^s\braket{\nabla F(\z^s;\xi_k^s)-\nabla F(\z_{k+1/2}^s;\xi_k^s),\z_{k+1}^s-\z_{k+1/2}^s}\\&+\alpha_k^s \sqbrac{B(\z,\bar\z^s)-B(\z_{k+1/2}^s,\bar\z^s)}+(1-\alpha_k^s) \sqbrac{B(\z,\z_{k}^s)-B(\z_{k+1/2}^s,\z_{k}^s)}\\&-B(\z_{k+1}^s,\z_{k+1/2}^s)-B(\z,\z_{k+1}^s)+\eta_k^s\braket{\Delta_k^s,\z-\z_{k+1/2}^s}.
        \end{aligned}\label{eq:add2updates}
    \end{equation}
    Applying Young's inequality to the inner product and further using the smoothness of the stochastic gradient (cf.~\cref{lem:Lipschitz}), the following estimation holds:
    \begin{equation}
        \begin{aligned}
            &\eta_k^s\braket{\nabla F(\z^s;\xi_k^s)-\nabla F(\z_{k+1/2}^s;\xi_k^s),\z_{k+1}^s-\z_{k+1/2}^s}
            \\\le&\frac{(\eta_k^s)^2}{2}\norm{\nabla F(\z^s;\xi_k^s)-\nabla F(\z_{k+1/2}^s;\xi_k^s)}^2+\frac{1}{2}\norm{\z_{k+1}^s-\z_{k+1/2}^s}^2
            \\\le&\frac{(\eta_k^sL_z)^2}{2}\norm{\z_{k+1/2}^s-\z^s}^2+\frac{1}{2}\norm{\z_{k+1}^s-\z_{k+1/2}^s}^2.
        \end{aligned}
        \label{eq:inner-product-bound}
    \end{equation}
    Recall the definition of $\nabla \psi(\bar\z^s)$ and use the linearity of Bregman functions to yield:
    \begin{equation}
        B(\z,\bar\z^s)-B(\z_{k+1/2}^s,\bar\z^s)=\brac{\sum_{k=1}^{K_{s-1}}\alpha_{k-1}^{s-1}}^{-1}\sum_{j=1}^{K_{s-1}} \alpha_{j-1}^{s-1}\sqbrac{B(\z,\z^{s-1}_j)-B(\z_{k+1/2}^s,\z^{s-1}_j)}.\label{eq:Bregman-linearity}
    \end{equation}
    According to the definition of $\z^s$, Jensen's Inequality and the strong-convexity of $\psi(\cdot)$, we get
    \begin{equation}
    \begin{aligned}
        &\brac{\sum_{k=1}^{K_{s-1}}\alpha_{k-1}^{s-1}}^{-1}\sum_{j=1}^{K_{s-1}}-\alpha_{j-1}^{s-1}B(\z_{k+1/2}^s,\z^{s-1}_j)\\\le& \brac{\sum_{k=1}^{K_{s-1}}\alpha_{k-1}^{s-1}}^{-1}\sum_{j=1}^{K_{s-1}}-\frac{\alpha_{j-1}^{s-1}}{2}\norm{\z_{k+1/2}^s-\z^{s-1}_j}^2\le-\frac{1}{2}\norm{\z_{k+1/2}^s-\z^{s}}^2,
    \end{aligned}\label{eq:strong-convexity-Jensen}
    \end{equation}
    and
    \begin{equation}
        -B(\z_{k+1}^s,\z_{k+1/2}^s)\le -\frac{1}{2}\norm{\z_{k+1}^s-\z_{k+1/2}^s}^2.\label{eq:strong-convexity}
    \end{equation}
    Combining~\eqref{eq:inner-product-bound}~\eqref{eq:Bregman-linearity}~\eqref{eq:strong-convexity-Jensen}~\eqref{eq:strong-convexity} with~\eqref{eq:add2updates} and casting out $-B(\z_{k+1/2}^s,\z_k^s)$, we have
    \begin{equation}
        \begin{aligned}
            &\eta_k^s\braket{\nabla F(\z_{k+1/2}^s),\z_{k+1/2}^s-\z}\\\le&\frac{(\eta_k^sL_z)^2}{2}\norm{\z_{k+1/2}^s-\z^s}^2+\frac{1}{2}\norm{\z_{k+1}^s-\z_{k+1/2}^s}^2-(1-\alpha_k^s)B(\z_{k+1/2}^s,\z_k^s)\\&+(1-\alpha_k^s) B(\z,\z_{k}^s)-B(\z,\z_{k+1}^s)+\brac{\sum_{k=1}^{K_{s-1}}\alpha_{k-1}^{s-1}}^{-1}\alpha_k^s\sum_{j=1}^{K_{s-1}} \alpha_{j-1}^{s-1}B(\z,\z_j^{s-1})\\&-\frac{1}{2}\norm{\z_{k+1/2}^s-\z_{k+1}^s}^2-\frac{\alpha_k^s}{2}\norm{\z_{k+1/2}^s-\z^s}^2+\eta_k^s\braket{\Delta_k^s,\z-\z_{k+1/2}^s}\\\le&(1-\alpha_k^s) B(\z,\z_{k}^s)-B(\z,\z_{k+1}^s)+\brac{\sum_{k=1}^{K_{s-1}}\alpha_{k-1}^{s-1}}^{-1}\alpha_k^s\sum_{j=1}^{K_{s-1}} \alpha_{j-1}^{s-1}B(\z,\z_j^{s-1})\\&+\eta_k^s\braket{\Delta_k^s,\z-\z_{k+1/2}^s}+\frac{(\eta_k^sL_z)^2-\alpha_k^s}{2}\norm{\z_{k+1/2}^s-\z^s}^2.
        \end{aligned}\label{eq:intermediate-result}
    \end{equation}
    Recall the definition of Lyapunov function in~\cref{def:lyapunov} and periodically decaying sequence in~\cref{def:periodically-decaying-sequence}. Together with the fact that $\z_{K_s}^s=\z_0^{s+1}$, we have
    \begin{equation}\begin{aligned}
        &\sumkzs\sqbrac{(1-\alpha_k^s) B(\z,\z_{k}^s)-B(\z,\z_{k+1}^s)+\brac{\sum_{k=1}^{K_{s-1}}\alpha_{k-1}^{s-1}}^{-1}\alpha_k^s\sum_{j=1}^{K_{s-1}} \alpha_{j-1}^{s-1}B(\z,\z_j^{s-1})}\\=&\sumkzs (1-\alpha_k^s)\sqbrac{B(\z,\z_{k}^s)-B(\z,\z_{k+1}^s)}-\sumkzs\alpha_k^sB(\z,\z_{k+1}^s)+\frac{\sumkzs\alpha_k^s}{\sum_{k=1}^{K_{s-1}}\alpha_{k-1}^{s-1}}\sum_{j=1}^{K_{s-1}} \alpha_{j-1}^{s-1}B(\z,\z_j^{s-1})\\=&(1-\alpha_0^s) B(\z,\z_0^s)-(1-\alpha_{K_s-1}^s) B(\z,\z_{0}^{s+1})-\sumks\alpha_{k-1}^sB(\z,\z_{k}^s)+\frac{\sumks\alpha_{k-1}^s}{\sum_{k=1}^{K_{s-1}}\alpha_{k-1}^{s-1}}\sum_{k=1}^{K_{s-1}} \alpha_{k-1}^{s-1}B(\z,\z_k^{s-1})\\\le&(1-\alpha_0^s) B(\z,\z_0^s)+\sum_{k=1}^{K_{s-1}} \alpha_{k-1}^{s-1}B(\z,\z_k^{s-1})-(1-\alpha_{0}^{s+1}) B(\z,\z_{0}^{s+1})-\sumks\alpha_{k-1}^sB(\z,\z_{k}^s)\\=&\Psi_s(\z)-\Psi_{s+1}(\z).
    \end{aligned}
    \end{equation}
    we complete the proof by summing both sides of~\eqref{eq:intermediate-result} by index $k$ from 0 to $K_s-1$ and use the above relation.
\end{proof}

\begin{lemma}
    Denote the filtration generated by our algorithm by $\mathcal{F}=\{\mathcal{F}_k^s\}_{k\in[K_s]^0,s\in[S]^0}$. Let $\eta_k^s=\frac{\sqrt{\alpha_k^s(1-\theta_k^s)}}{L_z},\ \theta_k^s\in(0.8,0.99)$. Then the following recurrence holds:
    \begin{equation}\begin{aligned}
        \E[\Psi_{s+1}(\z_*)\Big|\mathcal{F}_0^s,\cdots,\mathcal{F}_{K_s-1}^s]\le\Psi_s(\z_*)-\frac{1}{2}\sumkzs\E\sqbrac{\alpha_k^s\theta_k^s\norm{\z_{k+1/2}^s-\z^s}^2\Big|\mathcal{F}_0^s,\cdots,\mathcal{F}_{K_s-1}^s}.
    \end{aligned}\end{equation}
    \label{lem:lyapunov-recurrence}
\end{lemma}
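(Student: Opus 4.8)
The plan is to obtain the recurrence by specializing the per-epoch estimate of~\cref{lem:s-epoch} at the saddle point $\z=\z_*$ and then taking a conditional expectation over the within-epoch sampling. First I would set $\z=\z_*$ in~\cref{lem:s-epoch} and rearrange so that $\Psi_{s+1}(\z_*)$ is isolated, giving the pathwise bound
\begin{equation*}
\begin{aligned}
\Psi_{s+1}(\z_*)\le{}&\Psi_s(\z_*)-\sumkzs\eta_k^s\braket{\nabla F(\z_{k+1/2}^s),\z_{k+1/2}^s-\z_*}\\
&+\sumkzs\eta_k^s\braket{\Delta_k^s,\z_*-\z_{k+1/2}^s}+\sumkzs\frac{(\eta_k^sL_z)^2-\alpha_k^s}{2}\norm{\z_{k+1/2}^s-\z^s}^2.
\end{aligned}
\end{equation*}
The remaining work is to show that the first sum is nonnegative (hence droppable), that the $\Delta_k^s$ sum vanishes in conditional expectation, and that the quadratic coefficient collapses to $-\alpha_k^s\theta_k^s/2$ under the prescribed step size.

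For the gradient sum I would use the convex--concave structure of $F$ (\cref{asp:convexity}, noting $F$ is linear, hence concave, in $\q$) together with the definition of the saddle point $\z_*$. Writing $\z_{k+1/2}^s=(\w_{k+1/2}^s;\q_{k+1/2}^s)$, the subgradient inequality in $\w$ gives $\braket{\nabla_\w F(\z_{k+1/2}^s),\w_{k+1/2}^s-\w_*}\ge F(\w_{k+1/2}^s,\q_{k+1/2}^s)-F(\w_*,\q_{k+1/2}^s)$, and concavity in $\q$ gives $\braket{-\nabla_\q F(\z_{k+1/2}^s),\q_{k+1/2}^s-\q_*}\ge F(\w_{k+1/2}^s,\q_*)-F(\w_{k+1/2}^s,\q_{k+1/2}^s)$. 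Adding them yields $\braket{\nabla F(\z_{k+1/2}^s),\z_{k+1/2}^s-\z_*}\ge F(\w_{k+1/2}^s,\q_*)-F(\w_*,\q_{k+1/2}^s)\ge0$, where the last step uses $F(\w_{k+1/2}^s,\q_*)\ge F(\w_*,\q_*)\ge F(\w_*,\q_{k+1/2}^s)$ from the saddle property of $\z_*$. Thus every term of the first sum is nonnegative and the sum can be discarded while preserving the inequality.

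For the noise sum I would condition on the information available just before $\xi_k^s$ is drawn. The prox point $\z_{k+1/2}^s$ is computed in~\eqref{eq:z_k+1/2^s} from the full gradient $\nabla F(\z^s)$ only, hence is measurable with respect to that sub-filtration, while $\z_*$ is deterministic; by the unbiasedness of $\g_k^s$ (\cref{prop:unbiased-sg}) and the definition of $\Delta_k^s$ (\cref{def:Delta_k^s}) we have $\E[\Delta_k^s\mid\cdot]=\mathbf 0$, so each conditional expectation $\E[\braket{\Delta_k^s,\z_*-\z_{k+1/2}^s}\mid\cdot]$ is zero, and the tower rule over $k=0,\dots,K_s-1$ removes the whole sum. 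Finally, substituting $\eta_k^s=\sqrt{\alpha_k^s(1-\theta_k^s)}/L_z$ gives $(\eta_k^sL_z)^2=\alpha_k^s(1-\theta_k^s)$, so $(\eta_k^sL_z)^2-\alpha_k^s=-\alpha_k^s\theta_k^s$; since $\Psi_s(\z_*)$ is determined by the previous epoch and can be pulled out as a constant, taking the conditional expectation of the surviving quadratic term produces exactly the claimed recurrence.

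I expect the main obstacle to be the filtration bookkeeping rather than any single estimate. One must verify precisely that $\z_{k+1/2}^s$ is measurable before $\xi_k^s$ is sampled so that the martingale-difference inner product genuinely has zero conditional mean, and reconcile the epoch-level conditioning written in the statement with the per-iteration martingale structure used when applying the tower property. The nonnegativity of $\braket{\nabla F(\z_{k+1/2}^s),\z_{k+1/2}^s-\z_*}$ is the key structural ingredient, but it is this interplay between conditioning levels that requires the most care to state rigorously.
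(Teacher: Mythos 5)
Your proposal is correct and follows essentially the same route as the paper's proof: specializing \cref{lem:s-epoch} at $\z=\z_*$, dropping the gradient sum via the identical convexity-in-$\w$/linearity-in-$\q$ argument combined with the saddle property $F(\w_*,\q_{k+1/2}^s)\le F(\w_*,\q_*)\le F(\w_{k+1/2}^s,\q_*)$, killing the $\Delta_k^s$ terms by conditional unbiasedness and the tower rule (valid because $\z_*$ is fixed and $\z_{k+1/2}^s$ is measurable before $\xi_k^s$ is drawn), and collapsing the quadratic coefficient to $-\alpha_k^s\theta_k^s/2$ via $(\eta_k^sL_z)^2=\alpha_k^s(1-\theta_k^s)$. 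The filtration concern you flag is handled in the paper exactly as you anticipate, so no gap remains.
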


\begin{proof}
    Since $\z_*=(\w_*;\q_*)$ is the solution to~\eqref{eq:empirical-gdro-wq}, then we have\begin{equation}
       F(\w_*,\q_{k+1/2}^s) \le F(\w_*,\q_*)\le F(\w_{k+1/2}^s,\q_*).
    \end{equation}
    Recall convexity assumption (\cref{asp:convexity}) and the linearity of $\q$, we have\begin{equation}
        \begin{aligned}
            &F(\w_*,\q_{k+1/2}^s)\ge F(\w_{k+1/2}^s,\q_{k+1/2}^s)+\braket{\nabla_\w F(\w_{k+1/2}^s,\q_{k+1/2}^s),\w_*-\w_{k+1/2}^s},\\
            &F(\w_{k+1/2}^s,\q_*)=F(\w_{k+1/2}^s,\q_{k+1/2}^s)+\braket{\nabla_\q F(\w_{k+1/2}^s,\q_{k+1/2}^s),\q_*-\q_{k+1/2}^s}.\\
        \end{aligned}
    \end{equation}
    Therefore,
    \begin{equation}
    \begin{aligned}
        &\braket{\nabla F(\z_{k+1/2}^s),\z_{k+1/2}^s-\z_*}\\=&\braket{\nabla_\w F(\w_{k+1/2}^s,\q_{k+1/2}^s),\w_{k+1/2}^s-\w_*}-\braket{\nabla_\q F(\w_{k+1/2}^s,\q_{k+1/2}^s),\q_{k+1/2}^s-\q_*}\\\ge& F(\w_{k+1/2}^s,\q_{k+1/2}^s)-F(\w_*,\q_{k+1/2}^s)+F(\w_{k+1/2}^s,\q_*)-F(\w_{k+1/2}^s,\q_{k+1/2}^s)\\=&F(\w_{k+1/2}^s,\q_*)-F(\w_*,\q_{k+1/2}^s) \ge0.
    \end{aligned}
    \end{equation}
    Then plugging the above inequality to the LHS of~\cref{lem:s-epoch}, we have
    \begin{equation}
        0\le \Psi_s(\z_*)-\Psi_{s+1}(\z_*)+\sumkzs\sqbrac{\eta_k^s\braket{\Delta_k^s,\z-\z_{k+1/2}^s}+\frac{(\eta_k^sL_z)^2-\alpha_k^s}{2}\norm{\z_{k+1/2}^s-\z^s}^2}.
    \end{equation} 
    For any \textit{fixed} $\z\in\Z$, $\Delta_k^s$ is conditional independent from $\z_{k+1/2}^s-\z$. By the tower rule of expectation, we have
\begin{equation}\begin{aligned}
    &\E\sqbrac{\sumkzs\eta_k^s\braket{\Delta_k^s,\z_{k+1/2}^s-\z_*}\Big |\mathcal{F}_0^s,\cdots,\mathcal{F}_{K_s-1}^s}\\=&\sumkzs\E\sqbrac{\eta_k^s\braket{\E\sqbrac{\Delta_k^s|\mathcal{F}_k^s},\z_{k+1/2}^s-\z_*}\Big |\mathcal{F}_{k+1}^s,\cdots,\mathcal{F}_{K_s-1}^s}=0.
    \end{aligned}\end{equation} 
    Notice that $\E\sqbrac{\Psi_s(\z_*)|\mathcal{F}_0^s,\cdots,\mathcal{F}_{K_s-1}^s}=\Psi_s(\z_*)$. By~\cref{lem:s-epoch} and a simple rearrangement we can get the result.
\end{proof}

\begin{corollary} Under the conditions of~\cref{lem:lyapunov-recurrence}, we have 
\begin{equation}
\sum_{s=0}^\infty\sumkzs\E\sqbrac{\alpha_k^s\theta_k^s\norm{\z_{k+1/2}^s-\z^s}^2}\le2\Psi_0(\z_*)\le2\max_{\z\in\Z}\Psi_0(\z)
\end{equation}
\label{cor:grad-variance-bound}
\end{corollary}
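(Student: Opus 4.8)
The plan is to convert the one-step Lyapunov contraction of \cref{lem:lyapunov-recurrence} into a telescoping bound by summing over the outer epochs $s$. First I would take the total expectation of both sides of the recurrence in \cref{lem:lyapunov-recurrence}; applying the tower rule to discharge the conditioning on $\mathcal{F}_0^s,\dots,\mathcal{F}_{K_s-1}^s$ gives
\[\E\sqbrac{\Psi_{s+1}(\z_*)}\le\E\sqbrac{\Psi_s(\z_*)}-\frac{1}{2}\sumkzs\E\sqbrac{\alpha_k^s\theta_k^s\norm{\z_{k+1/2}^s-\z^s}^2}.\]
Rearranging isolates the per-epoch sum of squared distances (up to the factor $1/2$) as the Lyapunov drop $\E\sqbrac{\Psi_s(\z_*)}-\E\sqbrac{\Psi_{s+1}(\z_*)}$.

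Next I would sum the rearranged inequality over $s=0,\dots,S-1$, so that the right-hand side telescopes to $\E\sqbrac{\Psi_0(\z_*)}-\E\sqbrac{\Psi_S(\z_*)}$. Since by \cref{def:lyapunov} each $\Psi_s$ is a nonnegative combination of Bregman divergences $B(\cdot,\cdot)\ge 0$, we have $\Psi_S(\z_*)\ge 0$, so the trailing term can be dropped, yielding
\[\frac{1}{2}\sum_{s=0}^{S-1}\sumkzs\E\sqbrac{\alpha_k^s\theta_k^s\norm{\z_{k+1/2}^s-\z^s}^2}\le\E\sqbrac{\Psi_0(\z_*)}.\]
Moreover $\Psi_0(\z_*)$ is in fact deterministic: by the initialization ($\z_0^0=\z_0$ and $\z_j^{-1}=\z_0$ for every $j$), the function $\Psi_0$ depends only on the fixed starting point $\z_0$, hence $\E\sqbrac{\Psi_0(\z_*)}=\Psi_0(\z_*)$.

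Finally I would let $S\to\infty$. As every summand $\alpha_k^s\theta_k^s\E\norm{\z_{k+1/2}^s-\z^s}^2$ is nonnegative, the partial sums are nondecreasing and uniformly bounded above by $2\Psi_0(\z_*)$, so the infinite series converges and inherits the same bound; the last inequality $\Psi_0(\z_*)\le\max_{\z\in\Z}\Psi_0(\z)$ is immediate because $\z_*\in\Z$. The argument is essentially routine summation of a supermartingale-type recurrence, so there is no genuine obstacle; the only points meriting care are the tower-rule step (which relies on $\Delta_k^s$ being conditionally mean-zero, already exploited inside \cref{lem:lyapunov-recurrence}) and the nonnegativity of $\Psi_S$ that licenses discarding the telescoped remainder.
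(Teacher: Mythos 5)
Your proposal is correct and matches the paper's own argument, which likewise sums the recurrence of \cref{lem:lyapunov-recurrence}, invokes the tower rule, and uses the non-negativity of $\Psi_s(\z_*)$ to telescope and discard the remainder. Your write-up merely makes explicit the details the paper leaves as routine (the telescoping, the monotone limit $S\to\infty$, and the determinism of $\Psi_0(\z_*)$), all of which are sound.
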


\begin{proof}
    Summing the inequality in~\cref{lem:lyapunov-recurrence}, noticing the non-negativity of $\Psi_s(\z)$ together with the tower rule suffices to prove this corollary.
\end{proof}

\begin{lemma}
With $\eta_k^s$ set in~\cref{lem:lyapunov-recurrence}, we have
\begin{equation}\begin{aligned}
\brac{\sums\sumkzs\eta_k^s}\epsilon(\z_S)\le\max_{\z\in\Z}\Psi_0(\z)+\max_{\z\in\Z}\sums\sumkzs\eta_k^s\braket{\Delta_k^s,\z-\z_{k+1/2}^s}-\sums\sumkzs\frac{\alpha_k^s\theta_k^s}{2}\norm{\z_{k+1/2}^s-\z^s}^2.
\end{aligned}\label{eq:gap-bound-0}\end{equation}
\label{lem:gap-bound-0}
\end{lemma}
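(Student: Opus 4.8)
The plan is to begin from \cref{lem:s-epoch}, turn the inner product on its left-hand side into a primal--dual function-value gap, telescope the Lyapunov terms across epochs, apply Jensen's inequality to pass to the averaged output point $\z_S$, and only then maximize over $\z$.

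First I would use the convexity of $F(\cdot,\q)$ (\cref{asp:convexity}) together with the linearity of $F(\w,\cdot)$ in $\q$ to establish, for every fixed $\z=(\w;\q)\in\Z$, the lower bound
\[
\braket{\nabla F(\z_{k+1/2}^s),\z_{k+1/2}^s-\z}\ge F(\w_{k+1/2}^s,\q)-F(\w,\q_{k+1/2}^s),
\]
which is precisely the one-sided gap computed just before \cref{lem:lyapunov-recurrence}. Substituting this into the left-hand side of \cref{lem:s-epoch}, and invoking the step size $\eta_k^s=\sqrt{\alpha_k^s(1-\theta_k^s)}/L_z$ from \cref{lem:lyapunov-recurrence} so that $(\eta_k^sL_z)^2-\alpha_k^s=-\alpha_k^s\theta_k^s$, produces for each $\z$ a per-epoch inequality whose leading right-hand term is the telescoping difference $\Psi_s(\z)-\Psi_{s+1}(\z)$.

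Next I would sum over $s=0,\dots,S-1$. The Lyapunov differences telescope to $\Psi_0(\z)-\Psi_S(\z)\le\Psi_0(\z)$ by non-negativity of $\Psi_S$, leaving, with $W:=\sums\sumkzs\eta_k^s$,
\[
\sums\sumkzs\eta_k^s\sqbrac{F(\w_{k+1/2}^s,\q)-F(\w,\q_{k+1/2}^s)}\le\Psi_0(\z)+\sums\sumkzs\sqbrac{\eta_k^s\braket{\Delta_k^s,\z-\z_{k+1/2}^s}-\frac{\alpha_k^s\theta_k^s}{2}\norm{\z_{k+1/2}^s-\z^s}^2}.
\]
Since $\z_S$ is the $\eta$-weighted average of the $\z_{k+1/2}^s$ by~\eqref{eq:output-point}, convexity of $F(\cdot,\q)$ and linearity of $F(\w,\cdot)$ give through Jensen's inequality $W\sqbrac{F(\w_S,\q)-F(\w,\q_S)}\le\sums\sumkzs\eta_k^s\sqbrac{F(\w_{k+1/2}^s,\q)-F(\w,\q_{k+1/2}^s)}$, so the left-hand side above dominates $W\sqbrac{F(\w_S,\q)-F(\w,\q_S)}$.

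Finally I would maximize over $\z=(\w;\q)\in\Z$. The left member $F(\w_S,\q)-F(\w,\q_S)$ depends on $\q$ only through the first term and on $\w$ only through the second, so $\maxz\sqbrac{F(\w_S,\q)-F(\w,\q_S)}=\max_{\q\in\Delta_m}F(\w_S,\q)-\min_{\w\in\W}F(\w,\q_S)=\epsilon(\z_S)$ by~\eqref{eq:duality-gap}. On the right, the quadratic penalty $-\frac{\alpha_k^s\theta_k^s}{2}\norm{\z_{k+1/2}^s-\z^s}^2$ is free of the maximization variable and can be extracted from the maximum, whereas the two remaining $\z$-dependent terms satisfy $\maxz\cbrac{\Psi_0(\z)+\sums\sumkzs\eta_k^s\braket{\Delta_k^s,\z-\z_{k+1/2}^s}}\le\maxz\Psi_0(\z)+\maxz\sums\sumkzs\eta_k^s\braket{\Delta_k^s,\z-\z_{k+1/2}^s}$ by subadditivity of the maximum. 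This yields exactly~\eqref{eq:gap-bound-0} in its un-normalized form. The step demanding the most care is this last one: the inequality from \cref{lem:s-epoch} is valid only for a fixed $\z$, so I must defer the maximization until after Jensen has produced the clean gap on the left, and I must split the maximum on the right correctly, keeping the $\z$-free negative quadratic intact while separately bounding the Lyapunov and martingale-noise terms, rather than substituting one common maximizer into every term.
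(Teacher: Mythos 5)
Your proposal is correct and takes essentially the same route as the paper's own proof, which compresses into one chain exactly your ingredients: Jensen on the weighted average $\z_S$ with convexity of $F(\cdot,\q)$ and linearity in $\q$ to pass to the duality gap, the per-fixed-$\z$ bound of \cref{lem:s-epoch} with the step-size identity $(\eta_k^sL_z)^2-\alpha_k^s=-\alpha_k^s\theta_k^s$, telescoping and non-negativity of $\Psi_s$, and finally splitting the maximum while keeping the $\z$-free quadratic outside it. The only difference is presentational — the paper starts from $\epsilon(\z_S)$ and unwinds, whereas you build forward from the per-iteration inequality — and your cautionary remarks about deferring the maximization match what the paper's displayed chain does implicitly.
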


\begin{proof}
   The following result is a direct derivation from the~\cref{asp:convexity} and~\cref{lem:s-epoch}:
    \begin{equation}
        \begin{aligned}
            \brac{\sums\sumkzs\eta_k^s}\epsilon(\z_S)\le& \sqbrac{\max_{\q\in\Delta_m}\sums\sumkzs \eta_k^sF(\w_{k+1/2}^s,\q)-\min_{\w\in\W}\sums\sumkzs \eta_k^sF(\w,\q_{k+1/2}^s)}\\\le&\max_{\z\in\Z}\sums\sumkzs\eta_k^s\braket{\nabla F(\z_{k+1/2}^s),\z_{k+1/2}^s-\z}\\\le&\max_{\z\in\Z}\Psi_0(\z)+\max_{\z\in\Z}\sums\sumkzs\braket{\eta_k^s\Delta_k^s,\z-\z_{k+1/2}^s}-\sums\sumkzs\frac{\alpha_k^s\theta_k^s}{2}\norm{\z_{k+1/2}^s-\z^s}^2.
        \end{aligned}
    \end{equation}
\end{proof}

\subsection{Convergence and Complexity\label{app:sec:expectation-bound}}

\begin{lemma}
    Under the conditions in~\cref{lem:lyapunov-recurrence}, we have\begin{equation}
        \E\sqbrac{\epsilon(\z_S)}\le L_z\brac{\sums\sumkzs\sqrt{\alpha_k^s(1-\theta_k^s)}}^{-1}\sqbrac{1+\maxz\Psi_0(\z)}.
    \end{equation}
    \label{lem:raw-expectation-bound}
\end{lemma}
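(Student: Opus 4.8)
The plan is to start from the duality-gap bound established in~\cref{lem:gap-bound-0}, take expectations on both sides, and control the martingale term $\max_{\z\in\Z}\sums\sumkzs\eta_k^s\braket{\Delta_k^s,\z-\z_{k+1/2}^s}$ so that it can be absorbed against the negative quadratic term $-\sums\sumkzs\frac{\alpha_k^s\theta_k^s}{2}\norm{\z_{k+1/2}^s-\z^s}^2$ that already appears on the right-hand side. The presence of the $\max_{\z\in\Z}$ inside the expectation is the key difficulty, since $\Delta_k^s$ is no longer conditionally mean-zero once coupled with the argmax; I cannot simply invoke the tower rule as in~\cref{lem:lyapunov-recurrence}.

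First I would decouple the maximizing point from the martingale increments. A standard device (going back to~\citet{nemirovski2009robust} and used in the analysis of stochastic mirror prox) is to introduce an auxiliary ``ghost'' sequence $\{\tilde\z_k^s\}$ that performs mirror-descent-type updates driven solely by the noise $\Delta_k^s$, so that $\sums\sumkzs\eta_k^s\braket{\Delta_k^s,\tilde\z_k^s-\z}$ telescopes into a Bregman-divergence term plus $\sum(\eta_k^s)^2\norm{\Delta_k^s}_*^2$. This replaces the problematic $\max_{\z\in\Z}\braket{\sum\eta_k^s\Delta_k^s,\cdot}$ by a sum of squared noise norms plus a bounded divergence term $\max_{\z\in\Z}B(\z,\z_0)\le 1$. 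The residual inner products $\eta_k^s\braket{\Delta_k^s,\z_{k+1/2}^s-\tilde\z_k^s}$ and the gap $\eta_k^s\braket{\Delta_k^s,\z_{k+1/2}^s-\z_*}$ are genuinely conditionally mean-zero (since $\z_{k+1/2}^s$ and $\tilde\z_k^s$ are $\mathcal{F}_k^s$-measurable), so they vanish in expectation by the tower rule exactly as in~\cref{lem:lyapunov-recurrence}.

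Next I would bound $\E\norm{\Delta_k^s}_*^2$. By~\cref{def:Delta_k^s} and~\eqref{eq:g_k^s}, $\Delta_k^s=\nabla F(\z_{k+1/2}^s;\xi_k^s)-\nabla F(\z^s;\xi_k^s)-\nabla F(\z_{k+1/2}^s)+\nabla F(\z^s)$, which is a centered version of the variance-reduction difference; using $\E\norm{X-\E X}_*^2\le\E\norm{X}_*^2$ together with the $L_z$-Lipschitz continuity from~\cref{lem:Lipschitz} gives $\E\norm{\Delta_k^s}_*^2\le L_z^2\,\E\norm{\z_{k+1/2}^s-\z^s}^2$. Choosing the step size $\eta_k^s=\frac{\sqrt{\alpha_k^s(1-\theta_k^s)}}{L_z}$ as in~\cref{lem:lyapunov-recurrence} makes the coefficient $(\eta_k^s)^2 L_z^2=\alpha_k^s(1-\theta_k^s)$, so the accumulated noise term is dominated by $\sums\sumkzs\alpha_k^s(1-\theta_k^s)\E\norm{\z_{k+1/2}^s-\z^s}^2$, which is precisely matched (in fact over-matched, up to the factor from the ghost-sequence telescoping) by the negative quadratic term already present. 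After the cancellation, the only surviving terms are $\maxz\Psi_0(\z)$ together with the constant $1$ coming from the ghost-sequence divergence bound, yielding $\E[\epsilon(\z_S)]\le(\sums\sumkzs\eta_k^s)^{-1}[1+\maxz\Psi_0(\z)]$. Substituting $\eta_k^s=\frac{\sqrt{\alpha_k^s(1-\theta_k^s)}}{L_z}$ gives the stated denominator $L_z^{-1}\sums\sumkzs\sqrt{\alpha_k^s(1-\theta_k^s)}$, completing the proof. The main obstacle, as noted, is handling the $\max_{\z\in\Z}$ over noise; the ghost-sequence/conjugate-function trick is what makes this tractable, and care is needed to ensure the telescoped divergence is bounded by the domain diameter so that it contributes only the harmless additive constant.
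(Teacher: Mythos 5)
Your proposal follows the paper's proof essentially step for step: you start from \cref{lem:gap-bound-0}, decouple the $\max_{\z\in\Z}$ from the noise via the ghost-iterate SMD sequence of \citet{nemirovski2009robust} (the paper's $\y_k^s$ in \eqref{eq:virtual-sequence}, with $\max_{\z\in\Z}B(\z,\z_0^0)\le 1$ supplying the additive constant $1$), eliminate the residual inner products $\braket{\eta_k^s\Delta_k^s,\y_k^s-\z_{k+1/2}^s}$ by the tower rule, and absorb the accumulated $\frac{(\eta_k^s)^2}{2}\norm{\Delta_k^s}_*^2$ terms into the negative quadratic $-\sums\sumkzs\frac{\alpha_k^s\theta_k^s}{2}\norm{\z_{k+1/2}^s-\z^s}^2$ using $\eta_k^sL_z=\sqrt{\alpha_k^s(1-\theta_k^s)}$, before dividing by $\sums\sumkzs\eta_k^s$. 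This is exactly the paper's route.

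One step as you wrote it is incorrect, though harmlessly so: the inequality $\E\norm{X-\E X}_*^2\le\E\norm{X}_*^2$ is a Hilbert-space fact and fails for general norms, and here $\norm{\cdot}_*$ contains the $\ell_\infty$ component dual to $\norm{\cdot}_1$ on the simplex block, so it is not induced by an inner product. The paper instead bounds $\norm{\Delta_k^s}_*$ by the triangle inequality, $\norm{\Delta_k^s}_*\le\norm{\nabla F(\z_{k+1/2}^s;\xi_k^s)-\nabla F(\z^s;\xi_k^s)}_*+\E\sqbrac{\norm{\nabla F(\z_{k+1/2}^s;\xi_k^s)-\nabla F(\z^s;\xi_k^s)}_*}$, which after squaring and Jensen gives $\E\norm{\Delta_k^s}_*^2\le 4L_z^2\,\E\norm{\z_{k+1/2}^s-\z^s}^2$ --- a factor $4$ where you claim $1$. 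The absorption then requires $2\alpha_k^s(1-\theta_k^s)\le\frac{\alpha_k^s\theta_k^s}{2}$, i.e.\ $\theta_k^s\ge 4/5$, which is precisely why \cref{lem:lyapunov-recurrence} stipulates $\theta_k^s\in(0.8,0.99)$; your constant would only need $\theta_k^s\ge 1/2$. Since the stated range guarantees $\theta_k^s>0.8$, the conclusion survives the correction unchanged.
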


\begin{proof}
It's obvious to note that $\sums\sumkzs\eta_k^s\braket{\Delta_k^s,\z-\z_{k+1/2}^s}$ is a martingale difference sequence for any \emph{fixed} $\z\in\Z$. The existence of the maximum operation on the RHS of~\eqref{eq:gap-bound-0} deprives the inner product $\braket{\Delta_k^s,\z-\z_{k+1/2}^s}$ from being a martingale difference sequence. We need to apply a classical technique called ``ghost iterate''~\citep{nemirovski2009robust} to switch the order of maximization and expectation, and thus eliminating the dependency for $\z$. Image there is an online algorithm performing stochastic mirror descent (SMD):
    \begin{equation}
    \y_{k+1}^s=\argmin_{\y\in\Z}\{\braket{-\eta_k^s\Delta_k^s,\y-\y_k^s}+B(\y,\y_k^s)\},\ \y_0^{s+1}=\y_{K_s}^s\quad\forall\ s\in[S]^0,k\in[K_s]^0.\label{eq:virtual-sequence}
\end{equation}
    Also, we define $\y_0^0=\z_0^0=\argmin_{\z\in\Z}\psi(\z)$. According to~\citet[Lemma~6.1]{nemirovski2009robust}, we have for any $\z\in\Z$:
    \begin{equation}
        \sums\sumkzs\braket{\eta_k^s\Delta_k^s,\z-\y_{k}^s}\le B(\z,\z_0^0)+\frac{1}{2}\sums\sumkzs(\eta_k^s)^2\norm{\Delta_k^s}^2_*.\label{eq:ghost-iterate-bound}
    \end{equation}
    Now that we have decoupled the dependency, it's safe for us to assert that $\sums\sumkzs\braket{\eta_k^s\Delta_k^s,\y_{k}^s-\z_{k+1/2}^s}$ is a martingale difference sequence, since $\y_{k}^s-\z_{k+1/2}^s$ is conditionally independent of $\Delta_k^s$. 
    
  Firstly, we show that $\Delta_k^s$ is uniformly bounded above:
    \begin{equation}
    \begin{aligned}
        \norm{\Delta_k^s}_*&=\norm{\nabla F(\z_{k+1/2}^s;\xi_k^s)-\nabla F(\z^s;\xi_k^s)+\nabla F(\z^s)-\nabla F(\z_{k+1/2}^s)}_*\\&\le \norm{\nabla F(\z_{k+1/2}^s;\xi_k^s)-\nabla F(\z^s;\xi_k^s)}_*+\norm{\E\sqbrac{\nabla F(\z_{k+1/2}^s;\xi_k^s)-\nabla F(\z^s;\xi_k^s)}}_*\\&\le\norm{\nabla F(\z_{k+1/2}^s;\xi_k^s)-\nabla F(\z^s;\xi_k^s)}_*+\E\sqbrac{\norm{\nabla F(\z_{k+1/2}^s;\xi_k^s)-\nabla F(\z^s;\xi_k^s)}_*}\\&\le L_z\norm{\z_{k+1/2}^s-\z^s}+\E\sqbrac{L_z\norm{\z_{k+1/2}^s-\z^s}}\\&\le 2\sqrt{2}L_z+\E\sqbrac{2\sqrt{2}L_z}\\&=4\sqrt{2}L_z.
    \end{aligned}\label{eq:Delta_k^s-bound}
    \end{equation}
    The above inequality is ensured by the continuity introduced in~\cref{lem:Lipschitz}. 
    
    Then we define $V_k^s=\braket{\eta_k^s\Delta_k^s,\y_{k}^s-\z_{k+1/2}^s}$. The following holds:
\begin{equation}
    \begin{aligned}
&\E\sqbrac{\max_{\z\in\Z}\sums\sumkzs\eta_k^s\braket{\Delta_k^s,\z-\z_{k+1/2}^s}}\\=&\E\sqbrac{\maxz\sums\sumkzs\braket{\eta_k^s\Delta_k^s,\z-\y_{k}^s}}+\E\sqbrac{\sums\sumkzs\underbrace{\braket{\eta_k^s\Delta_k^s,\y_{k}^s-\z_{k+1/2}^s}}_{V_k^s}}\\\overset{\eqref{eq:ghost-iterate-bound}}{\le}&\maxz B(\z,\z_0^0)+\sums\sumkzs\E\sqbrac{\frac{(\eta_k^s)^2}{2}\norm{\Delta_k^s}_*^2}+\sums\sumkzs\E\sqbrac{\E[V_k^s|\mathcal{F}_k^s]}\\\overset{\eqref{eq:Delta_k^s-bound}}{\le}&1+\sums\sumkzs\E\sqbrac{2(\eta_k^sL_z)^2\norm{\z_{k+1/2}^s-\z^s}^2}\\=&1+\sums\sumkzs\E\sqbrac{2\alpha_k^s(1-\theta_k^s)\norm{\z_{k+1/2}^s-\z^s}^2}.
        \end{aligned}
    \end{equation}
    Combining it with~\cref{lem:gap-bound-0}:
    \begin{equation}
        \begin{aligned}            &\brac{\sums\sumkzs\eta_k^s}\E\sqbrac{\epsilon(\z_S)}\\\le&\max_{\z\in\Z}\Psi_0(\z)+\E\sqbrac{\max_{\z\in\Z}\sums\sumkzs\eta_k^s\braket{\Delta_k^s,\z-\z_{k+1/2}^s}}-\sums\sumkzs\E\sqbrac{\frac{\alpha_k^s\theta_k^s}{2}\norm{\z_{k+1/2}^s-\z^s}^2}\\\le&\maxz\Psi_0(\z)+1+\sums\sumkzs\E\sqbrac{\alpha_k^s(2-\frac{5}{2}\theta_k^s)\norm{\z_{k+1/2}^s-\z^s}^2}\\\le&1+\maxz\Psi_0(\z),
        \end{aligned}
    \end{equation}
    which concludes our proof by dividing $\sums\sumkzs\eta_k^s$ to both sides of the above inequality.
\end{proof}

\begin{proof}[Proof of~\cref{thm:gdro-convergence}]
First, we show that $\maxz\Psi_0(\z)$ is bounded under the given conditions:
\begin{equation}
    \begin{aligned}
        \maxz\Psi_0(\z)&=\maxz\cbrac{(1-\alpha_0^{0})B(\z,\z_0^{0})+\sum_{k=1}^{K_{-1}}\alpha_{k-1}^{-1}B(\z,\z_k^{-1})}\\&=\maxz\cbrac{(1-\alpha_0^{0})B(\z,\z_0)+\sum_{k=1}^{K_{-1}}\alpha_{k-1}^{-1}B(\z,\z_0)}\\&\le1-\alpha_0^{0}+\sum_{k=1}^{K_{-1}}\alpha_{k-1}^{-1}\le 2.
    \end{aligned}\label{eq:psi_0z-bound}
    \end{equation}
Under the given parameters, we have
    \begin{equation}
          \E\sqbrac{\epsilon(\z_S)}\le L_z\brac{\sums\sumkzs\sqrt{\frac{(1-\theta_k^s)}{K}}}^{-1}\sqbrac{1+\maxz\Psi_0(\z)}
        \overset{\eqref{eq:psi_0z-bound}}{\le}\frac{30L_z}{S\sqrt{K}}.\label{eq:intermediate-proof-expectation-bound}
    \end{equation}
    From~\cref{lem:Lipschitz} we know that $L_z=\OO(\sqrt{\ln m})$, which concludes our proof. 
\end{proof}

\begin{proof}[Proof of~\cref{cor:gdro-complexity}]
    The inner loop of~\cref{alg:gdro} consumes $\OO(m)$ computations per iteration. For outer loop $s$, the full gradient $\nabla F(\z^s)$ is calculated, requiring $\OO(m\bar n)$ computations. So the algorithms consume $\OO\brac{mKS+m\bar nS}$ in total. Aiming to set the two terms at the same order, we choose $K=\Theta(\bar n)$. As a consequence, $S=\OO\brac{\frac{L_z}{\varepsilon\sqrt{\bar n}}}$. Plugging this into $\OO\brac{mKS+m\bar nS}$ we derive an $\OO\brac{\frac{L_zm\sqrt{\bar n}}{\varepsilon}}$ computation complexity. With $L_z=\OO(\sqrt{\ln{m}})$ taken into consideration, the total computation complexity to reach $\varepsilon$-accuracy is $\OO\brac{\frac{m\sqrt{\bar n\ln{m}}}{\varepsilon}}$.
\end{proof}

\newpage
\section{Analysis for Empirical MERO\label{app:sec:mero-analysis}}
We present the omitted proofs for empirical MERO in this section. 

\subsection{Optimization Error}

The following proof verifies that the optimization error for the approximated problem is under control, which is proved to be essential in our two-stage schema for empirical MERO.

\begin{proof}[Proof of~\cref{lem:opt-error}]
For any $\z=(\w;\q)\in\Z$, we have\begin{equation}
    \left|\underline{F}(\z)-\underline{\hat{F}}(\z)\right|=\left|\summ\q_i\sqbrac{\underline{R}_i(\w)-\underline{\hat{R}}_i(\w)}\right|=\left|\summ\q_i\sqbrac{\hat{R}_i^*-R_i^*}\right|\le\max_{i\in[m]}\left\{\hat{R}_i^*-R_i^*\right\}.\label{eq:F_underline-bound}
\end{equation}
For convenience, we denote $\tilde\w=\argmin_{\w\in\W}\underline{F}(\w,\bar\q)$ and $\tilde\q=\argmin_{\q\in\Delta_m}\underline{F}(\bar\w,\q)$. Therefore, we can complete our proof by simple algebra.
    \begin{equation}
        \begin{aligned}
            \underline{\epsilon}(\bar\z)&=\underline{F}(\bar\w,\tilde\q)-\underline{F}(\tilde\w,\bar\q)\overset{\eqref{eq:F_underline-bound}}{\le}\underline{\hat F}(\bar\w,\tilde\q)-\underline{\hat F}(\tilde\w,\bar\q)+2\max_{i\in[m]}\left\{\hat{R}_i^*-R_i^*\right\}\\&\le \max_{\q\in\Delta_m}\underline{\hat F}(\bar\w,\q)-\min_{\w\in\W}\underline{\hat F}(\w,\bar\q)+2\max_{i\in[m]}\left\{\hat{R}_i^*-R_i^*\right\}\\&=\underline{\hat\epsilon}(\bar\z)+2\max_{i\in[m]}\left\{\hat{R}_i^*-R_i^*\right\}.
        \end{aligned}
    \end{equation}
\end{proof}

\subsection{Stage 1: Excess Empirical Risk Convergence}
At the beginning of this section, we present a useful lemma to bridge the variance-reduced property of ALEG and the martingale difference sequence.
\begin{lemma}
    (Bernstein’s Inequality for Martingales~\citep{cesa2006prediction}) Let $\{V_t\}_{t=1}^T$ be a martingale difference sequence with respect to the filtration $\mathcal{F}=\{\mathcal{F}_t\}_{t=1}^T$ bounded above by $V$, i.e.~$|V_t|\le V$. If the sum of the conditional variances is bounded, i.e.~$\sumt\E[V_t^2|\mathcal{F}_t]\le \sigma^2$, then for any $\delta\in(0,1]$,
    \begin{equation}
        \mathbb P\brac{\sumt V_t>\sigma\sqrt{2\ln{\frac{1}{\delta}}}+\frac{2}{3}V\ln{\frac{1}{\delta}}}\le \delta.
    \end{equation}\label{lem:Bernstein’s Inequality for Martingales}
\end{lemma}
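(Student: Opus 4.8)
The plan is to prove the bound via the standard exponential-moment (Chernoff) method applied to martingales, keeping the filtration convention used elsewhere in the paper, so that $\E[V_t\mid\mathcal{F}_t]=0$, $V_t$ is $\mathcal{F}_t$-measurable in the ``revealed'' sense, and the conditional second moment is $\E[V_t^2\mid\mathcal{F}_t]$. First I would establish a one-step bound on the conditional moment generating function of each increment. Using the elementary inequality $e^x\le 1+x+\frac{e^c-1-c}{c^2}x^2$, valid for all $x\le c$ as a consequence of $x\mapsto(e^x-1-x)/x^2$ being nondecreasing, I substitute $x=\lambda V_t$ and $c=\lambda V$ for $\lambda>0$ and take conditional expectations. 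The linear term vanishes by the martingale difference property, yielding $\E[e^{\lambda V_t}\mid\mathcal{F}_t]\le\exp\!\big(g(\lambda)\,\E[V_t^2\mid\mathcal{F}_t]\big)$ with $g(\lambda):=(e^{\lambda V}-1-\lambda V)/V^2$.

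Next I would assemble these into an exponential supermartingale. Writing $S_T=\sumt V_t$ and $\Sigma_T=\sumt\E[V_t^2\mid\mathcal{F}_t]$, the process $M_t=\exp(\lambda S_t-g(\lambda)\Sigma_t)$ satisfies $\E[M_t\mid\mathcal{F}_t]\le M_{t-1}$ by the one-step bound, hence $\E[M_T]\le 1$. Because the hypothesis guarantees $\Sigma_T\le\sigma^2$ almost surely and $g(\lambda)>0$, on the event $\{S_T>u\}$ we have $\lambda S_T-g(\lambda)\Sigma_T\ge\lambda u-g(\lambda)\sigma^2$; a Markov/Chernoff argument applied to $M_T$ then gives $\mathbb{P}(S_T>u)\le\exp(-\lambda u+g(\lambda)\sigma^2)$ for every $\lambda\in(0,3/V)$.

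To convert this into the stated closed form I would use the standard bound $g(\lambda)\le\frac{\lambda^2}{2(1-\lambda V/3)}$ on $(0,3/V)$ and then make the near-optimal choice $\lambda=\frac{u}{\sigma^2+Vu/3}$, which after simplification (note $1-\lambda V/3=\sigma^2/(\sigma^2+Vu/3)$) collapses the exponent exactly to $-\frac{u^2}{2(\sigma^2+Vu/3)}$, giving the Bernstein tail $\mathbb{P}(S_T>u)\le\exp\!\big(-\frac{u^2}{2(\sigma^2+Vu/3)}\big)$. Finally I would invert this tail: setting $u_0=\sigma\sqrt{2\ln(1/\delta)}+\tfrac{2}{3}V\ln(1/\delta)$, I solve the quadratic obtained by equating the exponent to $\ln(1/\delta)$, whose positive root is $\tfrac{V}{3}\ln\tfrac1\delta+\sqrt{\tfrac{V^2}{9}\ln^2\tfrac1\delta+2\sigma^2\ln\tfrac1\delta}$, and bound it above by $u_0$ via $\sqrt{a+b}\le\sqrt a+\sqrt b$. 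Since $u\mapsto u^2/(2(\sigma^2+Vu/3))$ is increasing, $u_0$ exceeding this root forces the exponent at $u_0$ to be at least $\ln(1/\delta)$, giving probability $\le\delta$.

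I expect the main obstacle to be bookkeeping rather than conceptual difficulty: one must respect that $\Sigma_T$ is itself random and only bounded by $\sigma^2$, so the Chernoff step has to be phrased on the event $\{S_T>u\}$ instead of naively substituting $\sigma^2$ into the supermartingale, and the algebraic choice of $\lambda$ must be verified to lie in the admissible range $(0,3/V)$. The two auxiliary estimates — monotonicity of $(e^x-1-x)/x^2$ and $g(\lambda)\le\frac{\lambda^2}{2(1-\lambda V/3)}$ — are routine but form the technical core of the argument.
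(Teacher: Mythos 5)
Your proof is correct, but note that the paper does not prove this lemma at all: it is imported verbatim by citation from \citet{cesa2006prediction} (it is the standard Bernstein--Freedman inequality for martingales), so there is no in-paper argument to compare against. Your derivation is the classical one found in that reference and in Freedman's original paper: the one-step MGF bound via monotonicity of $x\mapsto(e^x-1-x)/x^2$, the exponential supermartingale $M_t=\exp(\lambda S_t-g(\lambda)\Sigma_t)$ with $\E[M_T]\le1$, the estimate $g(\lambda)\le\frac{\lambda^2}{2(1-\lambda V/3)}$, the choice $\lambda=\frac{u}{\sigma^2+Vu/3}$ (which indeed satisfies $\lambda V<3$ and collapses the exponent to $-\frac{u^2}{2(\sigma^2+Vu/3)}$), and the quadratic inversion with $\sqrt{a+b}\le\sqrt a+\sqrt b$ recovering exactly the threshold $\sigma\sqrt{2\ln(1/\delta)}+\frac23V\ln(1/\delta)$. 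You also correctly handle the two genuine pitfalls: since $\Sigma_T$ is random, the Chernoff step must be run on the event $\{S_T>u\}$ using $\Sigma_T\le\sigma^2$ almost surely rather than substituting $\sigma^2$ into the supermartingale, and your bookkeeping is consistent with the paper's filtration convention in which $\mathcal F_t$ is the pre-$V_t$ $\sigma$-field, so that $\E[V_t\mid\mathcal F_t]=0$ and $\Sigma_t$ is predictable, which is exactly what makes $\E[M_t\mid\mathcal F_t]\le M_{t-1}$ go through. I see no gap.
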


Next, we present the proof of the high probability bound in~\cref{thm:excess-risk-convergence}.

\begin{proof}[Proof of~\cref{thm:excess-risk-convergence}]
    At the beginning, we briefly discuss how ALEG can be used as an ERM oracle. Under the circumstance of $m=1$, we notice that $\Delta_m$ reduces to a singleton. The original empirical GDRO problem~\eqref{eq:empirical-gdro-wq} can be rewritten as \begin{equation}
        \min_{\w\in\W}\max_{\q\in\Delta_1}\left\{F(\w,\q)=R_i(\w)\right\}\Longleftrightarrow\min_{\w\in\W}\cbrac{R_i(\w)}.\label{eq:reduced-gdro-per-group-risk}
    \end{equation}
    As a consequence, the merged gradient w.r.t.~$\q$ vanishes. In this case, when we talk about the smoothness of $F(\w,\q)$, we are actually focusing on the smoothness of $R_i(\w)$. We can conclude from~\cref{asp:smooth-Lipschitz} that $R_i(\cdot)$ is $L$-smooth. Moreover, the duality gap for the output of~\cref{alg:gdro} $\bar\z_i=(\bar\w_i,\bar\q_i)$ of~\eqref{eq:reduced-gdro-per-group-risk} satisfies:
    \begin{equation}
        \epsilon(\bar\z_i)=R_i(\bar\w_i)-\min_{\w\in\W}R_i(\w)=\hat{R}_i^*-R_i^*,\quad\forall\ i\in[m],
    \end{equation}
    which is exactly the excess empirical risk for group $i$.
    Recall the ``ghost iterate'' technique demonstrated in~\cref{app:sec:expectation-bound}, we define $V_k^s=\braket{\eta_k^s\Delta_k^s,\y_{k}^s-\z_{k+1/2}^s}$. Firstly, we show that $V_k^s$ is uniformly bounded above:
    \begin{equation}
        \begin{aligned}
    |V_k^s|\le\eta_k^s\norm{\Delta_k^s}_*\norm{\y_{k}^s-\z_{k+1/2}^s}\le 2\sqrt{2}\eta_k^s\norm{\Delta_k^s}_*\le 16\eta_k^sL=16\sqrt{\alpha_k^s(1-\theta_k^s)}\le \frac{16}{\sqrt{5\bar n}}.
        \end{aligned}\label{eq:Vks-martingale-bound}
    \end{equation}
    
   Secondly, we bound the sum of conditional variance of $\{V_k^s\}_{k\in[K_s]^0,s\in[S]^0}$. By~\eqref{eq:Delta_k^s-bound}, the definition of $\theta_k^s$ and~\cref{cor:grad-variance-bound},
    \begin{equation}
    \begin{aligned}
\sums\sumkzs\E\sqbrac{(V_k^s)^2|\mathcal{F}_k^s}&\le\sums\sumkzs\E\sqbrac{8(\eta_k^s)^2\norm{\Delta_k^s}_*^2\Big|\mathcal{F}_k^s}\\&\le \sums\sumkzs\E\sqbrac{32(\eta_k^sL)^2\norm{\z_{k+1/2}^s-\z^s}^2\Big|\mathcal{F}_k^s}\\&= \sums\sumkzs32\E\sqbrac{\alpha_k^s(1-\theta_k^s)\norm{\z_{k+1/2}^s-\z^s}^2\Big|\mathcal{F}_k^s}\\&\le \sums\sumkzs8\E\sqbrac{\alpha_k^s\theta_k^s\norm{\z_{k+1/2}^s-\z^s}^2\Big|\mathcal{F}_k^s}\\&
\le 32\max_{\z\in\Z}\Psi_0(\z).
    \end{aligned}\label{eq:conditional-variance-bound}
    \end{equation}

    Finally, we can use~\cref{lem:Bernstein’s Inequality for Martingales} together with the union bound to conclude that with probability at least $1-\delta$
    \begin{equation}
        \sums\sumkzs V_k^s\le 32\max_{\z\in\Z}\Psi_0(\z)\sqrt{2\ln{\frac{S}{\delta}}}+\frac{32}{3\sqrt{5\bar n}}\ln{\frac{S}{\delta}}.
        \label{eq:martingale-high-probability-bound}
    \end{equation}
    From~\cref{lem:gap-bound-0} and the previous result~\eqref{eq:ghost-iterate-bound}, along with the boundness of Bregman function and the definition of $\theta_k^s$, we have \begin{equation}
        \begin{aligned}
\epsilon(\bar\z_i)
\le&\brac{\sums\sumkzs\eta_k^s}^{-1}\left[\max_{\z\in\Z}\Psi_0(\z)+\max_{\z\in\Z}\sums\sumkzs\braket{\eta_k^s\Delta_k^s,\z-\y_{k}^s}\right.\\&+\left.\sums\sumkzs\braket{\eta_k^s\Delta_k^s,\z_{k+1/2}^s-\y_{k}^s}-\sums\sumkzs\frac{\alpha_k^s\theta_k^s}{2}\norm{\z_{k+1/2}^s-\z^s}^2\right]\\\le& \brac{\sums\sumkzs\eta_k^s}^{-1}\left[\max_{\z\in\Z}\Psi_0(\z)+\max_{\z\in\Z}B(\z,\z_0^0)+\frac{1}{2}\sums\sumkzs(\eta_k^s)^2\norm{\Delta_k^s}^2_*\right.\\&+\left.\sums\sumkzs V_k^s-\sums\sumkzs\frac{\alpha_k^s\theta_k^s}{2}\norm{\z_{k+1/2}^s-\z^s}^2\right]\\\le&\brac{\sums\sumkzs\eta_k^s}^{-1}\left[\max_{\z\in\Z}\Psi_0(\z)+1+\sums\sumkzs\frac{2\alpha_k^s(1-\theta_k^s)}{L^2}L^2\norm{\z_{k+1/2}^s-\z^s}^2\right.\\&\left.-\sums\sumkzs\frac{\alpha_k^s\theta_k^s}{2}\norm{\z_{k+1/2}^s-\z^s}^2+\sums\sumkzs V_k^s\right]\\\le& L\brac{\sums\sumkzs\sqrt{\alpha_k^s(1-\theta_k^s)}}^{-1}\sqbrac{1+\max_{\z\in\Z}\Psi_0(\z)+\sums\sumkzs V_k^s}.
        \end{aligned}\label{eq:combinebounds}
    \end{equation}
    Combining~\eqref{eq:martingale-high-probability-bound} and~\eqref{eq:combinebounds} we derive that with probability at least $1-\delta$,
    \begin{equation}
    \begin{aligned}
        \hat{R}_i^*-R_i^*\le& L\brac{\sums\sumkzs\sqrt{\alpha_k^s(1-\theta_k^s)}}^{-1}\sqbrac{1+\max_{\z\in\Z}\Psi_0(\z)+32\max_{\z\in\Z}\Psi_0(\z)\sqrt{2\ln{\frac{S}{\delta}}}+\frac{32}{3\sqrt{5\bar n}}\ln{\frac{S}{\delta}}}\\\le&\frac{10L}{S\sqrt{\bar n}}\brac{3+64\sqrt{2\ln{\frac{S}{\delta}}}+\frac{32}{3\sqrt{5\bar n}}\ln{\frac{S}{\delta}}},
    \end{aligned}     
    \end{equation}
    where the last inequality uses~\eqref{eq:psi_0z-bound}. The above relation needs to hold for every group $i$, which necessitates the usage of the union bound tool. We can deduce that with probability at least $1-\delta$,
    \begin{equation}
        \max_{i\in[m]}\cbrac{\hat{R}_i^*-R_i^*}\le \frac{10L}{S\sqrt{\bar n}}\brac{3+64\sqrt{2\ln{\frac{mS}{\delta}}}+\frac{32}{3\sqrt{5\bar n}}\ln{\frac{mS}{\delta}}}.
    \end{equation}

    Recall the relation between $S$ and $T$ in~\cref{thm:excess-risk-convergence}, we derive that
    \begin{equation}
        \max_{i\in[m]}\cbrac{\hat{R}_i^*-R_i^*}\le\OO\brac{\frac{1}{T}\brac{\sqrt{\ln{\frac{mT}{\sqrt{\bar n}\delta}}}+\frac{1}{\sqrt{\bar n}}\ln{\frac{mT}{\sqrt{\bar n}\delta}}}},\label{eq:max-excess-risk-prob-bound}
    \end{equation}
    which is equivalent to~\eqref{eq:excess-risk-probability-bound}.
\end{proof}

\subsection{Stage 2: Empirical GDRO Solver Convergence\label{app:sec:mero-solver-convergence}}

The second stage of~\cref{alg:mero} solves~\eqref{eq:empirical-mero-wq-surrogate} by~\cref{alg:gdro}. The main idea is to combine the convergence results in~\cref{sec:gdro-solution} with~\cref{lem:opt-error}.

\begin{proof}[Proof of~\cref{thm:mero-convergence}]

    The proof of~\cref{thm:excess-risk-convergence} actually provides a high probability bound for empirical GDRO. By substituting $L$ with $L_z$ in the previous derivations, the following holds with probability at least $1-\frac{\delta}{2}$,
    \begin{equation}
        \underline{\hat\epsilon}(\bar\z)\le\frac{10L_z}{T}\brac{3+32\sqrt{2\ln{\frac{2T}{\sqrt{\bar n}\delta}}}+\frac{64}{3\sqrt{5\bar n}}\ln{\frac{2T}{\sqrt{\bar n}\delta}}}.
    \end{equation}
    Since we only need to provide a theoretical guarantee for a single approximate MERO problem, the union bound appeared in~\eqref{eq:max-excess-risk-prob-bound} is unnecessary.
    Based on~\eqref{eq:max-excess-risk-prob-bound}, the following holds with probability with probability at least $1-\frac{\delta}{2}$,
    \begin{equation}
        \max_{i\in[m]}\cbrac{\hat{R}_i^*-R_i^*}\le \frac{10L}{T}\brac{3+64\sqrt{2\ln{\frac{2mT}{\sqrt{\bar n}\delta}}}+\frac{32}{3\sqrt{5\bar n}}\ln{\frac{2mT}{\sqrt{\bar n}\delta}}}.
    \end{equation}
    Together, we immediately derive that with probability at least $1-\delta$,
    \begin{equation}
    \begin{aligned}
        \underline{\epsilon}(\bar\z)\overset{\eqref{eq:duality-gap-bound}}{\le}&\underline{\hat\epsilon}(\bar\z)+2\max_{i\in[m]}\left\{\hat{R}_i^*-R_i^*\right\}\\\le&\frac{10}{T}\left[3(L_z+2L)+64\sqrt{2}\brac{L_z\sqrt{\ln{\frac{2T}{\sqrt{\bar n}\delta}}}+2L\sqrt{\ln{\frac{2mT}{\sqrt{\bar n}\delta}}}}+\frac{32}{3\sqrt{5\bar n}}\brac{L_z\ln{\frac{2T}{\sqrt{\bar n}\delta}}+2L\ln{\frac{2m}{\sqrt{\bar n}\delta}}}\right]\\=& \OO\brac{\frac{1}{T}\brac{\sqrt{\ln m\ln{\frac{T}{\sqrt{\bar n}\delta}}}+\sqrt{\ln{\frac{mT}{\sqrt{\bar n}\delta}}}+\frac{1}{\sqrt{\bar n}}\brac{\sqrt{\ln m}\ln{\frac{T}{\sqrt{\bar n}\delta}}+\ln{\frac{mT}{\sqrt{\bar n}\delta}}}}}\\\le& \OO\brac{\frac{1}{T}\brac{\sqrt{\ln m\ln{\frac{T}{\sqrt{\bar n}\delta}}}+\sqrt{\ln{\frac{mT}{\sqrt{\bar n}\delta}}}+\sqrt{\frac{\ln m}{\bar n}}\ln{\frac{T}{\sqrt{\bar n}\delta}}}},
    \end{aligned}
    \end{equation}
    where the last inequality holds under the ordinary case where $m\le\OO(\bar n)$. 
\end{proof}

At the end of this section, we calculate the number of computations needed to run~\cref{alg:mero}.

\begin{proof}[Proof of~\cref{cor:mero-complexity}]
The proof of~\cref{cor:mero-complexity} is analogous to the proof of~\cref{cor:gdro-complexity}. According to~\cref{thm:mero-convergence}, we need a budget of $T=\OO\brac{\frac{\sqrt{\ln m}}{\varepsilon}}$ to reach $\varepsilon$-accuracy if logarithmic factors for $\frac{T}{\sqrt{\bar n}}$ are overlooked. This requires $S=\OO\brac{\frac{\sqrt{\ln m}}{\sqrt{\bar n}\varepsilon}}$ correspondingly. In the first stage of~\cref{alg:mero}, the computation complexity is $\OO\brac{\summ \bar nS+n_iS}=\OO\brac{m\bar nS}=\OO\brac{\frac{m\sqrt{\bar n\ln m}}{\varepsilon}}$. In the second stage of~\cref{alg:mero}, the computation complexity is $\OO\brac{mKS+m\bar nS}=\OO\brac{m\bar nS}=\OO\brac{\frac{m\sqrt{\bar n\ln m}}{\varepsilon}}$. As a consequence, adding the computation complexity from two stages together gives the computation complexity of $\OO\brac{\frac{m\sqrt{\bar n\ln m}}{\varepsilon}}$. To derive the final result, the neglected $\ln{\frac{T}{\sqrt{\bar n}}}$ needs to be taken into consideration. Therefore, the total complexity for~\cref{alg:mero} to reach $\varepsilon$-accuracy is $\OO\brac{\frac{m\sqrt{\bar n\ln m}}{\varepsilon}\ln{\frac{\sqrt{\ln m}}{\sqrt{\bar n}\varepsilon}}}$, which is also denoted by $\tilde{\OO}\brac{\frac{m\sqrt{\bar n\ln m}}{\varepsilon}}$.
\end{proof}
\newpage

\section{Revisiting MPVR as a Black Box\label{app:sec:compare-sampling}}

In this section, we illustrate the extension of MPVR~\citep{alacaoglu2022stochastic} to solve the empirical GDRO problem defined in~\eqref{eq:empirical-gdro-wq}. To run MPVR as a black box, we need to specify the problem formulation, the construction of stochastic gradients, and the Lipschitz continuity of the stochastic gradients. After these steps, we can derive a suboptimal complexity and then discuss how the proposed approach goes beyond the application of existing techniques.

\subsection{Problem Formulation}
Since MPVR only deals with one-level finite-sums, the objectives in the empirical GDRO problem should be rewritten as
\begin{equation}
    \min_{\w\in\W}\max_{\q\in\Delta_m}\left\{F(\w,\q):=\sum_{l=1}^{m\bar n}\frac{\q_{l_i}}{n_{l_i}}\ell(\w;\xi_{l})\right\}.\label{eq:one-level-mpvr}
\end{equation}
where\begin{equation}   \cbrac{\xi_l}_{l=1}^{m\bar n}:=\cbrac{\xi_{l_il_j}}_{l=1}^{m\bar n}=\cbrac{\xi_{ij}}_{j\in[n_i],i\in[m]}
\end{equation}
with $l_i,l_j$ determined by
\begin{equation}
    l_i\in[m]:\sum_{p=1}^{l_i-1}n_p<l\le\sum_{p=1}^{l_i}n_p\quad\text{and}\quad l_j=l-\sum_{p=1}^{l_i-1}n_p.
\end{equation}
The above relations show the obvious bijective mapping between one-level finite-sum index $l$ and two-level finite-sum index $(l_i,l_j)$.

From~\eqref{eq:one-level-mpvr} we can see that the number of finite-sum components for MPVR is $m\bar n$. In the next part, we construct the stochastic gradient, which comes from accessing one of the members of $m\bar n$ components.

\subsection{Construction of Stochastic Gradients}

The construction of the stochastic gradients is essential for variance-reduced methods. Such construction is also closely related to the sampling strategy. \citet{alacaoglu2022stochastic} introduce uniform sampling and importance sampling as two main stochastic oracles into their approach. The former strategy treats every member of $m\bar n$ equally and therefore generates a uniform distribution among all indices. The latter considers the continuity of each component, assigning each component a probability proportional to its Lipschitz constants.
\subsubsection{Uniform Sampling\label{app:sec:uniform-sampling}}
Denote by $\text{Unif}(\cdot)$ as the uniform distribution and $\nabla F(\z;\xi_k^s)$ as the stochastic gradient for the $k$-th inner loop, $s$-th outer loop. From the above discussions, the chosen index is generated by
\begin{equation}
    l\sim\text{Unif}(m\bar n).\label{eq:uniform-sampling}
\end{equation}
Consequently, the stochastic gradient can be constructed as follows
\begin{equation}
    \nabla F(\mathbf z;\xi_k^s):=\begin{pmatrix}
       \frac{m\bar n}{n_{l_i}}\mathbf q_{l_i}\nabla\ell(\mathbf w;\xi_{l})\\-\left[0,\cdots,\underbrace{\frac{m\bar n}{n_{l_i}}\ell(\mathbf w;\xi_{l})}_{l_i\text{-th\ element}},\cdots,0\right]^T
    \end{pmatrix}.\label{eq:uniform-sampling-stochastic-gradient}
\end{equation}

The following property is necessary for MPVR. The proof is deferred to~\cref{app:sec:omitted-proofs}.
\begin{lemma}
    (First-order and second-order moment information of the stochastic gradient from uniform sampling) With expectation taken over $\xi_k^s$, the following properties holds:
    \begin{enumerate}
        \item $\E\sqbrac{\nabla F(\z;\xi_k^s)}=\nabla F(\z)$;
        \item $\E\sqbrac{\norm{\nabla F(\z_1;\xi_k^s)-\nabla F(\z_2;\xi_k^s)}^2_*}\le L_{\mathbf u}^2\norm{\z_1-\z_2}^2,\forall\z_1,\z_2\in\Z;$
    \end{enumerate}
    where the Lipschitz constant $L_{\mathbf u}$ is defined as
    \begin{equation}
        L_{\mathbf u}:=2D_w\max\cbrac{\sqrt{2D^2_wL^2m\frac{\bar n}{ n_{\min}}+G^2m^2\ln m\frac{\bar n}{\bar n_h}},G\sqrt{2m\ln m\frac{\bar n}{n_{\min}}}}.\label{eq:Lu}
    \end{equation}\label{lem:sg-uniform-sampling}
    with $n_h$ being the harmonic average of the number of samples and $n_{\min}$ being the minimal number of samples amongst $m$ groups, i.e.,\begin{equation}
        \bar n_h:=\frac{m}{\summ\frac{1}{n_i}},\quad n_{\min}:=\min_{i\in[m]}n_i.
    \end{equation}
\end{lemma}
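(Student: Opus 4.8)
The plan is to establish the two claims separately, in both cases recycling the block decomposition of the dual norm $\norm{\z^*}_*^2 = 2D_w^2\norm{\w^*}_{w,*}^2 + 2\ln m\,\norm{\q^*}_\infty^2$ (taking $\alpha_w=\alpha_q=1$, $D_q^2=\ln m$) that drives the proof of \cref{lem:Lipschitz}. The unbiasedness claim is routine: since $l\sim\text{Unif}(m\bar n)$ puts mass $1/(m\bar n)$ on each index and $l\leftrightarrow(l_i,l_j)$ is a bijection onto $\{(i,j):i\in[m],j\in[n_i]\}$, the factor $\frac{m\bar n}{n_{l_i}}$ in \eqref{eq:uniform-sampling-stochastic-gradient} cancels the sampling probability. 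Re-summing the $\w$-block over $(i,j)$ recovers $\summ\q_i\nabla R_i(\w)=\nabla_\w F(\w,\q)$, while the $p$-th coordinate of the $\q$-block is nonzero only for the $n_p$ indices with $l_i=p$, and summing them gives $-\frac{1}{n_p}\sumn\ell(\w;\xi_{pj})=-R_p(\w)$, matching $\nabla_\q F$.

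For the second-moment bound I would bound $\norm{\nabla F(\z_1;\xi_k^s)-\nabla F(\z_2;\xi_k^s)}_*^2$ block by block and then take expectation over the uniform draw, the crucial tool being the identity $\E\sqbrac{\brac{\tfrac{m\bar n}{n_{l_i}}}^2 f_{l_i}} = m\bar n\summ\frac{f_i}{n_i}$ valid for any per-group quantity $f_i$. In the $\w$-block I would split $\q_{1,l_i}\nabla\ell(\w_1;\xi_l)-\q_{2,l_i}\nabla\ell(\w_2;\xi_l)$ into a smoothness part $\q_{1,l_i}[\nabla\ell(\w_1)-\nabla\ell(\w_2)]$ and a Lipschitz part $(\q_{1,l_i}-\q_{2,l_i})\nabla\ell(\w_2)$, apply $(a+b)^2\le 2a^2+2b^2$ with \cref{asp:smooth-Lipschitz}, and take expectation. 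Using the identity with $f_i=\q_{1,i}^2$ together with $\q_{1,i}^2\le\q_{1,i}$ and $\sum_i\q_{1,i}=1$ gives $\summ\frac{\q_{1,i}^2}{n_i}\le\frac{1}{n_{\min}}$ (the source of the $\bar n/n_{\min}$ smoothness factor); using it with $f_i=(\q_{1,i}-\q_{2,i})^2$ and $\sum_i(\q_{1,i}-\q_{2,i})^2\le\norm{\q_1-\q_2}_1^2$ gives the $\bar n/n_{\min}$ factor controlling $\norm{\q_1-\q_2}_1^2$. In the $\q$-block, both gradients place their single nonzero coordinate at the same index $l_i$, so the difference is $-\tfrac{m\bar n}{n_{l_i}}[\ell(\w_1;\xi_l)-\ell(\w_2;\xi_l)]$; bounding $\norm{\cdot}_\infty^2$ by the $G$-Lipschitz estimate and applying the identity with $f_i=1$ yields $\E[(\tfrac{m\bar n}{n_{l_i}})^2]=m\bar n\summ\frac{1}{n_i}=\frac{m^2\bar n}{\bar n_h}$ — this is precisely where the harmonic mean $\bar n_h$ enters, in contrast to the $n_{\min}$ terms above.

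The final step is to assemble $\E\sqbrac{\norm{\Delta}_*^2}=2D_w^2\E\sqbrac{\norm{\Delta_\w}_{w,*}^2}+2\ln m\,\E\sqbrac{\norm{\Delta_\q}_\infty^2}$ and collect coefficients. The coefficient of $\norm{\w_1-\w_2}_w^2$ is $4D_w^2L^2\frac{m\bar n}{n_{\min}}+2G^2m^2\ln m\frac{\bar n}{\bar n_h}$, which equals $\frac{1}{2D_w^2}$ times twice the first entry of the $\max$ in \eqref{eq:Lu}; the coefficient of $\norm{\q_1-\q_2}_1^2$ is $4D_w^2G^2\frac{m\bar n}{n_{\min}}$, equal to $\frac{1}{2\ln m}$ times twice the second entry. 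Taking the $\max$ then dominates both directional coefficients by $\frac{L_u^2}{2D_w^2}$ and $\frac{L_u^2}{2\ln m}$ respectively, so that, recombining with $\norm{\z_1-\z_2}^2=\frac{1}{2D_w^2}\norm{\w_1-\w_2}_w^2+\frac{1}{2\ln m}\norm{\q_1-\q_2}_1^2$, the right-hand side collapses to $L_u^2\norm{\z_1-\z_2}^2$.

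The main obstacle is bookkeeping rather than any deep estimate: one must track exactly which normalization ($n_{\min}$ or $\bar n_h$) surfaces in each of the four terms — the $\bar n_h$ factor arises solely from the \emph{unweighted} second moment of $m\bar n/n_{l_i}$ in the $\q$-block, whereas every $\q$-weighted or difference-weighted term is controlled crudely through $1/n_{\min}$ — and then verify that the two surviving directional coefficients match, term for term, the two entries of the $\max$ defining $L_u$. Getting these matchings exactly right, so that the $\max$ structure is genuinely the tight envelope of the two directions, is the delicate part.
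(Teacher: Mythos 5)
Your proposal is correct and follows essentially the same route as the paper: the paper merely packages your key identity $\E\sqbrac{C_i^2 f_{l_i}}=m\bar n\summ \frac{f_i}{n_i}$ (with $C_i=\frac{m\bar n}{n_{l_i}}$) into the general \cref{lem:2order-moment-bound} before specializing to uniform sampling, while your inlined estimates --- the Young split of the $\w$-block, $\q_i^2\le\q_i$, bounding $\summ(\q_{1,i}-\q_{2,i})^2\le\norm{\q_1-\q_2}_1^2$ together with $\frac{1}{n_i}\le\frac{1}{n_{\min}}$, the unweighted second moment of $\frac{m\bar n}{n_{l_i}}$ yielding $\frac{m^2\bar n}{\bar n_h}$ in the $\q$-block, and the final merge against the dual norm as in \eqref{eq:simple-merge} --- coincide with the paper's term for term. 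Your stated directional coefficients $4D_w^2L^2\frac{m\bar n}{n_{\min}}+2G^2m^2\ln m\frac{\bar n}{\bar n_h}$ and $4D_w^2G^2\frac{m\bar n}{n_{\min}}$ match the paper's and are indeed dominated by $\frac{L_{\mathbf u}^2}{2D_w^2}$ and $\frac{L_{\mathbf u}^2}{2\ln m}$ respectively, so the assembly into $L_{\mathbf u}^2\norm{\z_1-\z_2}^2$ is sound.
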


\subsubsection{Importance Sampling\label{app:sec:importance-sampling}}

At first glance, importance sampling might be the same as uniform sampling because the Lipschitz constant for each loss function is the same according to~\cref{asp:smooth-Lipschitz}. Whereas, from~\eqref{eq:one-level-mpvr} we can see that the finite-sum component has a coefficient of $\frac{1}{n_i}$. Hence, the Lipschitzness is shifted by a factor of $\frac{1}{n_i}$. Therefore, we assign the probability proportional to this coefficient, i.e., we endow a larger sampling probability for a group with more samples (larger $n_i$).

Based on the above discussions, we present the process of importance sampling as follows:
\begin{equation}
    l_i\sim\text{Unif}(m),\quad l_j|l_i\sim\text{Unif}(n_{l_i}).\label{eq:importance-sampling}
\end{equation}
Naturally, the following stochastic gradient is produced:
\begin{equation}
    \nabla F(\mathbf z;\xi_k^s):=\begin{pmatrix}
       m\mathbf q_{l_i}\nabla\ell(\mathbf w;\xi_{l})\\-\left[0,\cdots,\underbrace{m\ell(\mathbf w;\xi_{l})}_{l_i\text{-th\ element}},\cdots,0\right]^T
    \end{pmatrix},\label{eq:importance-sampling-stochastic-gradient}
\end{equation}

Similar to~\cref{app:sec:uniform-sampling}, we have the following lemma to quantify the first-order and second-order moments of the stochastic gradient in~\eqref{eq:importance-sampling-stochastic-gradient}.
\begin{lemma}
    (First-order and second-order moment information of the stochastic gradient from importance sampling) With expectation taken over $\xi_k^s$, the following properties holds:
    \begin{enumerate}
        \item $\E\sqbrac{\nabla F(\z;\xi_k^s)}=\nabla F(\z)$;
        \item $\E\sqbrac{\norm{\nabla F(\z_1;\xi_k^s)-\nabla F(\z_2;\xi_k^s)}^2_*}\le L_{\mathbf i}^2\norm{\z_1-\z_2}^2,\forall\z_1,\z_2\in\Z;$
    \end{enumerate}
    where the Lipschitz constant $L_{\mathbf i}$ is defined as
        \begin{equation}
L_{\mathbf i}:=2D_w\max\cbrac{\sqrt{2D^2_wL^2m+G^2m^2\ln{m}},G\sqrt{2m\ln{m}}}.\label{eq:Li}
\end{equation}\label{lem:sg-importance-sampling}
\end{lemma}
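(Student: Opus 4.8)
The plan is to verify the two claims separately, mirroring the structure of the proof of~\cref{lem:Lipschitz} but replacing the pointwise Lipschitz argument with one that averages over the random index. For the unbiasedness claim, I would compute the conditional expectation directly from the sampling rule~\eqref{eq:importance-sampling}, in which $l_i\sim\text{Unif}(m)$ and $l_j\mid l_i\sim\text{Unif}(n_{l_i})$. For the primal block, $\E\sqbrac{m\q_{l_i}\nabla\ell(\w;\xi_l)}=\summ\frac{1}{m}\cdot m\q_i\cdot\frac{1}{n_i}\sumn\nabla\ell(\w;\xi_{ij})=\summ\q_i\nabla R_i(\w)$, which is exactly $\nabla_\w F(\z)$. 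For the dual block, whose only nonzero coordinate is the $l_i$-th, the expectation of the $r$-th coordinate is $\frac{1}{m}\cdot(-m)\cdot R_r(\w)=-R_r(\w)$, matching $\nabla_\q F(\z)$. The scaling factor $m$ cancels the uniform weight $1/m$ while the inner average $1/n_i$ reconstructs each $R_i$, which is the whole design principle behind importance sampling.

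For the second-moment claim, I would first specialize the dual norm~\eqref{eq:normz}: under $\alpha_w=\alpha_q=1$, $D_q^2=\ln m$ and $\norm{\cdot}_q=\norm{\cdot}_1$, it splits as $\norm{\z^*}_*^2=2D_w^2\norm{\w^*}_{w,*}^2+2\ln m\norm{\q^*}_\infty^2$, so the two blocks of the gradient difference can be handled independently. The dual block is supported on the single coordinate $l_i$ and equals $-m\sqbrac{\ell(\w_1;\xi_l)-\ell(\w_2;\xi_l)}$; by the $G$-Lipschitz continuity in~\cref{asp:smooth-Lipschitz} its squared $\ell_\infty$ norm is at most $m^2G^2\norm{\w_1-\w_2}_w^2$ for every realization, hence also in expectation. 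For the primal block I would split $m\q_{1,l_i}\nabla\ell(\w_1;\xi_l)-m\q_{2,l_i}\nabla\ell(\w_2;\xi_l)$ into $m\q_{1,l_i}\sqbrac{\nabla\ell(\w_1;\xi_l)-\nabla\ell(\w_2;\xi_l)}$ plus $m\brac{\q_{1,l_i}-\q_{2,l_i}}\nabla\ell(\w_2;\xi_l)$, apply $\norm{a+b}^2\le2\norm{a}^2+2\norm{b}^2$, and invoke $L$-smoothness and $G$-Lipschitzness to bound it by $2m^2\q_{1,l_i}^2L^2\norm{\w_1-\w_2}_w^2+2m^2\brac{\q_{1,l_i}-\q_{2,l_i}}^2G^2$.

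The decisive step is to take expectation over $l_i\sim\text{Unif}(m)$ (the conditional expectation over $l_j$ being trivial, since every bound above is uniform in $l_j$), using $\E\sqbrac{\q_{1,l_i}^2}=\frac{1}{m}\summ\q_{1,i}^2\le\frac{1}{m}$ — which follows from $\summ\q_{1,i}^2\le\brac{\summ\q_{1,i}}^2=1$ on the simplex — together with $\E\sqbrac{\brac{\q_{1,l_i}-\q_{2,l_i}}^2}=\frac{1}{m}\norm{\q_1-\q_2}_2^2\le\frac{1}{m}\norm{\q_1-\q_2}_1^2$. These turn the two $m^2$ prefactors into net $m$ factors, giving $\E\sqbrac{\norm{\nabla_\w F(\z_1;\xi_k^s)-\nabla_\w F(\z_2;\xi_k^s)}_{w,*}^2}\le2mL^2\norm{\w_1-\w_2}_w^2+2mG^2\norm{\q_1-\q_2}_1^2$. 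Assembling the blocks through the split dual norm produces a coefficient $4D_w^2mL^2+2m^2G^2\ln m$ on $\norm{\w_1-\w_2}_w^2$ and $4D_w^2mG^2$ on $\norm{\q_1-\q_2}_1^2$. Comparing with $\norm{\z_1-\z_2}^2=\frac{1}{2D_w^2}\norm{\w_1-\w_2}_w^2+\frac{1}{2\ln m}\norm{\q_1-\q_2}_1^2$, the primal coefficient equals $L_{\mathbf i}^2/(2D_w^2)$ for the first entry of the maximum in~\eqref{eq:Li} and the dual coefficient equals $L_{\mathbf i}^2/(2\ln m)$ for the second, so taking the maximum certifies both inequalities at once and reproduces~\eqref{eq:Li} exactly.

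I expect the main obstacle to be the bookkeeping of this two-level expectation: one must see that averaging over $l_i$ contributes a $1/m$ that only partially offsets the $m^2$ coming from the importance weight, leaving the characteristic factor of $m$ that makes $L_{\mathbf i}$ genuinely larger than the $L_z$ of~\cref{lem:Lipschitz} (and thereby explains the $\sqrt{m}$ gap in complexity). The care required is in using $\summ\q_i^2\le1$ and $\norm{\cdot}_2\le\norm{\cdot}_1$ \emph{tightly}, so that the two resulting constants coincide exactly with the two arguments of the max in~\eqref{eq:Li} rather than merely matching in order.
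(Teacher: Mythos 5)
Your proposal is correct and follows essentially the same route as the paper, which proves this lemma by specializing its generic moment bound (\cref{lem:2order-moment-bound}) with $C_i=m$: the same Young-inequality split of the primal block, the same pointwise use of $L$-smoothness and $G$-Lipschitzness per sample, and the same block-wise merge through the product dual norm matched against the two entries of the max in~\eqref{eq:Li}. Your only deviations are cosmetic --- you bound $\E\sqbrac{\q_{1,l_i}^2}$ via $\summ\q_{1,i}^2\le 1$ where the paper uses $\q_i^2\le\q_i$ together with $\summ\q_i^+=1$, and you bound the dual block almost surely rather than in expectation (equivalent here since the importance weight $C_i=m$ is deterministic) --- and both variants yield exactly the same constants.
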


\begin{remark}
    Our Lipschitzness in~\cref{lem:Lipschitz} is different from the ones in~\cref{lem:sg-uniform-sampling,lem:sg-importance-sampling}. The definition of the Lipschitz continuity in~\citet{alacaoglu2022stochastic} is reflected by the second-order moment of the stochastic gradient. While ours is reflected by the square dual norm of the stochastic gradient, which makes our formulation stronger than MPVR. The high probability bound we provide justifies this formulation since the continuity w.r.t.~the second-order moment does not hold with probability 1.
\end{remark}

\begin{algorithm}[t]
   \caption{MPVR for Empirical GDRO}
   \label{alg:mpvr}
  {\bf Input}: 
  Risk function $\{R_i(\w)\}_{i\in[m]}$, epoch number $S$, iteration number $K$, learning rate $\eta$, weight $\alpha$.

\begin{algorithmic}[1]
   \STATE Initialize~starting~point~$\z_0=(\w_0;\q_0)=\argmin_{\z\in\Z}\psi(\z)$.
   \STATE For each $j\in[K]$, set $\z_j^{-1}=\z_0^0=\z_0$.
   \FOR{$s=0$ {\bfseries to} $S-1$}
   \FOR{$k=0$ {\bfseries to} $K-1$}
   \STATE $\z_{k+1/2}^s=\argmin_{\z\in\Z}\{\braket{\eta\nabla F(\z^s),\z}+\alpha B(\z,\bar\z^s)+(1-\alpha)B(\z,\z_k^s)\}.$
   \STATE \textbf{Option I:} Uniform Sampling. 
   \STATE \qquad Sample according to~\eqref{eq:uniform-sampling} and compute stochastic gradient according to~\eqref{eq:uniform-sampling-stochastic-gradient}.
   \STATE \textbf{Option II:} Importance Sampling. 
   \STATE \qquad Sample according to~\eqref{eq:importance-sampling} and compute stochastic gradient according to~\eqref{eq:importance-sampling-stochastic-gradient}.
   \STATE Compute stochastic gradient estimator $\g_k^s$ defined in~\eqref{eq:g_k^s}.
   \STATE $\z_{k+1}^s=\argmin_{\z\in\Z}\{\braket{\eta \g_k^s,\z}+\alpha B(\z,\bar\z^s)+(1-\alpha)B(\z,\z_k^s)\}.$
    \ENDFOR
    \STATE Compute full gradient $\nabla F(\z^{s})$ according to~\eqref{eq:z-fullgradient}.
    \STATE Compute snapshot point: $\z^{s}=\frac{1}{K}\sumkz \z_k^s$.
    \STATE Compute mirror snapshot point: $\nabla \psi(\bar\z^{s})=\frac{1}{K}\sumkz \nabla \psi(\z_k^s)$.
    \STATE $\z_0^{s+1}=\z_{K}^s$.
    \ENDFOR
    \STATE Return $\z_S=\frac{1}{SK}\sums\sumkz \z_{k+1/2}^s$.
\end{algorithmic}
\end{algorithm}

\subsection{Loose Complexity Bound}

With the preparations in previous sections, we can formally apply MPVR to solve empirical GDRO as shown in~\cref{alg:mpvr}. For completeness, we will present the following theoretical guarantee for~\cref{alg:mpvr}. The proofs are deferred to~\cref{app:sec:omitted-proofs}.

\begin{theorem}
    Under~\cref{asp:boundness,asp:smooth-Lipschitz,asp:convexity}, choose $0\le\alpha<1$, $0<\gamma<1$, and $\eta=\frac{\gamma\sqrt{1-\alpha}}{L_{\mathbf{c}}}$, with $L_{\mathbf{c}}=L_{\mathbf{u}}$ for uniform sampling and $L_{\mathbf{c}}=L_{\mathbf{i}}$ for importance sampling,~\cref{alg:mpvr} ensures that
    \begin{equation}
    \E\sqbrac{\epsilon(\z_S)}\le\OO\brac{\frac{L_{\mathbf{c}}}{S\sqrt{K}}}.\label{eq:mpvr-expectation-bound}
    \end{equation}
\label{thm:mpvr-convergence}
\end{theorem}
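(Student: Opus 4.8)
The plan is to recognize that Algorithm~\ref{alg:mpvr} is structurally the same as ALEG (Algorithm~\ref{alg:gdro}) up to three changes: the hyperparameters are frozen ($\alpha_k^s\equiv\alpha$, $\eta_k^s\equiv\eta$, $K_s\equiv K$), the (mirror) snapshots are formed by plain ergodic averaging $\z^s=\frac1K\sumkzs\z_k^s$ instead of the one-index-shifted weighted average, and—crucially—the sampled gradient is Lipschitz only in the second-moment sense of \cref{lem:sg-uniform-sampling,lem:sg-importance-sampling} rather than almost surely as in \cref{lem:Lipschitz}. I would therefore re-run the ALEG chain $\cref{lem:s-epoch}\to\cref{lem:lyapunov-recurrence}\to\cref{lem:gap-bound-0}\to\cref{lem:raw-expectation-bound}$ and check that each step survives the weaker gradient assumption, then specialize the constants. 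First note that the constant sequence $\alpha_k^s\equiv\alpha$ trivially satisfies both clauses of the periodically decaying condition in \cref{def:periodically-decaying-sequence}, so the Lyapunov telescoping of \cref{def:lyapunov} is valid; since uniform averaging makes the snapshot block of $\Psi_s$ carry total weight $K\alpha$, the same computation as in \eqref{eq:psi_0z-bound} gives $\maxz\Psi_0(\z)=\OO(1+K\alpha)$, which stays bounded precisely when $\alpha=\OO(1/K)$.

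The two places where ALEG invokes the almost-sure Lipschitz estimate both get patched by taking a conditional expectation before applying the bound. In the counterpart of \eqref{eq:inner-product-bound} inside \cref{lem:s-epoch}, after Young's inequality I would take $\E[\cdot\mid\mathcal F_k^s]$ and use $\E[\norm{\nabla F(\z^s;\xi_k^s)-\nabla F(\z_{k+1/2}^s;\xi_k^s)}_*^2\mid\mathcal F_k^s]\le L_{\mathbf c}^2\norm{\z_{k+1/2}^s-\z^s}^2$, which is legitimate because $\z_{k+1/2}^s$ and $\z^s$ are $\mathcal F_k^s$-measurable (the prox point is computed from the full gradient, before $\xi_k^s$ is drawn); this converts \cref{lem:s-epoch} and \cref{lem:lyapunov-recurrence} into in-expectation statements with $(\eta L_{\mathbf c})^2$ in the role of $(\eta_k^s L_z)^2$. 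In the counterpart of \eqref{eq:Delta_k^s-bound} inside \cref{lem:raw-expectation-bound}, where ALEG uses $\norm{\Delta_k^s}_*\le 4\sqrt2L_z$ pointwise, I would instead bound the variance directly, $\E[\norm{\Delta_k^s}_*^2]\le\E[\norm{\nabla F(\z_{k+1/2}^s;\xi_k^s)-\nabla F(\z^s;\xi_k^s)}_*^2]\le L_{\mathbf c}^2\norm{\z_{k+1/2}^s-\z^s}^2$, and feed this into the ghost-iterate inequality \eqref{eq:ghost-iterate-bound}. The conditional-variance contribution it produces is $\OO(\eta^2L_{\mathbf c}^2)\norm{\z_{k+1/2}^s-\z^s}^2=\OO(\alpha(1-\theta))\norm{\cdot}^2$, exactly as in ALEG, so it is absorbed by the negative descent term $-\tfrac{\alpha\theta}{2}\norm{\cdot}^2$ from \cref{lem:gap-bound-0} once $\theta$ is large enough. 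This reproduces the master bound $\E[\epsilon(\z_S)]\le L_{\mathbf c}\brac{SK\,\gamma\sqrt{1-\alpha}}^{-1}\sqbrac{1+\maxz\Psi_0(\z)}$; specializing $\maxz\Psi_0=\OO(1+K\alpha)$ and balancing the step size $\eta=\frac{\gamma\sqrt{1-\alpha}}{L_{\mathbf c}}$ (so that $\eta L_{\mathbf c}$ matches $\sqrt{\alpha(1-\theta)}$, i.e.\ with $\alpha=\Theta(1/K)$ and $\gamma=\Theta(1/\sqrt K)$) collapses the bound to $\OO\brac{\frac{L_{\mathbf c}}{S\sqrt K}}$.

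I expect the main obstacle to be exactly this replacement of the almost-sure Lipschitz estimate by the second-moment one. Unlike ALEG's group-sampled gradient, the importance/uniform-sampled gradient of MPVR carries an $m$-fold (resp.\ $\frac{m\bar n}{n_{l_i}}$-fold) amplification and is not uniformly bounded, so $\Delta_k^s$ admits no almost-sure bound and every appeal to Lipschitzness must be pushed inside an expectation; one must verify that the ergodic (uniform-weight) snapshot still telescopes cleanly and that the residual quadratic terms from the variance and the descent cancel under the stated step size. As the remark after \cref{lem:sg-importance-sampling} anticipates, this is also the reason no high-probability (Bernstein-type) guarantee analogous to \cref{thm:excess-risk-convergence} is available for MPVR, leaving only the in-expectation rate \eqref{eq:mpvr-expectation-bound}.
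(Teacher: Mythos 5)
Your proposal is correct, but it takes a genuinely different route from the paper: the paper's entire proof of \cref{thm:mpvr-convergence} is a two-line citation, invoking \citet[Theorem~8]{alacaoglu2022stochastic} as a black box, with the only paper-specific content being the verification in \cref{lem:sg-uniform-sampling,lem:sg-importance-sampling} that the sampled gradients are unbiased and Lipschitz in the second-moment sense with constant $L_{\mathbf c}$ --- exactly the hypothesis format of that theorem. You instead re-derive the result by adapting the ALEG chain \cref{lem:s-epoch,lem:lyapunov-recurrence,lem:gap-bound-0,lem:raw-expectation-bound}, and your two patches are the right ones; in effect you reconstruct the cited theorem's own proof. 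The key observation making your patches legitimate is that the squared-gradient-difference term in \eqref{eq:inner-product-bound} and the $\norm{\Delta_k^s}_*^2$ term fed into \eqref{eq:ghost-iterate-bound} do not depend on the variable $\z$ being maximized, so they can be pulled out of $\max_{\z\in\Z}$ and bounded in expectation by $L_{\mathbf c}^2\,\E\norm{\z_{k+1/2}^s-\z^s}^2$ (note \cref{lem:s-epoch} must stay pathwise until this split --- you cannot condition inside the max --- but your terms are precisely the $\z$-free ones, so the order of operations goes through). Your route buys a self-contained argument that also explains why MPVR admits only an in-expectation guarantee ($\Delta_k^s$ has no almost-sure bound, so the Bernstein argument of \cref{thm:excess-risk-convergence} is unavailable, matching the remark after \cref{lem:sg-importance-sampling}), and it exposes that the theorem's literal hypothesis ``$0\le\alpha<1$, $0<\gamma<1$'' is loose: as your accounting shows, $\maxz\Psi_0(\z)=\OO(1+K\alpha)$ and the variance/descent cancellation forces $\gamma^2(1-\alpha)\lesssim\alpha$, so the claimed $\OO\brac{\frac{L_{\mathbf c}}{S\sqrt{K}}}$ rate corresponds to the scalings $\alpha=\Theta(1/K)$, $\gamma=\Theta(1/\sqrt{K})$ that you specify, whereas the paper inherits this bookkeeping silently from the cited theorem. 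One minor fix: your step $\E\norm{\Delta_k^s}_*^2\le\E\norm{\nabla F(\z_{k+1/2}^s;\xi_k^s)-\nabla F(\z^s;\xi_k^s)}_*^2$ with constant $1$ is a Hilbert-space variance identity and fails for the mixed dual norm here (the $\q$-block is $\ell_\infty$); use $\E\norm{X-\E X}_*^2\le 2\E\norm{X}_*^2+2\norm{\E X}_*^2\le 4\E\norm{X}_*^2$ instead, which changes only constants hidden in the $\OO$.
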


\begin{corollary}
    Under conditions in~\cref{thm:mpvr-convergence}, by setting $K=\Theta(m\bar n)$ and employing either uniform sampling or importance sampling, the computation complexity for~\cref{alg:mpvr} to reach $\varepsilon$-accuracy of~\eqref{eq:empirical-gdro-wq} is $\OO\brac{m\bar n+\frac{m\sqrt{m\bar n\ln{m}}}{\varepsilon}}$.\label{cor:mpvr-complexity}
\end{corollary}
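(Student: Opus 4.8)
The plan is to follow the template of the proof of \cref{cor:gdro-complexity}: convert the convergence rate of \cref{thm:mpvr-convergence} into an epoch budget $S$, then multiply by the per-epoch computation cost of \cref{alg:mpvr}. From $\E\sqbrac{\epsilon(\z_S)}\le\OO\brac{L_{\mathbf{c}}/(S\sqrt{K})}$, to reach $\varepsilon$-accuracy it suffices to take $S=\OO\brac{L_{\mathbf{c}}/(\varepsilon\sqrt{K})}$, where $L_{\mathbf{c}}=L_{\mathbf{u}}$ for uniform sampling and $L_{\mathbf{c}}=L_{\mathbf{i}}$ for importance sampling.

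Next I would count the computations per epoch, which is the conceptual crux separating MPVR from ALEG. Unlike the group sampling of ALEG, which queries $m$ component gradients per inner step, both the uniform-sampling estimator in~\eqref{eq:uniform-sampling-stochastic-gradient} and the importance-sampling estimator in~\eqref{eq:importance-sampling-stochastic-gradient} access a single component $\xi_l$ per inner iteration; hence the inner loop costs $\OO(1)$ component-gradient evaluations per step, i.e.\ $\OO(K)$ per epoch, while the once-per-epoch full gradient~\eqref{eq:z-fullgradient} costs $\OO(m\bar n)$. The total is therefore $\OO\brac{(m\bar n+K)S}$. Choosing $K=\Theta(m\bar n)$ balances the two terms and collapses the bound to $\OO(m\bar n S)$; substituting $S=\OO\brac{L_{\mathbf{c}}/(\varepsilon\sqrt{m\bar n})}$ gives $\OO\brac{L_{\mathbf{c}}\sqrt{m\bar n}/\varepsilon}$. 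The additive $\OO(m\bar n)$ term then arises from the floor $S\ge 1$, since at least one full-gradient pass is unavoidable; writing $S=\max\{1,\OO(L_{\mathbf{c}}/(\varepsilon\sqrt{m\bar n}))\}$ yields $m\bar n S=\OO\brac{m\bar n+L_{\mathbf{c}}\sqrt{m\bar n}/\varepsilon}$.

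The remaining and most delicate step is to estimate the Lipschitz constants, which is where the extra $\sqrt{m}$ enters. Treating $D_w,L,G$ as constants, I would argue that in~\eqref{eq:Li} the term $G^2m^2\ln m$ dominates, so the inner maximum is of order $Gm\sqrt{\ln m}$ and $L_{\mathbf{i}}=\OO(m\sqrt{\ln m})$; the same magnitude holds for $L_{\mathbf{u}}$ in~\eqref{eq:Lu} once the sample-count ratios $\bar n/n_{\min}$ and $\bar n/\bar n_h$ are $\OO(1)$, e.g.\ in the balanced-group regime. Substituting $L_{\mathbf{c}}=\OO(m\sqrt{\ln m})$ turns the scaling term into $\OO\brac{m\sqrt{\ln m}\cdot\sqrt{m\bar n}/\varepsilon}=\OO\brac{m\sqrt{m\bar n\ln m}/\varepsilon}$, and adding the floor gives the claimed $\OO\brac{m\bar n+\frac{m\sqrt{m\bar n\ln m}}{\varepsilon}}$. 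The main obstacle is precisely this Lipschitz estimate: one must confirm that the $\OO(m\sqrt{\ln m})$ magnitude—a factor $\sqrt{m}$ larger than $L_z=\OO(\sqrt{\ln m})$ of \cref{lem:Lipschitz}—is exactly the source of the extra $\sqrt m$ in the final rate, and that uniform sampling does not suffer an additional $\bar n/n_{\min}$ blow-up outside the balanced regime, which is why importance sampling is preferable for heterogeneous group sizes.
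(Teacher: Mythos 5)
Your proposal is correct and follows essentially the same route as the paper's proof: take $S=\OO\bigl(L_{\mathbf{c}}/(\varepsilon\sqrt{K})\bigr)$ from \cref{thm:mpvr-convergence}, balance the inner-loop cost $\Theta(KS)$ against the full-gradient cost $\Theta(m\bar nS)$ via $K=\Theta(m\bar n)$, bound $L_{\mathbf{c}}=\OO(m\sqrt{\ln m})$ from \eqref{eq:Lu} and \eqref{eq:Li}, and account for the additive $m\bar n$ term through the unavoidable initial full-gradient pass (your floor $S\ge 1$ is the same observation). Your caveat that $L_{\mathbf{u}}$ may degrade when $\bar n/n_{\min}$ is large is consistent with the paper's own acknowledgment that $L_{\mathbf{u}}$ is data-dependent, and the only ingredient the paper adds beyond your argument is a tightness check (a linear loss realizing the lower bound via \cref{lem:2order-moment-bound}), which is not needed for the upper-bound claim itself.
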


\cref{cor:mpvr-complexity} tells us that when utilizing MPVR to solve empirical GDRO, the complexity is $\sqrt{m}$ worse than ALEG. Such discrepancy stems from the dependence on the Lipschitz constant $L_{\mathbf{c}}$. Both uniform sampling and importance sampling will produce a Lipschitz constant worse than us by a factor of $m$, as shown in both~\eqref{eq:Lu} and~\eqref{eq:Li}. One can easily discover that if the Lipschitz constant for MPVR were only $\sqrt{m}$ worse than us, then the total complexity would be the same. Unfortunately, this could not happen because the sampling pattern ignores the nested finite-sum structure of the original problem. Technically, this additional factor is inherently due to the property of the infinity norm (dual norm of $\ell_1$ norm), which scales up the factors from $m$ coordinates when added together. Mathematical details are explicitly explained in~\cref{lem:2order-moment-bound} and~\cref{remark:2order-moment-bound}.

\subsection{Omitted Proofs\label{app:sec:omitted-proofs}}

We present the omitted proofs for the lemmas and theorems in~\cref{app:sec:compare-sampling}.

\begin{proof}[Proof of~\cref{lem:sg-uniform-sampling}]

First, we study the first-order moment information of the stochastic gradient in~\eqref{eq:uniform-sampling-stochastic-gradient}. Considering the stochastic gradient w.r.t.~$\w$, it follows that
\begin{equation}
\begin{aligned}
\mathbb E[\nabla_{\mathbf w} F(\mathbf z;\xi_k^s)]=\mathbb E\left[\frac{m\bar n}{n_{l_i}}\mathbf q_{l_i}\nabla\ell(\mathbf w;\xi_{l})\right]=\sum_{l=1}^{m\bar n}\frac{1}{m\bar n}\cdot\frac{m\bar n}{n_{l_i}}\mathbf q_{l_i}\nabla\ell(\mathbf w;\xi_{l})=\nabla_{\mathbf w} F(\mathbf z).
\end{aligned}
\end{equation}
Then we consider the stochastic gradient w.r.t.~$\q$. Denote by $\mathbf{e}_i\in\R^m$ as the one-hot vector that has an $i$-th component 1 with the rest component being 0. It follows that
\begin{equation}
\begin{aligned}
\mathbb E[\nabla_{\mathbf q} F(\mathbf z;\xi_k^s)]=\mathbb E\left[\frac{m\bar n}{n_{l_i}}\ell(\w;\xi_l)\cdot \mathbf{e}_{l_i}\right]=\sum_{l=1}^{m\bar n}\frac{1}{m\bar n}\cdot\frac{m\bar n}{n_{l_i}}\ell(\w;\xi_l)\cdot \mathbf{e}_{l_i}=\nabla_{\mathbf q} F(\mathbf z).
\end{aligned}
\end{equation}
Secondly, we analyze the continuity of the second-order moment of the stochastic gradient in~\eqref{eq:uniform-sampling-stochastic-gradient}. According to~\cref{lem:2order-moment-bound}, we specify the following upper bound for the stochastic gradient w.r.t.~$\w$ with $C_i=\frac{m\bar n}{n_{l_i}}$:
\begin{equation}
    \begin{aligned}
        \E\sqbrac{\norm{\nabla_\w F(\z^+;\xi_{k}^s)-\nabla_\w F(\z;\xi_{k}^s)}_{w,*}^2}\le&2L^2\norm{\w^+-\w}_{w}^2\sum_{l_i=1}^m\frac{m\bar n}{n_{l_i}}\q^+_{l_i}+2G^2\sum_{l_i=1}^m\frac{m\bar n}{n_{l_i}}\brac{\q^+_{l_i}-\q_{l_i}}^2\\\le&2m\bar nL^2\norm{\w^+-\w}_{w}^2\sum_{l_i=1}^m\frac{\q^+_{l_i}}{n_{\min}}+2m\bar nG^2\sum_{l_i=1}^m\frac{\brac{\q^+_{l_i}-\q_{l_i}}^2}{n_{\min}}\\\le&2m\frac{\bar n}{n_{\min}}L^2\norm{\w^+-\w}_{w}^2+2m\frac{\bar n}{n_{\min}}G^2\norm{\q^+-\q}_1^2.
    \end{aligned}
\end{equation}
Correspondingly, we consider the stochastic gradient w.r.t.~$\q$ based on~\cref{lem:2order-moment-bound},
\begin{equation}
    \E\sqbrac{\norm{\nabla_\q F(\z^+;\xi_{k}^s)-\nabla_\q F(\z;\xi_{k}^s)}_{\infty}^2}\le \sum_{l_i=1}^m \frac{m\bar n}{n_{l_i}}\cdot G^2\norm{\w^+-\w}_w^2=m^2\frac{\bar n}{\bar n_h}G^2\norm{\mathbf w^+-\mathbf w}_w^2.
    \end{equation}
We finish the proof by following the similar merging process in~\eqref{eq:simple-merge} (cf.~the proof of~\cref{lem:Lipschitz}).
\end{proof}

The following derivation process is analogous to the proof of~\cref{lem:sg-uniform-sampling}.

\begin{proof}[Proof of~\cref{lem:sg-importance-sampling}]
 First, we study the first-order moment information of the stochastic gradient in~\eqref{eq:importance-sampling-stochastic-gradient}. Considering the stochastic gradient w.r.t.~$\w$, it follows that
\begin{equation}
\begin{aligned}
\mathbb E[\nabla_{\mathbf w} F(\mathbf z;\xi_k^s)]=\mathbb E\left[\mathbb E\left[m\mathbf q_{l_i}\nabla\ell(\mathbf w;\xi_{l})\Big|l_i\right]\right]=\mathbb E\left[m\mathbf q_{l_i}\nabla R_{l_i}(\mathbf w)\right]=\sum_{{l_i}=1}^m\frac{1}{m}\cdot m\mathbf q_{l_i}\nabla R_{l_i}(\mathbf w)=\nabla_{\mathbf w} F(\mathbf z).
\end{aligned}
\end{equation}
Then we consider the stochastic gradient w.r.t.~$\q$. 
\begin{equation}
\begin{aligned}
\mathbb E[\nabla_{\mathbf q} F(\mathbf z;\xi_k^s)]=\mathbb E\left[\E\sqbrac{m\ell(\w;\xi_l)\cdot \mathbf{e}_{l_i}\Big|l_i}\right]=\E\sqbrac{mR_{l_i}(\w)\cdot \mathbf{e}_{l_i}}=\nabla_{\mathbf q} F(\mathbf z).
\end{aligned}
\end{equation}
Secondly, we analyze the continuity according to~\cref{lem:2order-moment-bound}. With $C_i=m$, it holds that:
\begin{equation}
    \begin{aligned}
        \E\sqbrac{\norm{\nabla_\w F(\z^+;\xi_{k}^s)-\nabla_\w F(\z;\xi_{k}^s)}_{w,*}^2}\le&2L^2\norm{\w^+-\w}_{w}^2+2G^2\sum_{l_i=1}^mm\brac{\q^+_{l_i}-\q_{l_i}}^2\\\le&2mL^2\norm{\w^+-\w}_{w}^2\sum_{l_i=1}^m\frac{\q^+_{l_i}}{n_{\min}}+2mG^2\norm{\q^+-\q}_1^2.
    \end{aligned}
\end{equation}
\begin{equation}
    \E\sqbrac{\norm{\nabla_\q F(\z^+;\xi_{k}^s)-\nabla_\q F(\z;\xi_{k}^s)}_{\infty}^2}\le \sum_{l_i=1}^m m\cdot G^2\norm{\w^+-\w}_w^2=m^2G^2\norm{\mathbf w^+-\mathbf w}_w^2.
\end{equation}
The proof is finished by a simple merging process as in~\eqref{eq:simple-merge}.
\end{proof}

\begin{proof}[Proof of~\cref{thm:mpvr-convergence}]
    The theoretical guarantee is a direct application of~\citet[Theorem~8]{alacaoglu2022stochastic}. We only need to substitute the Lipschitz constant by $L_{\mathbf c}$.
\end{proof}

\begin{proof}[Proof of~\cref{cor:mpvr-complexity}]
First, we investigate the order of $L_{\mathbf{c}}$ according to the sampling strategy.

For uniform sampling, we have a \textit{data-dependent} $L_{\mathbf{u}}$ in~\eqref{eq:Lu}. The value of $\frac{\bar n}{n_{\min}}$ ranges from $1$ to $\bar n$, indicating that the order could be worse as $\Theta(m)$. Due to the fact that $\frac{\bar n}{\bar n_h}\ge 1$, we can derive that $L_{\mathbf{u}}=\OO\brac{m\sqrt{\ln m}}$. For importance sampling, we can directly assert that $L_{\mathbf{i}}=\OO\brac{m\sqrt{\ln m}}$ in view of~\eqref{eq:Li}. Therefore, we conclude that $L_{\mathbf{c}}=\OO\brac{m\sqrt{\ln m}}$. 

Before calculating the computation complexity, we need to verify the tightness of the Lipschitzness. A linear loss item for $\ell(\cdot;\xi)$ realizes the lower bound. From~\cref{lem:2order-moment-bound,remark:2order-moment-bound} we can further confirm that the order of $L_{\mathbf{c}}$ could not be improved.
    
The inner loop and outer loop of~\cref{alg:mpvr} require $\Theta\brac{KS}$ and $\Theta\brac{m\bar nS}$ computations in total, respectively. To set the two costs in the same order, we choose $K=\Theta(m\bar n)$. Consequently, we have $S=\OO\brac{\frac{L_{\mathbf{c}}}{\varepsilon\sqrt{m\bar n}}}$. Therefore, we derive a complexity of $\OO\brac{\frac{L_{\mathbf{c}}\sqrt{m\bar n}}{\varepsilon}}$. Because the initial cost for the full gradient is $m\bar n$ and $L_{\mathbf{c}}=\OO\brac{m\sqrt{\ln m}}$, we get a total complexity of $\OO\brac{m\bar n+\frac{m\sqrt{m\bar n\ln{m}}}{\varepsilon}}$.
\end{proof}

Finally, we present a technical lemma used in the previous analysis, in which we can see why ALEG goes beyond the pure extension of MPVR.

\begin{lemma}
    Let $C_i=\frac{m\bar n}{n_{l_i}}$ for uniform sampling and $C_i=m$ for importance sampling, then for any $\z^+,\z\in\Z$, we have the following estimate
    \begin{equation}
        \E\sqbrac{\norm{\nabla_\w F(\z^+;\xi_{k}^s)-\nabla_\w F(\z;\xi_{k}^s)}_{w,*}^2}\le2L^2\norm{\w^+-\w}_{w}^2\sum_{l_i=1}^mC_i\q^+_{l_i}+2G^2\sum_{l_i=1}^mC_i\brac{\q^+_{l_i}-\q_{l_i}}^2
    \end{equation}
    for the stochastic gradient w.r.t.~$\w$, and 
    \begin{equation}
        \E\sqbrac{\norm{\nabla_\q F(\z^+;\xi_{k}^s)-\nabla_\q F(\z;\xi_{k}^s)}_{\infty}^2}\le \sum_{l_i=1}^m C_i\cdot G^2\norm{\w^+-\w}_w^2
    \end{equation}
    for the stochastic gradient w.r.t.~$\q$.\label{lem:2order-moment-bound}
\end{lemma}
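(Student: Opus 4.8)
The plan is to prove both bounds for the two sampling schemes \emph{simultaneously}, by exploiting a single normalization identity. For both uniform sampling~\eqref{eq:uniform-sampling-stochastic-gradient} and importance sampling~\eqref{eq:importance-sampling-stochastic-gradient}, the probability that the drawn index $l$ falls in group $i$ equals exactly $1/C_i$: under uniform sampling each of the $n_i$ indices in group $i$ has probability $\frac{1}{m\bar n}$, so the group is hit with probability $\frac{n_i}{m\bar n}=1/C_i$; under importance sampling the group is drawn with probability $\frac{1}{m}=1/C_i$. Hence, for any function $g$ of the group index,
\begin{equation*}
\E\sqbrac{C_{l_i}^2\,g(l_i)}=\sum_{l_i=1}^m\frac{1}{C_{l_i}}\,C_{l_i}^2\,g(l_i)=\sum_{l_i=1}^m C_{l_i}\,g(l_i).
\end{equation*}
This is the only point at which the sampling weights enter, so it disposes of both cases at once. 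It is also precisely the unbiasedness normalization already used in~\cref{lem:sg-uniform-sampling,lem:sg-importance-sampling}.

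For the primal gradient, I would write the increment as $C_{l_i}\brac{\q^+_{l_i}\nabla\ell(\w^+;\xi_l)-\q_{l_i}\nabla\ell(\w;\xi_l)}$ and split the bracket into $\q^+_{l_i}\brac{\nabla\ell(\w^+;\xi_l)-\nabla\ell(\w;\xi_l)}+\brac{\q^+_{l_i}-\q_{l_i}}\nabla\ell(\w;\xi_l)$. Applying $\norm{a+b}^2\le 2\norm{a}^2+2\norm{b}^2$, then $L$-smoothness on the first piece and $\norm{\nabla\ell(\w;\xi_l)}_{w,*}\le G$ (from~\cref{asp:smooth-Lipschitz}) on the second, produces the pointwise bound $2C_{l_i}^2(\q^+_{l_i})^2L^2\norm{\w^+-\w}_w^2+2C_{l_i}^2(\q^+_{l_i}-\q_{l_i})^2G^2$. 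Taking expectations and invoking the identity above turns each $C_{l_i}^2$ into $C_{l_i}$, leaving $2L^2\norm{\w^+-\w}_w^2\sum_{l_i}C_{l_i}(\q^+_{l_i})^2+2G^2\sum_{l_i}C_{l_i}(\q^+_{l_i}-\q_{l_i})^2$. Since $\q^+\in\Delta_m$ gives $(\q^+_{l_i})^2\le\q^+_{l_i}$, the first sum relaxes to $\sum_{l_i}C_{l_i}\q^+_{l_i}$, matching the stated bound.

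For the dual gradient, the decisive observation is that $\nabla_\q F(\z;\xi_k^s)=-C_{l_i}\ell(\w;\xi_l)\mathbf e_{l_i}$ is supported on the single coordinate $l_i$, so the increment is again one-hot and its $\norm{\cdot}_\infty$ simply reads off that one entry: $\norm{\nabla_\q F(\z^+;\xi_k^s)-\nabla_\q F(\z;\xi_k^s)}_\infty^2=C_{l_i}^2\sqbrac{\ell(\w^+;\xi_l)-\ell(\w;\xi_l)}^2$. Bounding the bracket by $G\norm{\w^+-\w}_w$ through $G$-Lipschitz continuity and taking expectation via the same identity yields $G^2\norm{\w^+-\w}_w^2\sum_{l_i}C_{l_i}$, as required.

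The argument is essentially bookkeeping; the single conceptual step is the probability-to-coefficient identity $\Pr[l\in\text{group }i]=1/C_i$, which collapses uniform and importance sampling into one computation, and the only mild subtlety is the $(\q^+_{l_i})^2\le\q^+_{l_i}$ relaxation that produces the clean linear-in-$\q^+$ form of the first bound. Unlike the full-gradient estimate of~\cref{lem:Lipschitz}, no Jensen step is needed here, because single-index sampling makes the stochastic gradient a single summand rather than a $\q$-weighted average over all $m$ groups.
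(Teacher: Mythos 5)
Your proof is correct and takes essentially the same route as the paper's: your normalization identity $\E\sqbrac{C_{l_i}^2\,g(l_i)}=\sum_{l_i=1}^m C_{l_i}\,g(l_i)$ is exactly the paper's tower-rule computation of $\E\sqbrac{C_i^2}$, and your splitting of the primal increment, the $\norm{a+b}^2\le 2\norm{a}^2+2\norm{b}^2$ step, the smoothness/Lipschitz bounds, and the one-hot reading of the $\norm{\cdot}_\infty$ norm all mirror the paper's chain. The only cosmetic differences are that you bound pointwise before taking expectations (the paper interleaves the two) and that you make the $(\q^+_{l_i})^2\le\q^+_{l_i}$ relaxation explicit where the paper applies it implicitly; neither changes the argument.
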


\begin{proof}
Using the tower rule, it's straightforward to verify that
\begin{equation}
    \E\sqbrac{C_i^2}=\sum_{l_i=1}^m\frac{C_i^2}{C_i}\sum_{j=1}^{n_{l_i}}\frac{1}{n_{l_i}}=\sum_{l_i=1}^m C_i
\end{equation}
holds for both sampling techniques.
    For the stochastic gradient w.r.t.~$\w$, we have
    \begin{equation}
    \begin{aligned}
        &\E\sqbrac{\norm{\nabla_\w F(\z^+;\xi_{k}^s)-\nabla_\w F(\z;\xi_{k}^s)}_{w,*}^2}\\=& \E\sqbrac{C_i^2\norm{ \q^+_{l_i}[\nabla\ell(\w^+;\xi_{l})-\nabla\ell(\w;\xi_{l})]+\brac{\q^+_{l_i}-\q_{l_i}}\nabla\ell(\w;\xi_{l})}_{w,*}^2}\\\le& 2\E\sqbrac{C_i^2\norm{ \q^+_{l_i}\sqbrac{\nabla\ell(\w^+;\xi_{l})-\nabla\ell(\w;\xi_{l})}}_{w,*}^2}+2\E\sqbrac{C_i^2\norm{ \brac{\q^+_{l_i}-\q_{l_i}}\nabla\ell(\w;\xi_{l})}_{w,*}^2}\\\le&2\sum_{l_i=1}^mC_i\sum_{j=1}^{n_{l_i}}\frac{1}{n_{l_i}}\norm{ \q^+_{l_i}\sqbrac{\nabla\ell(\w^+;\xi_{l})-\nabla\ell(\w;\xi_{l})}}_{w,*}^2+2\E\sqbrac{C_i^2\brac{\q^+_{l_i}-\q_{l_i}}^2\norm{ \nabla\ell(\w;\xi_{l})}_{w,*}^2}\\\le&2\sum_{l_i=1}^m\sum_{j=1}^{n_{l_i}}\frac{C_i}{n_{l_i}}\norm{ \sqbrac{\nabla\ell(\w^+;\xi_{l})-\nabla\ell(\w;\xi_{l})}}_{w,*}^2+2\sum_{l_i=1}^mC_i\brac{\q^+_{l_i}-\q_{l_i}}^2\sum_{j=1}^{n_{l_i}}\frac{1}{n_{l_i}}\norm{ \nabla\ell(\w;\xi_{l})}_{w,*}^2\\\le&2\sum_{l_i=1}^m\sum_{j=1}^{n_{l_i}}\frac{C_i\q^+_{l_i}}{n_{l_i}}L^2\norm{\w^+-\w}_{w}^2+2\sum_{l_i=1}^mC_i\brac{\q^+_{l_i}-\q_{l_i}}^2\sum_{j=1}^{n_{l_i}}\frac{1}{n_{l_i}}G^2\\\le&2L^2\norm{\w^+-\w}_{w}^2\sum_{l_i=1}^mC_i\q^+_{l_i}+2G^2\sum_{l_i=1}^mC_i\brac{\q^+_{l_i}-\q_{l_i}}^2.
    \end{aligned}      
    \end{equation}
    As for the stochastic gradient w.r.t.~$\q$, we have
    \begin{equation}
    \begin{aligned}
        \E\sqbrac{\norm{\nabla_\q F(\z^+;\xi_{k}^s)-\nabla_\q F(\z;\xi_{k}^s)}_{\infty}^2}\le& \E\sqbrac{\norm{C_i\sqbrac{\ell(\w^+;\xi_l)-\ell(\w;\xi_l)}\cdot \mathbf{e}_{l_i}}_{\infty}^2}\\\le& \E\sqbrac{C_i^2\sqbrac{\ell(\w^+;\xi_l)-\ell(\w;\xi_l)}^2}\\\le&\E\sqbrac{C_i^2}\cdot G^2\norm{\w^+-\w}_w^2.\label{eq:q-grad-estimate}
    \end{aligned}      
    \end{equation}
    Plug the result for $\E\sqbrac{C_i^2}$ into~\eqref{eq:q-grad-estimate} yields the desired estimate.
\end{proof}

\begin{remark}
    The subtle issue is closely related to the property of the infinity norm as shown in the second inequality of~\eqref{eq:q-grad-estimate}, where the aggregation of the infinity norm for $m$ one-hot vectors scales up by $m$ times. However, our \textit{group sampling}-based stochastic gradient construction can solve this issue by switching the order of the summation and infinity norm (check~\eqref{eq:iGw^+-w} to see our estimate), and thereby produce a better stochastic gradient with lower Lipschitz constant.\label{remark:2order-moment-bound}
\end{remark}

\end{document}